\newtheorem{definition}{Definition}
\newtheorem{theorem}{Theorem}
\newtheorem{lemma}{Lemma}
\newtheorem{example}{Example}
\newtheorem{remark}{Remark}
\newcommand{\game}{G}
\newcommand{\states}{S}
\newcommand{\act}{\mathcal{A}}
\newcommand{\trans}{\delta}
\newcommand{\obs}{\mathcal{O}}
\newcommand{\obsmap}{\gamma}
\newcommand{\distr}{\mathcal{D}}
\newcommand{\powset}{\mathcal{P}}
\newcommand{\reward}{\mathsf{r}}
\newcommand{\limavg}{\mathsf{LimAvg}}
\newcommand{\limavgone}{\limavg_{=1}}
\newcommand{\limavghalf}{\limavg_{>\frac{1}{2}}}
\newcommand{\almostm}{\mathsf{Almost}_{\mathcal{M}}}
\newcommand{\set}[1]{\{#1\}}
\newcommand{\straa}{\sigma}
\newcommand{\restr}{\!\!\upharpoonright}
\newcommand{\rec}{\mathsf{RecFun}}
\newcommand{\win}{\mathsf{AWFun}}
\newcommand{\actions}{\mathsf{Act}}
\newcommand{\supp}{\mathrm{Supp}}
\newcommand{\prb}{\mathbb{P}}
\newcommand{\good}{\mathsf{good}}
\newcommand{\bad}{\mathsf{bad}}
\newcommand{\mem}{\mathsf{Mem}^{*}}
\newcommand{\last}{\mathsf{Last}}
\newcommand{\belief}{\mathcal{B}}
\newcommand{\wh}{\widehat}
\newcommand{\av}{\Gamma}
\newcommand{\allow}{{\textsf{Allow}}}
\newcommand{\safe}{{\mathsf{Safe}}}
\newcommand{\reach}{{\mathsf{Reach}}}
\newcommand{\Cone}{\mathsf{Cone}}
\newcommand{\pre}{{\textsf{Pre}}}
\newcommand{\apre}{{\textsf{Apre}}}
\newcommand{\obscover}{{\textsf{ObsCover}}}
\newcommand{\Reach}{\mathsf{Reach}}
\newcommand{\obsset}{O}
\newcommand{\ov}{\overline}
\newcommand{\Rec}{\mathsf{Rec}}
\newcommand{\target}{\ov{T}}
\newcommand{\targetsafe}{\ov{F}}
\newcommand{\wb}{\overline}
\def \pseudo {collapsed}
\def\projected {collapsed }
\def\projection {collapsed }
\newcommand{\prst}{\mathsf{CoSt}}
\newcommand{\prgr}{\mathsf{CoGr}}
\newcommand{\cale}{\mathcal{E}}
\newcommand{\Exp}{\mathbb{E}}
\newcommand{\red}{\mathsf{Red}}
\newcommand{\automaton}{\mathsf{P}}
\newcommand{\wt}{\widetilde}
\newcommand{\w}{\mathsf{AW}}
\tikzstyle{Player1}=[circle, thick, minimum size=0.6cm, inner sep=0cm,draw=black]
\tikzstyle{State}=[circle, thick, minimum size=0.6cm, inner sep=0cm,draw=black]
\tikzstyle{Final}=[circle, accepting, thick, minimum size=0.6cm, inner sep=0cm,draw=black]
\tikzstyle{RState}=[circle, very thick, minimum size=0.8cm, inner sep=0cm,draw=red]
\title{POMDPs under Probabilistic Semantics \\(Full Version)}
\author{Krishnendu Chatterjee (IST Austria) \and Martin Chmel\'ik (IST Austria)}
\date{}
\begin{document}

\maketitle

\begin{abstract}
We consider partially observable Markov decision processes (POMDPs)
with limit-average payoff, where a reward value in the interval $[0,1]$ is 
associated to every transition, and the payoff of an infinite path is the 
long-run average of the rewards.
We consider two types of path constraints: 
(i)~quantitative constraint defines the set of paths where the payoff is 
at least a given threshold $\lambda_1 \in (0,1]$; and 
(ii)~qualitative constraint which is a special case of quantitative constraint 
with $\lambda_1=1$.
We consider the computation of the almost-sure winning set, where the 
controller needs to ensure that the path constraint is satisfied with 
probability~1. 
Our main results for qualitative path constraint are as follows:
(i)~the problem of deciding the existence of a finite-memory controller is 
EXPTIME-complete; and 
(ii)~the problem of deciding the existence of an infinite-memory controller is 
undecidable.
For quantitative path constraint we show that the problem of deciding the 
existence of a finite-memory controller is undecidable.
\end{abstract}

\section{Introduction}

\noindent{\bf Partially observable Markov decision processes (POMDPs).}
\emph{Markov decision processes (MDPs)} are standard models for 
probabilistic systems that exhibit both probabilistic 
and nondeterministic behavior~\cite{Howard}.
MDPs have been used to model and solve control problems for stochastic 
systems~\cite{FV97,Puterman}: 
nondeterminism represents the freedom of the controller to choose a 
control action, while the probabilistic component of the behavior describes the 
system response to control actions. 
In \emph{perfect-observation (or perfect-information) MDPs (PIMDPs)} the 
controller can observe the current state of the system to choose the next 
control actions, whereas in \emph{partially observable MDPs (POMDPs)} the state space is 
partitioned according to observations that the controller can observe, 
i.e., given the current state, the controller can only view the observation 
of the state (the partition the state belongs to), but not the precise 
state~\cite{PT87}.
POMDPs provide the appropriate model to study a wide variety of applications 
such as in computational biology~\cite{Bio-Book}, 
speech processing~\cite{Mohri97}, image processing~\cite{IM-Book}, 
robot planning~\cite{KGFP09,kaelbling1998planning}, reinforcement learning~\cite{LearningSurvey}, 
to name a few. 
POMDPs also subsume many other powerful computational models such as 
probabilistic finite automata (PFA)~\cite{Rabin63,PazBook} 
(since probabilistic finite automata (aka blind POMDPs) are a special case of 
POMDPs with a single observation).

\smallskip\noindent{\bf Limit-average payoff.} 
A \emph{payoff} function maps every infinite path (infinite sequence of 
state action pairs) of a POMDP to a real value.
The most well-studied payoff in the setting of POMDPs is the 
\emph{limit-average} payoff where every state action pair is 
assigned a real-valued reward in the interval $[0,1]$ and 
the payoff of an infinite path is the long-run average of the 
rewards on the path~\cite{FV97,Puterman}.
POMDPs with limit-average payoff provide the theoretical
framework to study many important problems of practical relevance,
including probabilistic planning and several stochastic optimization problems~\cite{kaelbling1998planning,cassandra1995acting,MHC03,Meuleau:1999:LFC:2073796.2073845,williams2007partially}.

\smallskip\noindent{\bf Expectation vs probabilistic semantics.}
Traditionally, MDPs with limit-average payoff have been studied with the 
\emph{expectation} semantics, where the goal of the controller is to maximize 
the expected limit-average payoff.
The expected payoff value can be $\frac{1}{2}$ when with 
probability~$\frac{1}{2}$ the payoff is~1, and with remaining probability the 
payoff is~0. 
In many applications of system analysis (such as robot planning and 
control) the relevant question is the probability measure of the paths that 
satisfy certain criteria, e.g., whether the probability measure of the paths 
such that the limit-average payoff is~1 (or the payoff is at least 
$\frac{1}{2}$) is at least a given threshold (e.g., see~\cite{andersson2006symbolic,KGFP09}).
We classify the path constraints for limit-average payoff as follows:
(1)~\emph{quantitative constraint} that defines the set of paths with 
limit-average payoff at least $\lambda_1$, for a threshold $\lambda_1 \in 
(0,1]$; and (2)~\emph{qualitative constraint} is the special case of 
quantitative constraint that defines the set of paths with limit-average 
payoff~1 (i.e., the special case with $\lambda_1=1$).
We refer to the problem where the controller must satisfy a path 
constraint with a probability threshold $\lambda_2 \in (0,1]$ as the 
\emph{probabilistic} semantics.
An important special case of probabilistic semantics is the \emph{almost-sure}
semantics, where the probability threshold is~1.
The almost-sure semantics is of great importance because there are many 
applications where the requirement is to know whether the correct behavior 
arises with probability~1.
For instance, when analyzing a randomized embedded scheduler, the relevant 
question is whether every thread progresses with probability~1.
Even in settings where it suffices to satisfy certain specifications with 
probability $\lambda_2<1$, the correct choice of $\lambda_2$ is a challenging 
problem, due to the simplifications introduced during modeling.
For example, in the analysis of randomized distributed algorithms it is 
quite common to require correctness with probability~1 
(e.g.,~\cite{PSL00,Sto02b}). 
Besides its importance in practical applications, almost-sure convergence,  
like convergence in expectation, is a fundamental concept in probability theory,
and provide stronger convergence guarantee than convergence in expectation~\cite{Durrett}.

\smallskip\noindent{\bf Previous results.}
There are several deep undecidability results established for the 
special case of probabilistic finite automata (PFA) (that immediately imply 
undecidability for the more general case of POMDPs).
The basic undecidability results are for PFA over finite words: 
The emptiness problem for PFA under probabilistic semantics is 
undecidable over finite words~\cite{Rabin63,PazBook,CL89}; and it was shown 
in~\cite{MHC03} that even the following approximation version is undecidable: 
for any fixed $0<\epsilon <\frac{1}{2}$, given a probabilistic finite automaton 
and the guarantee that either (a)~all words are accepted with probability at 
least $1-\epsilon$; or 
(ii)~all words are accepted with probability at most $\epsilon$;
decide whether it is case~(i) or case~(ii).
The almost-sure problem for probabilistic automata over finite words reduces to 
the non-emptiness question of universal automata over finite words and 
is PSPACE-complete.
However, another related decision question whether for every $\epsilon>0$
there is a word that is accepted with probability at least 
$1-\epsilon$ (called the value~1 problem) is undecidable for probabilistic 
automata over finite words~\cite{GO10}.
Also observe that all undecidability results for probabilistic automata over
finite words carry over to POMDPs where the controller is restricted to 
finite-memory strategies.

\smallskip\noindent{\bf Our contributions.}
Since under the general probabilistic semantics, the decision problems are
undecidable even for PFA, we consider POMDPs with limit-average payoff 
under the almost-sure semantics. 
We present a complete picture of decidability as well as optimal complexity.

\begin{enumerate}
\item \emph{(Almost-sure winning for qualitative constraint).}
We first consider limit-average payoff with qualitative constraint 
under almost-sure semantics.
We show that \emph{belief-based} strategies are not sufficient 
(where a belief-based strategy is based on the subset construction that 
remembers the possible set of current states): we show that there exist POMDPs 
with limit-average payoff with qualitative constraint where finite-memory 
almost-sure winning strategy exists but there exists no belief-based 
almost-sure winning strategy. 
Our counter-example shows that standard techniques based on subset construction 
(to construct an exponential size PIMDP) are not adequate to solve the 
problem.   
We then show one of our main result that given a POMDP with $|S|$ states and 
$|\act|$ actions, if there is a finite-memory almost-sure winning strategy to 
satisfy the limit-average payoff with qualitative constraint, then there is an 
almost-sure winning strategy that uses at most $2^{3\cdot |S| + |\act|}$ memory.
Our exponential memory upper bound is asymptotically optimal, as even for PFA 
over finite words, exponential memory is required for almost-sure winning
(follows from the fact that the shortest witness word for non-emptiness 
of universal finite automata is at least exponential).
We then show that the problem of deciding the existence of finite-memory 
almost-sure winning strategy for limit-average payoff with 
qualitative constraint is EXPTIME-complete for POMDPs.
In contrast to our result for finite-memory strategies, we establish that the 
problem of deciding the existence of infinite-memory almost-sure winning 
strategy for limit-average payoff with qualitative constraint is undecidable for POMDPs.

\item \emph{(Almost-sure winning with quantitative constraint).}
In contrast to our decidability result under finite-memory strategies 
for qualitative constraint, we show that the almost-sure winning problem for 
limit-average payoff with quantitative constraint is undecidable 
even for finite-memory strategies for POMDPs.

\end{enumerate}
In summary we establish the precise decidability frontier for POMDPs with 
limit-average payoff under probabilistic semantics (see Table~\ref{tab:almost_str}). 
For practical purposes, the most prominent question is the problem of
finite-memory strategies, and for finite-memory strategies we establish 
decidability with optimal EXPTIME-complete complexity for the important 
special case of qualitative constraint under almost-sure semantics.

\smallskip\noindent{\em Technical contributions.}
The key technical contribution for the decidability result is as follows.
Since belief-based strategies are not sufficient, standard subset 
construction techniques do not work.
For an arbitrary finite-memory strategy we construct a \emph{collapsed}
strategy that collapses memory states based on a graph construction given the 
strategy. 
The collapsed strategy at a collapsed memory state plays uniformly 
over actions that were played at all the corresponding memory states 
of the original strategy.
The main challenge is to show that the exponential size collapsed strategy, 
even though has less memory elements, does not destroy the structure 
of the recurrent classes of the original strategy.
For the computational complexity result, we show how to construct an
exponential size special class of POMDPs (which we call belief-observation
POMDPs where the belief is always the current observation) and
present polynomial time algorithms for the solution of the special 
belief-observation POMDPs of our construction.
For undecidability of almost-sure winning for qualitative constraint
under infinite-memory strategies we present a reduction from the 
the value~1 problem for PFA;
and for undecidability of almost-sure winning for quantitative 
constraint under finite-memory strategies we present a reduction
from the strict emptiness problem for PFA under probabilistic semantics.

\begin{table}[h]
\centering
\resizebox{12cm}{!}{
\begin{tabular}{|c|c|c|c|}
\toprule
& \multicolumn{2}{c|}{Almost-sure semantics} & Probabilistic semantics\\
\cmidrule{2-4}
& Fin. mem. & Inf. mem. & Fin./Inf. mem.\\
\midrule
PFA & PSPACE-c & PSPACE-c & Undec.\\
POMDP Qual. Constr.  & \textbf{EXPTIME-c} & \textbf{Undec.} & Undec.\\
POMDP Quan. Constr. & \textbf{Undec.} & \textbf{Undec.} & Undec.\\
\bottomrule

\end{tabular}
}
\caption{Computational complexity. New results are in bold fonts}
\label{tab:almost_str}
\end{table}

\newcommand{\C}{\mathcal{C}}

\section{Definitions}
In this section we present the definitions of POMDPs, 
strategies, objectives, and other basic definitions required 
throughout this work.
We follow the standard definitions of MDPs and POMDPs~\cite{Puterman,LittmanThesis}.

\smallskip\noindent{\bf Notations.}
Given a finite set $X$, we denote by $\powset(X)$ the set of subsets of $X$,
i.e., $\powset(X)$ is the power set of $X$.
A probability distribution $f$ on $X$ is a function $f:X \to [0,1]$ such 
that $\sum_{x\in X} f(x)=1$, and we denote by  $\distr(X)$ the set of 
all probability distributions on $X$. For $f \in \distr(X)$ we denote by $\supp(f)=\set{x\in X \mid f(x)>0}$
the support of $f$.

\begin{definition}[POMDP]
A \emph{Partially Observable Markov Decision Process (POMDP)} is a 
tuple $\game=(S,\act,\trans,\obs,\obsmap,s_0)$ where:
\begin{itemize}
 \item $S$ is a finite set of states;
 \item $\act$ is a finite alphabet of \emph{actions};
 \item $\trans:S\times\act \rightarrow \distr(S)$ is a 
 \emph{probabilistic transition function} that given a state $s$ and an
 action $a \in \act$ gives the probability distribution over the successor 
 states, i.e., $\trans(s,a)(s')$ denotes the transition probability from state
 $s$ to state $s'$ given action $a$; 
 \item $\obs$ is a finite set of \emph{observations}; 
 \item $\obsmap:S\rightarrow \obs$ is an \emph{observation function} that 
  maps every state to an observation; and 
 \item $s_0$ is the initial state. 
\end{itemize}
\end{definition}
\noindent
Given $s,s'\in S$ and $a\in\act$, we also write $\trans(s'|s,a)$ for 
$\trans(s,a)(s')$.
A state $s$ is \emph{absorbing} if for all actions $a$ we have 
$\trans(s,a)(s)=1$ (i.e., $s$ is never left from $s$).
For an observation $o$, 
we denote by $\obsmap^{-1}(o)=\set{s \in S \mid \obsmap(s)=o}$
the set of states with observation $o$.
For a set $U \subseteq S$ of states and $O \subseteq \obs$ of observations 
we denote 
$\obsmap(U)=\set{o \in \obs \mid \exists s \in U. \ \obsmap(s)=o}$ 
and $\obsmap^{-1}(O)= \bigcup_{o \in O} \obsmap^{-1}(o)$.

\begin{remark}
For technical convenience we assume that there is a unique initial 
state $s_0$ and we will also assume that the initial state has a unique 
observation, i.e., $|\obsmap^{-1}(\obsmap(s_0))|=1$.
In general there is an initial distribution $\alpha$ over initial states
that all have the same observation, i.e., $\supp(\alpha) \subseteq 
\obsmap^{-1}(o)$, for some $o \in \obs$.
However, this can be modeled easily by adding a new initial state 
$s_{\mathit{new}}$ with a unique observation such that in the first step gives 
the desired initial probability distribution $\alpha$, 
i.e., $\trans(s_{\mathit{new}},a)=\alpha$ for all actions $a \in \act$.
Hence for simplicity we assume there is a unique initial state $s_0$ with a
unique observation.
\end{remark}

\smallskip\noindent{\bf Plays, cones, and belief-updates.}
A \emph{play} (or a path) in a POMDP is an 
infinite sequence $(s_0,a_0,s_1,a_1,s_2,a_2,\ldots)$ of states and 
actions such that for all $i \geq 0$ we have $\trans(s_i,a_i)(s_{i+1})>0$.  
We write $\Omega$ for the set of all plays.
For a finite prefix $w \in (S\cdot A)^* \cdot S$ of a play, we denote by $\Cone(w)$ the 
set of plays with $w$ as the prefix (i.e., the cone or cylinder of the prefix $w$), 
and denote by $\last(w)$ the last state of $w$.
For a finite prefix $w=(s_0,a_0,s_1,a_1,\ldots,s_n)$ 
we denote by 
$\obsmap(w)=(\obsmap(s_0),a_0,\obsmap(s_1),a_1,\ldots,\obsmap(s_n))$ 
the observation and action sequence associated with $w$.
For a finite sequence $\rho=(o_0,a_0,o_1,a_1,\ldots,o_n)$ of observations and actions, the \emph{belief} $\belief(\rho)$ 
after the prefix $\rho$ is the set of states in which a finite prefix 
of a play can be after the sequence $\rho$ of observations and actions, 
i.e., $\belief(\rho)=\set{s_n=\last(w) \mid w=(s_0,a_0,s_1,a_1,\ldots,s_n), w \mbox{
is a prefix of a play, and for all } 0\leq i \leq n. \; \obsmap(s_i)=o_i}$. 
The belief-updates associated with finite-prefixes are as follows:
for prefixes $w$ and $w'=w \cdot a \cdot s$ the belief update is 
defined inductively as 
$\belief(\obsmap(w')) = 
\left(\bigcup_{s_1 \in \belief(\obsmap(w))} \supp(\trans(s_1,a)) \right)
\cap \obsmap^{-1}(\obsmap(s))$, 
i.e., the set
$\left(\bigcup_{s_1 \in \belief(\obsmap(w))} \supp(\trans(s_1,a)) \right)$
denotes the possible successors given the belief $\belief(\obsmap(w))$ and 
action $a$, and then the intersection with the set of states with the 
current observation $\gamma(s)$ gives the new belief set.

\smallskip\noindent{\bf Strategies.}
A \emph{strategy (or a policy)} is a recipe to extend prefixes of plays and 
is a function $\sigma: (S\cdot A)^* \cdot S \to \distr(A)$ that given a finite 
history (i.e., a finite prefix of a play) selects a probability distribution 
over the actions.
Since we consider POMDPs, strategies are \emph{observation-based}, i.e., 
for all histories $w=(s_0,a_0,s_1,a_1,\ldots,a_{n-1},s_n)$ and 
$w'=(s_0',a_0,s_1',a_1,\ldots,a_{n-1},s_n')$ such that for all 
$0\leq i \leq n$ we have $\obsmap(s_i)=\obsmap(s_i')$ (i.e., $\obsmap(w) = \obsmap(w')$), we must have 
$\sigma(w)=\sigma(w')$.
In other words, if the observation sequence is the same, then the strategy 
cannot distinguish between the prefixes and must play the same. 
We now present an equivalent definition of observation-based strategies  
such that the memory of the strategy is explicitly specified, and 
will be required to present finite-memory strategies.

\begin{definition}[Strategies with memory and finite-memory strategies]
A \emph{strategy} with memory is a tuple $\sigma=(\sigma_u,\sigma_n,M,m_0)$ 
where:
\begin{itemize}
 \item \emph{(Memory set).} $M$ is a denumerable set (finite or infinite) of memory elements (or memory states).
 \item \emph{(Action selection function).} The function $\sigma_n:M\rightarrow \distr(\act)$ is the 
	\emph{action selection function} that given the current memory 
	state gives the probability distribution over actions.
 \item \emph{(Memory update function).} The function $\sigma_u:M\times\obs\times\act\rightarrow \distr(M)$ 
 is the \emph{memory update function} that given the current memory state, 
 the current observation and action, updates the memory state probabilistically.
 \item \emph{(Initial memory).} The memory state $m_0\in M$ is the initial memory state.
\end{itemize}
A strategy is a \emph{finite-memory} strategy if the set $M$ of memory elements is finite.
A strategy is \emph{pure (or deterministic)} if the memory update function 
and the action selection function are deterministic, i.e., 
$\sigma_u: M \times \obs \times \act \to M$ and $\sigma_n: M \to \act$.
A strategy is \emph{memoryless (or stationary)} if it is independent of the 
history but depends only on the current observation, and can be represented
as a function $\sigma: \obs \to \distr(\act)$.  
\end{definition}

\smallskip\noindent{\bf Probability measure.}
Given a strategy $\sigma$, the unique probability measure obtained given $\sigma$ is denoted as $\mathbb{P}^{\sigma}(\cdot)$.
We first define the measure $\mu^\sigma(\cdot)$ on cones.
For $w=s_0$ we have $\mu^\sigma(\Cone(w))=1$, and 
for $w=s$ where $s\neq s_0$ we have  $\mu^\sigma(\Cone(w))=0$; and 
for $w' = w \cdot a\cdot s$ 
we have 
$\mu^\sigma(\Cone(w'))= \mu^\sigma(\Cone(w)) \cdot \sigma(w)(a) \cdot \trans(\last(w),a)(s)$. 
By Carathe\'odary's extension theorem, the function $\mu^\sigma(\cdot)$
can be uniquely extended to a probability measure $\prb^{\sigma}(\cdot)$
over Borel sets of infinite plays~\cite{Billingsley}.

\smallskip\noindent{\bf Objectives.}
An \emph{objective} in a POMDP $G$ is a measurable set $\varphi \subseteq \Omega$ 
of plays.
We first define \emph{limit-average payoff} (aka mean-payoff) function.
Given a POMDP we consider a reward function $\reward : \states \times \act \rightarrow [0,1]$
that maps every state action pair to a real-valued reward in the interval $[0,1]$. 
The $\limavg$ payoff function maps every play to a real-valued reward 
that is the long-run average of the rewards of the play.
Formally, given a play $\rho=(s_0,a_0,s_1,a_1,s_2,a_2,\ldots)$ we have
$$\limavg(\reward,\rho) = \liminf_{n \rightarrow \infty} \frac{1}{n} \cdot 
\sum_{i=0}^{n} \reward(s_i,a_i).$$
When the reward function $\reward$ is clear from the context, we drop it for 
simplicity.
For a reward function $\reward$, we consider two types of limit-average payoff
constraints.

\begin{enumerate}

\item \emph{Qualitative constraint.} 
The \emph{qualitative constraint} limit-average objective $\limavgone$ defines 
the set of paths such that the limit-average payoff is~1; i.e., 
$\limavgone= \set{\rho \mid \limavg(\rho)=1}$.

\item \emph{Quantitative constraints.}
Given a threshold $\lambda_1 \in (0,1)$, the \emph{quantitative constraint} 
limit-average objective $\limavg_{>\lambda_1}$ defines the set of paths such 
that the limit-average payoff is strictly greater than $\lambda_1$; i.e., 
$\limavg_{>\lambda_1}= \set{\rho \mid \limavg(\rho) > \lambda_1}$.

\end{enumerate}

\smallskip\noindent{\bf Probabilistic and almost-sure winning.}
Given a POMDP, an objective $\varphi$, and a class $\C$ of strategies, we say that:

\begin{itemize}
\item a strategy $\sigma \in \C$ is \emph{almost-sure winning} 
if $\prb^{\sigma}(\varphi) = 1$;



\item a strategy $\sigma \in \C$ is \emph{probabilistic winning}, for a 
threshold $\lambda_2 \in (0,1)$, if $\prb^{\sigma}(\varphi) \geq \lambda_2$.

\end{itemize}

\begin{theorem}[Results for PFA (probabilistic automata over finite words)~\cite{PazBook}]
The following assertions hold for the class $\C$ of all infinite-memory 
as well as finite-memory strategies:
(1)~the probabilistic winning problem is undecidable for PFA;
and
(2)~the almost-sure winning problem is PSPACE-complete for PFA.
\end{theorem}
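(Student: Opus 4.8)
The plan is to handle the two assertions separately, since they are of quite different character. Assertion~(2) is essentially \emph{qualitative}: whether a finite word is accepted with probability exactly~$1$ depends only on supports, and this reduces to a classical automata-theoretic problem. Assertion~(1), by contrast, requires encoding an undecidable problem into exact acceptance probabilities. (Both facts are classical, due to Paz; I sketch the arguments I would reconstruct.)

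For assertion~(2), the starting observation is that a finite word $w$ is accepted with probability~$1$ by a PFA iff \emph{every} state reachable with positive probability after reading $w$ is accepting --- the actual probability values are irrelevant. Hence one may pass to the deterministic \emph{belief automaton} $\wh{\game}$ with state space $\powset(S)$, initial state $\set{s_0}$, and transitions given by the belief-update rule; an almost-sure winning word exists iff some belief $T \subseteq F$ is reachable from $\set{s_0}$ in $\wh{\game}$. For membership in PSPACE I would observe that reachability in $\wh{\game}$, which has at most $2^{|S|}$ states, can be decided in nondeterministic polynomial space --- guess the word symbol by symbol, keeping only the current belief and a step counter bounded by $2^{|S|}$ --- and then invoke Savitch's theorem. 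For PSPACE-hardness I would reduce from the non-universality problem for NFAs, which is PSPACE-complete (being the complement of the PSPACE-complete universality problem): given an NFA $N$ with accepting set $G$, form a PFA on the same transition graph (made probabilistic, say uniformly over successors) but with accepting set $S \setminus G$; then a word is accepted with probability~$1$ iff no run of $N$ on it reaches $G$, i.e.\ iff the word lies outside $L(N)$, so an almost-sure winning word exists iff $L(N) \neq \Sigma^*$. Finally, since any witness word is finite it is realized by a finite-memory strategy of size linear in its length, so the finite- and infinite-memory versions of the problem coincide.

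For assertion~(1), I would reduce from an undecidable problem whose solutions are witnessed by equality of two strings assembled blockwise from a fixed pair list; Post's Correspondence Problem is convenient. Given a PCP instance with pairs $(u_1,v_1),\dots,(u_k,v_k)$ over $\set{0,1}$, build a PFA over the index alphabet $\set{1,\dots,k}$ that, on input $i_1 \cdots i_n$, splits with probability $\frac{1}{2}$ into a component accumulating in the probability of a designated state the dyadic value of $u_{i_1}\cdots u_{i_n}$ (updated blockwise by the standard affine rule, using a few auxiliary states), and with probability $\frac{1}{2}$ into an analogous component for $v_{i_1}\cdots v_{i_n}$; the two components are wired so that the overall acceptance probability meets the fixed threshold $\lambda_2$ exactly when the two accumulated values agree. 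Then some word achieves threshold $\lambda_2$ iff the two blockwise concatenations can be made equal, i.e.\ iff the PCP instance has a solution. As in~(2), witnesses are finite words, so undecidability already holds for finite-memory strategies.

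The main obstacle lies entirely in assertion~(1): one must engineer the value-accumulation gadget and the wiring so that reaching the threshold is \emph{exactly} equivalent to equality of the two concatenations, which forces care with the edge cases --- empty or unequal-length blocks, and aligning the comparison at $\lambda_2$ with the (non-strict versus strict) inequality in the definition of probabilistic winning --- typically handled by padding the encoded strings and adding an auxiliary length-check component. Assertion~(2) is routine once the support observation and the connection to NFA universality are in hand.
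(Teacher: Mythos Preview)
The paper does not prove this theorem: it is stated as a classical result cited from Paz's book, with no proof given. The only hint the paper offers is the remark in the introduction that ``the almost-sure problem for probabilistic automata over finite words reduces to the non-emptiness question of universal automata over finite words and is PSPACE-complete.'' So there is no paper argument to compare against; I can only assess your reconstruction on its own terms.

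For assertion~(2) your reconstruction is the standard one and is correct. The support observation and the on-the-fly belief computation give PSPACE membership, and your reduction from NFA non-universality is exactly the universal-automaton non-emptiness connection the paper alludes to (complementing the accepting set and asking that \emph{all} runs accept). One small technical point: make the NFA complete by adding a rejecting sink before turning it into a PFA, so that ``all runs avoid $G$'' is well-defined for every word.

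For assertion~(1) your PCP route is a classical one, and you correctly flag the real difficulty in your obstacles paragraph. Let me make that difficulty explicit, since your wiring description is too loose to succeed as stated: if the two halves simply compute the dyadic values $p_u$ and $p_v$ and you combine them as, say, $\tfrac12\bigl(p_u + (1-p_v)\bigr)$, then the acceptance probability equals $\tfrac12$ when $p_u=p_v$ but \emph{exceeds} $\tfrac12$ whenever $p_u>p_v$, so mere equality is not the right target for a $\geq \lambda_2$ (or $>\lambda_2$) threshold question. The standard fix is to symmetrize --- run a second pair of gadgets with the roles of $u$ and $v$ swapped and combine so that the acceptance probability has the form $\lambda_2 - c\,(p_u-p_v)^2$, making $\lambda_2$ a supremum attained only when $p_u=p_v$. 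Together with the length-check component you mention (to prevent, e.g., $0.1$ and $0.10$ from matching spuriously), this closes the argument. Your sketch has the right ingredients; it just needs this symmetrization step spelled out to make the threshold comparison go in the required direction.
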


Since PFA are a special case of POMDPs, the undecidability of the probabilistic 
winning problem for PFA implies the undecidability of the probabilistic 
winning problem for POMDPs with both qualitative and quantitative constraint 
limit-average objectives.
The almost-sure winning problem is PSPACE-complete for PFAs, and 
we study the complexity of the almost-sure winning problem for 
POMDPs with both qualitative and quantitative constraint limit-average objectives, 
under infinite-memory and finite-memory strategies.

\smallskip\noindent{\bf Basic properties of Markov Chains.}
Since our proofs will use results related to Markov chains, 
we start with some basic definitions and properties related to Markov chains.

\smallskip\noindent{\em Markov chains and recurrent classes.}
A Markov chain $\ov{\game}=(\ov{S},\ov{\trans})$ consists of a finite set $\ov{S}$ of 
states and a probabilistic transition function $\ov{\trans}:\ov{S} \rightarrow \distr(\ov{S})$.
Given the Markov chain, we consider the directed graph $(\ov{S},\ov{E})$ where $\ov{E}=\set{(\ov{s},\ov{s}') 
\mid \trans(\ov{s}' \mid \ov{s}) >0}$.
A \emph{recurrent class} $\ov{C} \subseteq \ov{S}$ of the Markov chain is a bottom 
strongly connected component (scc) in the graph  $(\ov{S},\ov{E})$ 
(a bottom scc is an scc with no edges out of the scc).
We denote by $\Rec(\ov{\game})$ the set of recurrent classes of the Markov chain,
i.e., $\Rec(\ov{\game})=\set{\ov{C} \mid \ov{C} \text{ is a recurrent class}}$.
Given a state $\ov{s}$ and a set $\ov{U}$ of states, we say that $\ov{U}$ is
reachable from $\ov{s}$ if there is a path from $\ov{s}$ to some state in 
$\ov{U}$ in the graph $(\ov{S},\ov{E})$. 
Given a state $\ov{s}$ of the Markov chain we denote by $\Rec(\ov{\game})(\ov{s}) \subseteq \Rec(\ov{\game})$ 
the subset of the recurrent classes reachable from $\ov{s}$ in $\ov{G}$.
A state is \emph{recurrent} if it belongs to a recurrent class.
The following standard properties of reachability and the recurrent classes 
will be used in our proofs:

\begin{itemize}
\item \emph{Property~1.}
(a)~For a set $\ov{T} \subseteq \ov{S}$, if for all states $\ov{s} \in \ov{S}$ 
there is a path to $\ov{T}$ (i.e., for all states there is a positive
probability to reach $\ov{T}$), then from all states the set $\ov{T}$
is reached with probability~1.
(b)~For all states $\ov{s}$, if the Markov chain starts at $\ov{s}$, then 
the set $\ov{{\mathsf C}}=\bigcup_{\ov{C} \in \Rec(\ov{\game})(\ov{s})} \ov{C}$ 
is reached with probability~1, 
i.e., the set of recurrent classes is reached with probability~1.
\item \emph{Property~2.}
For a recurrent class $\ov{C}$, for all states $\ov{s} \in \ov{C}$,
if the Markov chain starts at $\ov{s}$, then for all states
$\ov{t}\in \ov{C}$ the state $\ov{t}$ is visited infinitely 
often with probability~1, and is visited with positive average frequency 
(i.e., positive limit-average frequency) with probability~1.
\end{itemize}
The following lemma is an easy consequence of the above properties.

\begin{lemma}
\label{lem:rec_class}
Let $\ov{\game} = (\ov{S},\ov{\trans})$ be a Markov chain with a reward 
function $\reward: \ov{S}\rightarrow [0,1]$, and $\ov{s} \in \ov{S}$ a state 
of the Markov chain. 
The state $\ov{s}$ is almost-sure winning for the objective $\limavgone$ iff 
for all recurrent classes $\ov{C} \in \Rec(\ov{\game})(\ov{s})$ and for all 
states $\ov{s}_1 \in \ov{C}$ we have $\reward(\ov{s}_1) = 1$.
\end{lemma}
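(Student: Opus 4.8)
The plan is to prove both directions of the equivalence by appealing to the two standard properties of Markov chains stated just above (Property~1 and Property~2) together with the definition of $\limavg$ as a $\liminf$ of running averages.

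\textbf{($\Leftarrow$) direction.} Suppose every recurrent class $\ov{C} \in \Rec(\ov{\game})(\ov{s})$ reachable from $\ov{s}$ consists only of states with reward~$1$. First I would invoke Property~1(b): starting at $\ov{s}$, the set $\ov{{\mathsf C}} = \bigcup_{\ov{C} \in \Rec(\ov{\game})(\ov{s})} \ov{C}$ is reached with probability~$1$, and moreover (again by Property~1 applied inside the finite transient part) with probability~$1$ a play enters some single recurrent class $\ov{C}$ and stays there forever. Condition on this event. After some finite time $N$ the play only visits states of $\ov{C}$, all of which have reward~$1$. Hence for $n \geq N$ the running average $\frac{1}{n}\sum_{i=0}^{n}\reward(\ov{s}_i)$ is at least $\frac{n - N}{n}$, which tends to~$1$; since rewards are bounded by~$1$ the $\liminf$ is exactly~$1$. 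As this holds for almost every play, $\prb^{\sigma}(\limavgone)=1$, i.e.\ $\ov{s}$ is almost-sure winning. (No strategy quantifier appears here since a Markov chain has no choices; "almost-sure winning for $\limavgone$" just means the random play satisfies $\limavg = 1$ with probability~$1$.)

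\textbf{($\Rightarrow$) direction.} I would argue the contrapositive. Suppose there is a recurrent class $\ov{C} \in \Rec(\ov{\game})(\ov{s})$ and a state $\ov{s}_1 \in \ov{C}$ with $\reward(\ov{s}_1) = 1 - \eta$ for some $\eta > 0$. Since $\ov{C}$ is reachable from $\ov{s}$, there is positive probability of reaching $\ov{C}$, and by Property~1(a) applied to the sub-chain, with positive probability the play enters $\ov{C}$ (formally: the event of eventually being absorbed in $\ov{C}$ has probability $p > 0$). Condition on that event. By Property~2, once in $\ov{C}$ every state of $\ov{C}$, in particular $\ov{s}_1$, is visited with positive average frequency $f > 0$ with probability~$1$. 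Then $\limsup \frac{1}{n}\sum_{i=0}^n (1 - \reward(\ov{s}_i)) \geq f\eta > 0$, since each visit to $\ov{s}_1$ contributes $\eta$ to the numerator; equivalently $\liminf \frac{1}{n}\sum_{i=0}^n \reward(\ov{s}_i) \leq 1 - f\eta < 1$, so $\limavg < 1$ on this event. Thus $\prb^{\sigma}(\limavgone) \leq 1 - p < 1$, and $\ov{s}$ is not almost-sure winning.

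\textbf{Main obstacle.} There is no deep obstacle; the only care needed is in the ($\Rightarrow$) direction to turn "visited with positive average frequency" (Property~2) into a strict bound on the $\liminf$ of the complementary running average, and to make sure the conditioning on "absorbed in $\ov{C}$" is handled cleanly (the transient part is finite, so this event has well-defined positive probability and the post-entrance behaviour is governed by the recurrent-class properties). I would also remark that Property~2 gives the frequency statement almost surely \emph{on} the event of entering $\ov{C}$, which is exactly what is needed. One cosmetic point: a set $\ov{T}$ that is merely reachable from every state is reached with probability~$1$ only by Property~1(a); I would be explicit that in the transient (non-recurrent) part of the chain every state does reach $\ov{{\mathsf C}}$, so absorption into the union of recurrent classes is almost sure, and then use finiteness of $\Rec(\ov{\game})(\ov{s})$ to split into the individual classes.
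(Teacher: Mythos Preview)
Your proposal is correct and is precisely the argument the paper has in mind: the paper does not spell out a proof of this lemma at all, merely stating that it ``is an easy consequence of the above properties'' (Property~1 and Property~2 of Markov chains), and your write-up is exactly the natural expansion of that remark using those two properties. Your handling of both directions is sound; the $\limsup$/$\liminf$ step in the contrapositive is fine since $\limsup_n(1-x_n)=1-\liminf_n x_n$.
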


\smallskip\noindent{\em Markov chains under finite memory strategies.}
We now define Markov chains obtained by fixing  a finite-memory strategy in a POMDP
$\game$. 
A finite-memory strategy $\straa=(\straa_u,\straa_n,M,m_0)$ induces a Markov chain 
$(S\times M,\trans_{\straa})$, denoted $\game \restr_\straa$, 
with the probabilistic transition function $\trans_\straa: S\times M \rightarrow \distr(S\times M)$: 
given $s,s'\in S$ and $m,m'\in M$, the transition $\trans_\straa\big((s',m')\ |\ (s,m)\big)$ is the probability to go from state 
$(s,m)$ to state $(s',m')$ in one step under the strategy $\straa$. 
The probability of transition can be decomposed as follows:
\begin{itemize}
 \item First an action $a\in\act$ is sampled according to the distribution $\straa_n(m)$; 
 \item then the next state $s'$ is sampled according to the distribution $\trans(s,a)$; and
 \item finally the new memory $m'$ is sampled according to the distribution $\straa_u(m,\obsmap(s'),a)$
 (i.e., the new memory is sampled according $\straa_u$ given the old memory, new observation and the action).
\end{itemize}
More formally, we have:
\[
\trans_\sigma\big((s',m')\ |\
(s,m)\big)=\sum_{a\in\act}\straa_n(m)(a)\cdot\trans(s,a)(s')\cdot\straa_u(m,\obsmap(s'),a)(m').
\]
Given $s\in S$ and $m\in M$, we write $(G \restr_\straa)_{(s,m)}$ for the finite state 
Markov chain induced on $S\times M$ by the transition function 
$\trans_{\straa}$, given the initial state is $(s,m)$.

\newcommand{\Win}{\mathsf{Win}}

\section{Finite-memory strategies with Qualitative Constraint}
In this section we will establish the following three results 
for finite-memory strategies:
(i)~we show that in POMDPs with $\limavgone$ objectives 
belief-based strategies are not sufficient for almost-sure 
winning;
(ii)~we establish an exponential upper bound on the memory 
required by an almost-sure winning strategy for $\limavgone$ 
objectives;
and (iii)~we show that the decision problem is EXPTIME-complete.

\subsection{Belief is not sufficient}
We now show with an example that there exist POMDPs with $\limavgone$ 
objectives, where finite-memory randomized almost-sure winning strategies 
exist, but there exists no belief-based randomized almost-sure winning 
strategy (a belief-based strategy only uses memory that relies on the 
subset construction where the subset denotes the possible current states 
called belief).
In our example we will present the counter-example even for POMDPs with 
restricted reward function $\reward$ assigning only Boolean rewards $0$ and $1$ 
to the states (the reward does not depend on the action played but only on the 
states of the POMDP).

	\begin{figure}
		\begin{center}
		\resizebox{\linewidth}{!}{
		\begin{tikzpicture}[>=latex]
		\node[State,text=] (x) {$X$};
		\node[State,right of=x,xshift=50] (x') {$X'$};
		\node[State,below of=x,yshift=-30] (y) {$Y$}; 
		\node[State,below of=x',yshift=-30] (y') {$Y'$}; 
		\node[State,below of=y,yshift=-30] (z) {$Z$}; 
		\node[State,below of=y',yshift=-30] (z') {$Z'$}; 
		\draw[->]
		(x) edge[bend left] node[above] {a} (x')
		(x) edge[bend right] node[left] {b} (y)
		(x') edge[bend left] node[above] {b} (x)
		(x') edge[bend left] node[right] {a} (y')
		(z) edge[bend left] node[left] {a} (y)
		(z) edge[bend right] node[above] {b} (z')
		(z') edge[bend right] node[above] {a} (z)
		(z') edge[bend right] node[right] {b} (y')
		(y) edge[bend right,out=270] node[right] {a,b} (x)
		(y') edge[bend left,out=90] node[left] {a,b} (x')
		(y) edge[bend left,out=90] node[right] {a,b} (z)
		(y') edge[bend right,out=270] node[left] {a,b} (z')
		;
		\node[right of=y,xshift=-5] {$\frac{1}{2}$};
		\draw ($(y) +(18pt, 5pt)$) arc (20:-20:0.5cm);
		\draw ($(y') +(-18pt, -5pt)$) arc (-160:-200:0.5cm);
		\node[left of=y',xshift=5] {$\frac{1}{2}$};
		\node[fit=(x) (x') (y) (y') (z) (z'),inner sep=0.8cm] (mdp) {};
		\node[above of=mdp,yshift=65] {POMDP $G$};
		\node[State,right of=x',xshift=4em] (x1) {$X$};
		\node[State,right of=x1,xshift=50] (x1') {$X'$};
		\node[State,below of=x1,yshift=-30] (y1) {$Y$}; 
		\node[State,below of=x1',yshift=-30] (y1') {$Y'$}; 
		\node[State,below of=y1,yshift=-30] (z1) {$Z$}; 
		\node[State,below of=y1',yshift=-30] (z1') {$Z'$}; 
		\draw[->]
		(x1) edge[bend left] node[above] {} (x1')
		(x1') edge[bend left] node[right] {} (y1')
		(z1) edge[bend left] node[left] {} (y1)
		(z1') edge[bend right] node[above] {} (z1)
		(y1) edge[bend right,out=270] node[right] {} (x1)
		(y1') edge[bend left,out=90] node[left] {} (x1')
		(y1) edge[bend left,out=90] node[right] {} (z1)
		(y1') edge[bend right,out=270] node[left] {} (z1')
		;
		
		\draw ($(y1) +(18pt, 5pt)$) arc (20:-20:0.5cm);
		\draw ($(y1') +(-18pt, -5pt)$) arc (-160:-200:0.5cm);
		\node[right of=y1,xshift=-5] {$\frac{1}{2}$};
		\node[left of=y1',xshift=5] {$\frac{1}{2}$};
		\node[rectangle,fit= (x1) (x1') (y1) (y1') (z1) (z1'),inner sep=0.5cm,draw=black, dashed] (rec1) {};
		\node[above of=rec1,yshift=65] {MC $G\restr_{\sigma_1}$};
		\node[below of=rec1,yshift=-62,xshift=7] {\emph{Rec}: $\{X,X',Y,Y',Z,Z'\}$};

		\node[State,right of=x1',xshift=4em] (x2) {$X$};
		\node[State,right of=x2,xshift=50] (x2') {$X'$};
		\node[State,below of=x2,yshift=-30] (y2) {$Y$}; 
		\node[State,below of=x2',yshift=-30] (y2') {$Y'$}; 
		\node[State,below of=y2,yshift=-30] (z2) {$Z$}; 
		\node[State,below of=y2',yshift=-30] (z2') {$Z'$}; 
		\draw[->]
		(x2) edge[bend right] node[left] {} (y2)
		(x2') edge[bend left] node[above] {} (x2)
		(z2) edge[bend right] node[above] {} (z2')
		(z2') edge[bend right] node[right] {} (y2')
		(y2) edge[bend right,out=270] node[right] {} (x2)
		(y2') edge[bend left,out=90] node[left] {} (x2')
		(y2) edge[bend left,out=90] node[right] {} (z2)
		(y2') edge[bend right,out=270] node[left] {} (z2')
		;
		\draw ($(y2) +(18pt, 5pt)$) arc (20:-20:0.5cm);
		\draw ($(y2') +(-18pt, -5pt)$) arc (-160:-200:0.5cm);
		\node[right of=y2,xshift=-5] {$\frac{1}{2}$};
		\node[left of=y2',xshift=5] {$\frac{1}{2}$};
		\node[rectangle,fit= (x2) (x2') (y2) (y2') (z2) (z2') ,inner sep=0.5cm,draw=black, dashed] (rec2) {};
		\node[below of=rec2,yshift=-62,xshift=7] {\emph{Rec}: $\{X,X',Y,Y',Z,Z'\}$};
		\node[above of=rec2,yshift=65] {MC $G\restr_{\sigma_2}$};

		\end{tikzpicture}
		}
		\end{center}
		\caption{Belief is not sufficient}
		\label{fig:simple_example}
		\end{figure}
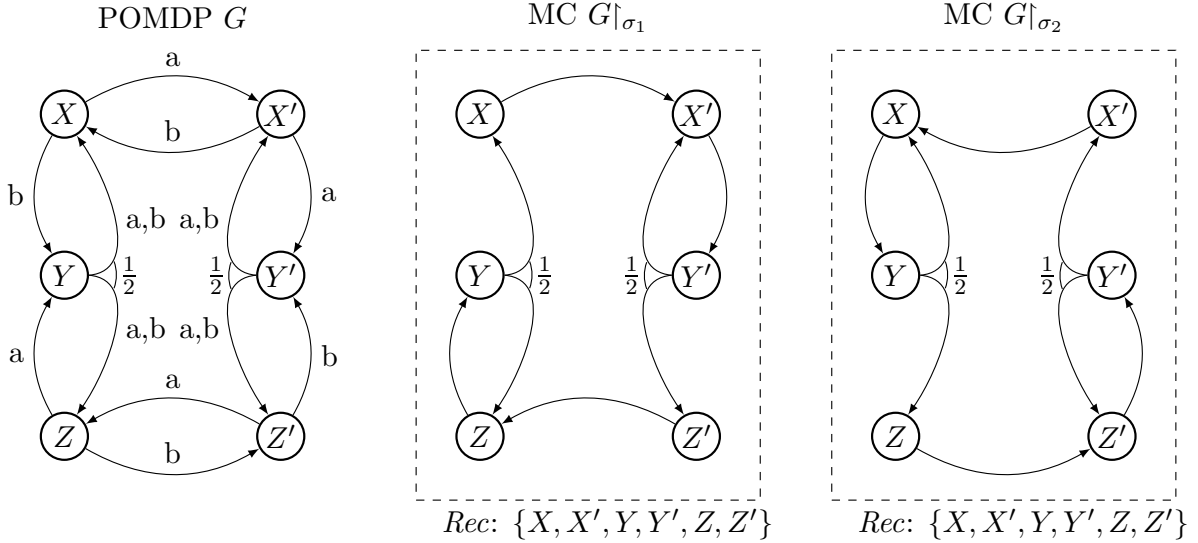

	\begin{example}\label{ex:belief-suff}
	We consider a POMDP with state space $\set{s_0,X,X',Y,Y',Z,Z'}$ and
	action set $\set{a,b}$, and let $U= \set{X,X',Y,Y',Z,Z'}$.
	From the initial state $s_0$ all the other states are reached with 
	uniform probability in one-step, i.e., for all $s' \in U=\set{X,X',Y,Y',Z,Z'}$ 
	we have $\trans(s_0,a)(s')=\trans(s_0,b)(s')=\frac{1}{6}$.
	The transitions from the other states are as follows 
	(shown in Figure~\ref{fig:simple_example}): 
	(i)~$\trans(X,a)(X')=1$ and $\trans(X,b)(Y)=1$;
	(ii)~$\trans(X',a)(Y')=1$ and $\trans(X',b)(X)=1$;
	(iii)~$\trans(Z,a)(Y)=1$ and $\trans(Z,b)(Z')=1$;
	(iv)~$\trans(Z',a)(Z)=1$ and $\trans(Z',b)(Y')=1$;
	(v)~$\trans(Y,a)(X)=\trans(Y,b)(X)=\trans(Y,a)(Z)=\trans(Y,b)(Z)=\frac{1}{2}$;
	and
	(vi)~$\trans(Y',a)(X')=\trans(Y',b)(X')=\trans(Y',a)(Z')=\trans(Y',b)(Z')=\frac{1}{2}$.
	All states in $U$ have the same observation.
	The reward function $\reward$ assigns the reward $1$ to states $X,X',Z,Z'$ and reward $0$ to states $Y$ and $Y'$.

	The belief initially after one-step is the set $U=\set{X,X',Y,Y',Z,Z'}$ since
	from $s_0$ all of them are reached with positive probability.
	The belief is always the set $U$ since every state has an input edge for 
	every action, i.e., if the current belief is $U$ (i.e., the set of states 
	that the POMDP is currently in with positive probability is $U$), then
	irrespective of whether $a$ or $b$ is chosen all states of $U$ are reached
	with positive probability and hence the belief set is again $U$.
	There are three belief-based strategies: (i)~$\sigma_1$ that plays always $a$;
	(ii)~$\sigma_2$ that plays always $b$; or (iii)~$\sigma_3$ that plays 
	both $a$ and $b$ with positive probability.
	The Markov chains $G\restr_{\sigma_1}$ (resp. $\sigma_2$ and $\sigma_3$) 
	are obtained by retaining the edges labeled by action $a$ (resp. action $b$, 
	and both actions $a$ and $b$).
	For all the three strategies, the Markov chains obtained contain the whole 
	set $U$ as the reachable recurrent class. It follows that in all the Markov 
	chains there exists a reachable recurrent class containing a state with reward 
	$0$, and by Lemma~\ref{lem:rec_class} none of the belief-based strategies 
	$\sigma_1, \sigma_2$ or $\sigma_3$ are almost-sure winning for the 
	$\limavgone$ objective.

	The Markov chains $G \restr_{\sigma_1}$ and $G \restr_{\sigma_2}$ are also 
	shown in Figure~\ref{fig:simple_example}, and the graph of 
	$G \restr_{\sigma_3}$ is the same as the POMDP $G$ (with edge labels removed).
	The strategy $\sigma_4$ that plays action $a$ and $b$ alternately gives 
	rise to the Markov chain $G \restr_{\sigma_4}$ (shown in 
	Figure~\ref{fig:simple_example1}) where the recurrent classes do not intersect
	with $Y$ or $Y'$, and is a finite-memory almost-sure winning strategy
	for the $\limavgone$ objective.
	\qed
	\end{example}

	\begin{figure}[h]
	\centering
	\resizebox{9cm}{!}{
	\begin{tikzpicture}[>=latex]
		\node[State,text=] (xa) {$Xa$};
		\node[State,right of=xa,xshift=50] (x'b) {$X'b$};
		\node[State,below of=xa,yshift=-30] (yb) {$Yb$};
		\node[State,below of=x'b,yshift=-30] (y'a) {$Y'a$};
		\node[State,below of=yb,yshift=-30] (za) {$Za$};
		\node[State,below of=y'a,yshift=-30] (z'b) {$Z'b$};
		\draw[->]
		(xa) edge[bend left] node [above] {} (x'b)
		(x'b) edge[bend left] node [above] {} (xa)
		(yb) edge[] node [above] {} (xa)
		(y'a) edge[] node [above] {} (x'b)
		(za) edge[bend left] node [above] {} (yb)
		(yb) edge[bend left] node [above] {} (za)
		(y'a) edge[bend left] node [above] {} (z'b)
		(z'b) edge[bend left] node [above] {} (y'a)
		;
		\node[rectangle,fit= (xa) (x'b) ,inner sep=0.3cm,draw=black, dashed] (rec1) {};
		\node[above of=rec1,xshift=27] {\emph{Rec:} $\{Xa,X'b\}$};
		
		\node[State,right of=x'b, xshift=50] (z'a) {$Z'a$};
		\node[State,right of=z'a,xshift=50] (zb) {$Zb$};
		\node[State,below of=z'a,yshift=-30] (y'b) {$Y'b$};
		\node[State,below of=zb,yshift=-30] (ya) {$Ya$};
		\node[State,below of=y'b,yshift=-30] (x'a) {$X'a$};
		\node[State,below of=ya,yshift=-30] (xb) {$Xb$};
		\draw[->]
		(z'a) edge[bend left] node [above] {} (zb)
		(zb) edge[bend left] node [above] {} (z'a)
		(y'b) edge[] node [above] {} (z'a)
		(ya) edge[] node [above] {} (zb)
		(y'b) edge[bend left] node [above] {} (x'a)
		(x'a) edge[bend left] node [above] {} (y'b)
		(ya) edge[bend left] node [above] {} (xb)
		(xb) edge[bend left] node [above] {} (ya)
		;
		\node[rectangle,fit= (z'a) (zb) ,inner sep=0.3cm,draw=black, dashed] (rec2) {};
		\node[above of=rec2,xshift=27] {\emph{Rec:} $\{Z'a,Zb\}$};

	\end{tikzpicture}
	}
	\caption{The Markov chain $G \restr_{\sigma_4}$.}
	\label{fig:simple_example1}
	\end{figure}

	In Example~\ref{ex:belief-suff} the $\limavgone$ objective is not observation-based, i.e., 
	there are states within the same observation with different rewards.
	In the following example we modify Example~\ref{ex:belief-suff} to show that 
	randomized belief-based strategies are not sufficient even if we consider observation-based 
	$\limavgone$ objectives, i.e, all states within the same observation are required to have the same reward.

	\begin{example}\label{ex:complex}
	We consider the POMDP shown in Figure~\ref{fig:complex_example}:
	the transition edges in the set $U=\set{X,X',Y,Y',Z,Z'}$ are exactly
	the same as in Figure~\ref{fig:simple_example}, and the transition 
	probabilities are always uniform over the support set. 
	We add a new state $B$ and from the state $Y$ and $Y'$ add positive 
	transition probabilities (probability~$\frac{1}{3}$) to the state
	$B$ for both actions $a$ and $b$.
	Recall that $Y$ and $Y'$ were having reward $0$ in Example~\ref{ex:belief-suff}.
	From state $B$ all states in $U$ are reached with positive probability for both actions
	$a$ and $b$.
	All states in $U$ have the same observation (denoted as $o_U$), 
	and the state $B$ has a new and different observation (denoted as $o_B$). 
	All the states in observation $o_U$ are assigned reward $1$, and the state $B$ in observation $o_B$ is assigned reward $0$.
	Note that the objective is an observation-based $\limavgone$ objective.
	Since we retain all edges as in Figure~\ref{fig:simple_example} and
	from $B$ all states in $U$ are reached with positive probability
	in one step, whenever the current observation is $o_U$, then the belief
	is the set $U$.
	As in Example~\ref{ex:belief-suff} there are three belief-based
	strategies ($\straa_1,\straa_2$ and $\straa_3$) in belief $U$, 
	and the Markov chains under $\straa_1$ and $\straa_2$ are shown
	in Figure~\ref{fig:complex_example}, and the Markov chain under
	$\straa_3$ has the same edges as the original POMDP.
	For all the belief-based strategies the recurrent class
	contains the state $B$, i.e., a state with reward $0$, and by Lemma~\ref{lem:rec_class} are not almost-sure winning strategies for the $\limavgone$ 
	objective. The strategy $\straa_4$ that alternates actions $a$ and $b$ 
	is a finite-memory almost-sure winning strategy  for the $\limavgone$ 
	objective and the Markov chain obtained given $\straa_4$ is shown in 
	Figure~\ref{fig:complex_example1}.
	\end{example}

\begin{figure}
\begin{center}
\resizebox{\linewidth}{!}{
\begin{tikzpicture}[>=latex]
	\node[State,text=] (x) {$X$};
	\node[State,right of=x,xshift=100] (x') {$X'$};
	\node[State,below of=x,yshift=-30] (y) {$Y$}; 
	\node[State,below of=x',yshift=-30] (y') {$Y'$}; 
	\node[State,below of=y,yshift=-30] (z) {$Z$}; 
	\node[State,below of=y',yshift=-30] (z') {$Z'$}; 
	\node[RState,right of=y,xshift=35] (b) {$B$};
	\draw[->]
	(x) edge[bend left=25] node[above] {a} (x')
	(x) edge[bend right] node[left] {b} (y)
	(x') edge[bend left=25] node[above] {b} (x)
	(x') edge[bend left] node[right] {a} (y')
	(z) edge[bend left] node[left] {a} (y)
	(z) edge[bend right=25] node[above] {b} (z')
	(z') edge[bend right=25] node[above] {a} (z)
	(z') edge[bend right] node[right] {b} (y')
	(y) edge[bend right,out=270]   (x)
	(y) edge[] node[above] {a,b} (b)
	(y') edge[] node[above] {a,b} (b)
	(y') edge[bend left,out=90] (x')
	(y) edge[bend left,out=90]  (z)
	(y') edge[bend right,out=270] (z')
	(b) edge[red, very thick, bend left=50] (y')
	(b) edge[red, very thick, bend right=50] (y)
	(b) edge[red, very thick, bend right] (x)
	(b) edge[red, very thick, bend left] (x')
	(b) edge[red, very thick, bend right=15] (z)
	(b) edge[red, very thick, bend left=15] (z')
	;
	\node[right of=y,xshift=-3,yshift=-9] {$\frac{1}{3}$};
	\draw ($(y) +(18pt, 5pt)$) arc (20:-20:0.5cm);
	\draw ($(y') +(-18pt, -5pt)$) arc (-160:-200:0.5cm);
	\node[left of=y',xshift=3,yshift=-9] {$\frac{1}{3}$};
	\node[fit=(x) (x') (y) (y') (z) (z'),inner sep=0.8cm] (mdp) {};
	\node[above of=mdp,yshift=65] {POMDP $G$};

	\node[State,right of =x',xshift=4em] (x1) {$X$};
	\node[State,right of=x1,xshift=100] (x1') {$X'$};
	\node[State,below of=x1,yshift=-30] (y1) {$Y$}; 
	\node[State,below of=x1',yshift=-30] (y1') {$Y'$}; 
	\node[State,below of=y1,yshift=-30] (z1) {$Z$}; 
	\node[State,below of=y1',yshift=-30] (z1') {$Z'$}; 
	\node[RState,right of=y1,xshift=35] (b1) {$B$};
	\draw[->]
	(x1) edge[bend left=25] node[above] {} (x1')
	(x1') edge[bend left] node[right] {} (y1')
	(z1) edge[bend left] node[left] {} (y1)
	(z1') edge[bend right=25] node[above] {} (z1)
	(y1) edge[bend right,out=270]   (x1)
	(y1') edge[bend left,out=90] (x1')
	(y1) edge[bend left,out=90]  (z1)
	(y1') edge[bend right,out=270] (z1')
	(b1) edge[red, very thick, bend left=50] (y1')
	(b1) edge[red, very thick, bend right=50] (y1)
	(b1) edge[red, very thick, bend right] (x1)
	(b1) edge[red, very thick, bend left] (x1')
	(b1) edge[red, very thick, bend right=15] (z1)
	(b1) edge[red, very thick, bend left=15] (z1')
	(y1) edge[] node[above] {} (b1)
	(y1') edge[] node[above] {} (b1)
	;
	\node[right of=y1,xshift=-3,yshift=-9] {$\frac{1}{3}$};
	\draw ($(y1) +(18pt, 5pt)$) arc (20:-20:0.5cm);
	\draw ($(y1') +(-18pt, -5pt)$) arc (-160:-200:0.5cm);
	\node[left of=y1',xshift=3,yshift=-9] {$\frac{1}{3}$};
	\node[rectangle,fit= (x1) (x1') (y1) (y1') (z1) (z1') ,inner sep=0.5cm,draw=black, dashed] (straa1) {};
	\node[below of=straa1,yshift=-62,xshift=7] {\emph{Rec}: $\{X,X',Y,Y',Z,Z',B\}$};
	\node[above of=straa1, yshift=65] {MC $G\restr_{\sigma_1}$};

	\node[State,right of =x1',xshift=4em] (x2) {$X$};
	\node[State,right of=x2,xshift=100] (x2') {$X'$};
	\node[State,below of=x2,yshift=-30] (y2) {$Y$}; 
	\node[State,below of=x2',yshift=-30] (y2') {$Y'$}; 
	\node[State,below of=y2,yshift=-30] (z2) {$Z$}; 
	\node[State,below of=y2',yshift=-30] (z2') {$Z'$}; 
	\node[RState,right of=y2,xshift=35] (b2) {$B$};
	\draw[->]
	(x2) edge[bend right] node[left] {} (y2)
	(x2') edge[bend left=25] node[above] {} (x2)
	(z2) edge[bend right=25] node[above] {} (z2')
	(z2') edge[bend right] node[right] {} (y2')
	(y2) edge[bend right,out=270]   (x2)
	(y2) edge[] node[above] {} (b2)
	(y2') edge[] node[above] {} (b2)
	(y2') edge[bend left,out=90] (x2')
	(y2) edge[bend left,out=90]  (z2)
	(y2') edge[bend right,out=270] (z2')
	(b2) edge[red, very thick, bend left=50] (y2')
	(b2) edge[red, very thick, bend right=50] (y2)
	(b2) edge[red, very thick, bend right] (x2)
	(b2) edge[red, very thick, bend left] (x2')
	(b2) edge[red, very thick, bend right=15] (z2)
	(b2) edge[red, very thick, bend left=15] (z2')

	;
	\node[right of=y2,xshift=-3,yshift=-9] {$\frac{1}{3}$};
	\draw ($(y2) +(18pt, 5pt)$) arc (20:-20:0.5cm);
	\draw ($(y2') +(-18pt, -5pt)$) arc (-160:-200:0.5cm);
	\node[left of=y2',xshift=3,yshift=-9] {$\frac{1}{3}$};
	\node[rectangle,fit= (x2) (x2') (y2) (y2') (z2) (z2') ,inner sep=0.5cm,draw=black, dashed] (straa2) {};
	\node[below of=straa2,yshift=-62,xshift=7] {\emph{Rec}: $\{X,X',Y,Y',Z,Z',B\}$};
	\node[above of=straa2, yshift=65] {MC $G\restr_{\sigma_2}$};
\end{tikzpicture}
}
\caption{Belief is not sufficient}
\label{fig:complex_example}
\end{center}
\end{figure}
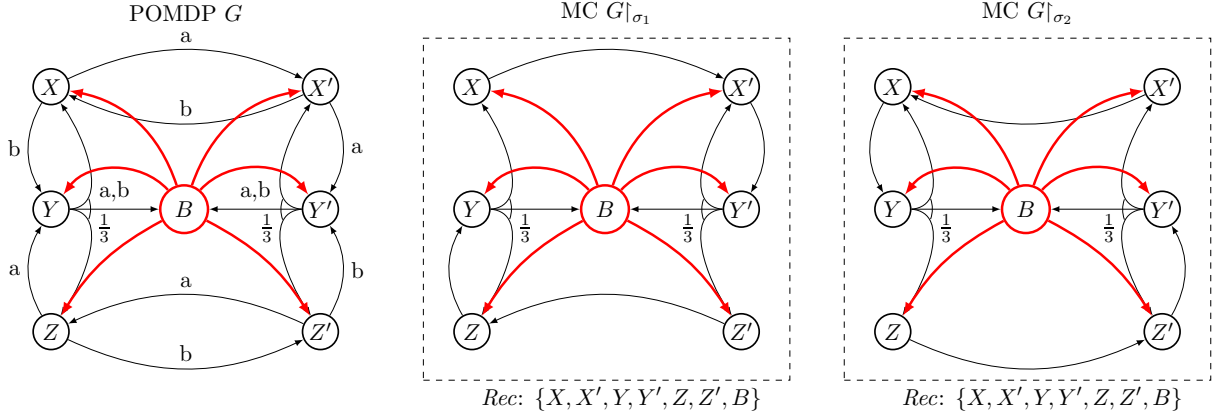

\begin{figure}[h]
\centering

\resizebox{13cm}{!}{
\begin{tikzpicture}[>=latex]

	\node[State,text=] (za) {$Za$};
	\node[State,right of=za,xshift=50] (yb) {$Yb$};
	\node[State,right of=yb,xshift=50] (xa) {$Xa$};
	\node[State,right of=xa,xshift=50] (x'b) {$X'b$};
	\node[State,right of=x'b,xshift=50] (y'a) {$Y'a$};
	\node[State,right of=y'a,xshift=50] (z'b) {$Z'b$};
	\draw[->]
	(xa) edge[bend left] node [above] {} (x'b)
	(x'b) edge[bend left] node [above] {} (xa)
	(yb) edge[] node [above] {} (xa)
	(y'a) edge[] node [above] {} (x'b)
	(za) edge[bend left] node [above] {} (yb)
	(yb) edge[bend left] node [above] {} (za)
	(y'a) edge[bend left] node [above] {} (z'b)
	(z'b) edge[bend left] node [above] {} (y'a)
	;
	\node[rectangle,fit= (xa) (x'b) ,inner sep=0.3cm,draw=black, dashed] (rec1) {};
	\node[above of=rec1,xshift=27] {\emph{Rec:} $\{Xa,X'b\}$};

	\node[State,below of=yb,yshift=-30] (ba) {$Ba$};
	\node[State,below of=y'a,yshift=-30] (bb) {$Bb$};
	\draw[->]
	(yb) edge[]  (ba)
	(y'a) edge[]  (bb)
	(ba) edge[bend left] (yb)
	(ba) edge[bend right=10] (x'b)
	(ba) edge[bend right=10] (z'b)
	(bb) edge[bend right] (y'a)
	(bb) edge[bend left=10] (xa)
	(bb) edge[bend left=10] (za)
	;
	\node[State,below of=za,yshift=-100] (x'a) {$X'a$};
	\node[State,right of=x'a,xshift=50] (y'b) {$Y'b$};
	\node[State,right of=y'b,xshift=50] (z'a) {$Z'a$};
	\node[State,right of=z'a,xshift=50] (zb) {$Zb$};
	\node[State,right of=zb,xshift=50] (ya) {$Ya$};
	\node[State,right of=ya,xshift=50] (xb) {$Xb$};
	\draw[->]
	(z'a) edge[bend left] node [above] {} (zb)
	(zb) edge[bend left] node [above] {} (z'a)
	(y'b) edge[] node [above] {} (z'a)
	(ya) edge[] node [above] {} (zb)
	(y'b) edge[bend left] node [above] {} (x'a)
	(x'a) edge[bend left] node [above] {} (y'b)
	(ya) edge[bend left] node [above] {} (xb)
	(xb) edge[bend left] node [above] {} (ya)
	;
	\node[rectangle,fit= (z'a) (zb) ,inner sep=0.3cm,draw=black, dashed] (rec2) {};
	\node[below of=rec2,xshift=27] {\emph{Rec:} $\{Z'a,Zb\}$};
	\draw[->]
	(y'b) edge[]  (ba)
	(ya) edge[]  (bb)
	(ba) edge[bend right] (y'b)
	(ba) edge[bend left=10] (xb)
	(ba) edge[bend left=10] (zb)
	(bb) edge[bend left] (ya)
	(bb) edge[bend right=10] (x'a)
	(bb) edge[bend right=10] (z'a)
	;
	\end{tikzpicture}
	}
	\caption{The Markov chain $G \restr_{\sigma_4}$.}
	\label{fig:complex_example1}
	\end{figure}
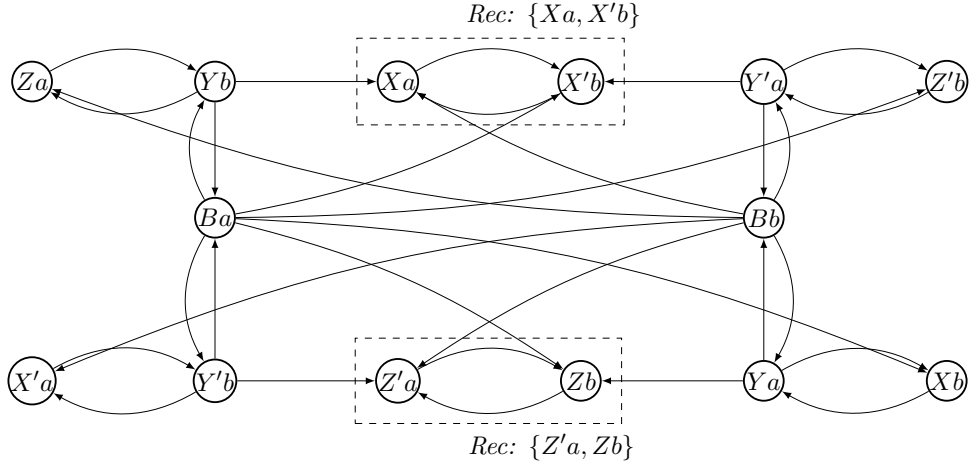

\subsection{Strategy complexity}
For the rest of the subsection we fix a finite-memory almost-sure winning strategy 
$\straa=(\straa_u,\straa_n,M,m_0)$ on the POMDP 
$\game=(\states,\act,\trans,\obs,\obsmap,s_0)$ with a reward function 
$\reward$ for the objective $\limavgone$.
Our goal is to construct an almost-sure winning strategy for the $\limavgone$ 
objective with memory size at most $\mem = 2^{3\cdot |S|} \cdot 2^{|\act|}$.
We start with a few definitions associated with strategy $\straa$.
For $m \in M$:

\begin{itemize}
\item  The function $\rec_\straa(m): \states \rightarrow \{0,1\}$ is 
such that $\rec_\straa(m)(s)$ is $1$ iff the state $(s,m)$ is recurrent in 
the Markov chain $\game \restr_\straa$ and $0$ otherwise. 
(The $\rec$ stands for recurrence function).

\item The function $\win_\straa(m): \states \rightarrow \{0,1\}$ is such 
that $\win_\straa(m)(s)$ is $1$ iff the state $(s,m)$ is almost-sure winning
for the $\limavgone$ objective in the Markov chain $\game \restr_\straa$ and 
$0$ otherwise.
(The $\win$ stands for almost-sure win function). 

\item We also consider $\actions_\straa(m) = \supp(\straa_n(m))$ that for
every memory element gives the support of the probability distribution over
actions played at $m$.
\end{itemize}

\begin{remark}\label{rem:win}
Let $(s',m')$ be a state reachable from $(s,m)$ in the Markov chain 
$\game\restr_\straa$. If the state $(s,m)$ is almost-sure winning
for the $\limavgone$ objective, then the state $(s',m')$ is also almost-sure 
winning  for the $\limavgone$ objective.
\end{remark}

\noindent{\bf Collapsed graph of $\straa$.}
Given the strategy $\straa$ we define the notion of a \emph{\projection graph} $\prgr(\straa) = (V,E)$, where the states of the graph are elements from the set $V = \{(Y,\win_\straa(m),\rec_\straa(m),\actions_\straa(m)) \mid Y \subseteq \states \text{ and } m \in M \}$ and the initial state is $(\{s_0\},\win_\straa(m_0), \rec_\straa(m_0), \actions_\straa(m_0))$. The edges in $E$ are labeled by actions in $\act$. There is an edge 
$$(Y,\win_\straa(m),\rec_\straa(m),\actions_\straa(m)) \stackrel{a}{\rightarrow} (Y',\win_\straa(m'),\rec_\straa(m'),\actions_\straa(m'))$$ 
in the \projection graph $\prgr(\straa)$ iff there exists an observation $o \in \obs$ such that 
\begin{enumerate}
\item the action $a \in \actions_\straa(m)$,
\item the set $Y'$ is non-empty and it is the belief update from $Y$, under action $a$ and the observation $o$, i.e., $Y' = \bigcup_{s\in Y}\supp(\trans(s,a)) \cap \obsmap^{-1}(o)$, and
\item $m' \in \supp (\straa_u(m,o,a))$.
\end{enumerate}
Note that the number of states in the graph is bounded by $|V| \leq \mem$.

In the following lemma we establish the connection of the functions 
$\rec_{\straa}(m)$ and $\win_{\straa}(m)$ with the edges of the 
\projection graph. 
Intuitively the lemma shows that when the function $\rec_{\straa}(m)$ 
(resp. $\win_{\straa}(m)$) is set to $1$ for a state $s$ of a vertex of the 
\projection graph, then for all successor vertices along the edges in the 
\projection graph, the function $\rec$ (resp. $\win$) is also set to $1$ for 
successors of state $s$.

\begin{lemma}
\label{lem:fix}
Let $(Y,W,R,A) \stackrel{a}{\rightarrow} (Y',W',R',A')$ be an edge in the 
\projection graph $\prgr(\straa) = (V,E)$. Then for all $s\in Y$ the 
following assertions hold:
\begin{enumerate}
 \item If $W(s) = 1$, then for all $s' \in \supp(\trans(s,a)) \cap Y'$ we have that $W'(s')=1$. 
 \item If $R(s) = 1$, then for all $s' \in \supp(\trans(s,a)) \cap Y'$ we have that $R'(s')=1$. 
\end{enumerate}
\end{lemma}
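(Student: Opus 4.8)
The plan is to unwind the definition of an edge of the \projection graph $\prgr(\straa)$ into a concrete one-step transition of positive probability in the induced Markov chain $\game\restr_\straa$, and then read off the two claims from the inheritance of almost-sure winning under reachability (Remark~\ref{rem:win}) and from the bottom-scc characterization of recurrent classes.

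Concretely, the first step is this. Since $(Y,W,R,A)\stackrel{a}{\rightarrow}(Y',W',R',A')$ is an edge of $\prgr(\straa)$, by definition there exist $m,m'\in M$ with $(\win_\straa(m),\rec_\straa(m),\actions_\straa(m))=(W,R,A)$ and $(\win_\straa(m'),\rec_\straa(m'),\actions_\straa(m'))=(W',R',A')$, and an observation $o\in\obs$, such that $a\in A=\supp(\straa_n(m))$, the set $Y'=\bigl(\bigcup_{t\in Y}\supp(\trans(t,a))\bigr)\cap\obsmap^{-1}(o)$ is non-empty, and $m'\in\supp(\straa_u(m,o,a))$. Now fix $s\in Y$ and $s'\in\supp(\trans(s,a))\cap Y'$; since $Y'\subseteq\obsmap^{-1}(o)$ we have $\obsmap(s')=o$. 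Expanding $\trans_\straa\bigl((s',m')\mid(s,m)\bigr)=\sum_{b\in\act}\straa_n(m)(b)\cdot\trans(s,b)(s')\cdot\straa_u(m,\obsmap(s'),b)(m')$, the summand for $b=a$ equals $\straa_n(m)(a)\cdot\trans(s,a)(s')\cdot\straa_u(m,o,a)(m')$, and all three factors are strictly positive: the first because $a\in\supp(\straa_n(m))$, the second because $s'\in\supp(\trans(s,a))$, and the third because $m'\in\supp(\straa_u(m,o,a))$. Hence $\trans_\straa\bigl((s',m')\mid(s,m)\bigr)>0$, i.e.\ $(s',m')$ is a one-step successor of $(s,m)$ in the graph of $\game\restr_\straa$.

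From here both parts are immediate. For~(1), if $W(s)=\win_\straa(m)(s)=1$ then $(s,m)$ is almost-sure winning for $\limavgone$ in $\game\restr_\straa$, so by Remark~\ref{rem:win} applied to the reachable state $(s',m')$ we get $W'(s')=\win_\straa(m')(s')=1$. For~(2), if $R(s)=\rec_\straa(m)(s)=1$ then $(s,m)$ belongs to a recurrent class, which is a bottom scc of the graph of $\game\restr_\straa$; a bottom scc has no outgoing edges, so its one-step successor $(s',m')$ lies in the same recurrent class, giving $R'(s')=\rec_\straa(m')(s')=1$. I do not expect a real obstacle here; the one point that deserves a word of care is that a single vertex of $\prgr(\straa)$ may be represented by several memory elements, so one must observe that the edge relation supplies one pair $(m,m')$ simultaneously witnessing $(W,R,A)$, $(W',R',A')$, and conditions~(1)--(3) at once --- which is exactly how the edge relation is defined, so no extra argument is needed.
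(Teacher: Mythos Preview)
Your proof is correct and follows essentially the same approach as the paper: unwind the edge definition of $\prgr(\straa)$ to obtain witnessing memories $m,m'$ and an observation $o$, deduce a one-step transition $(s,m)\to(s',m')$ in $\game\restr_\straa$, and then conclude via Remark~\ref{rem:win} for~(1) and the closure of recurrent classes under successors for~(2). Your argument is in fact slightly more explicit than the paper's---you spell out the positivity of $\trans_\straa((s',m')\mid(s,m))$ via the $b=a$ summand, and you flag the point that the edge relation supplies a single pair $(m,m')$ witnessing all the data at once---but there is no substantive difference.
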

\begin{proof} 
We present proof of both the items below.
\begin{enumerate}
\item 
Let $(Y,W,R,A) \stackrel{a}{\rightarrow} (Y',W',R',A')$	be an edge 
in the \projection graph $\prgr(\straa) = (V,E)$ and a state $s \in Y$ 
such that $W(s) = 1$. 
It follows that there exist memories $m,m' \in M$ and an observation 
$o \in \obs$ such that (i)~$W = \win_\straa(m)$; (ii)~$W' = \win_\straa(m')$; 
(iii)~$a\in \supp(\straa_n(m))$; (iv)~$m' \in \supp(\straa_u(m,o,a))$; and finally (v) $\obs(s') = o$.
From all the points above it follows that there exists an edge $(s,m) \rightarrow (s',m')$ 
in the Markov chain $\game\restr_\straa$. 
As the state $(s,m)$ is almost-sure winning (since $W(s)=1$) 
it follows by Remark~\ref{rem:win} that the state $(s',m')$ must also be
almost-sure winning and therefore $W'(s') = 1$.

\item As in the proof of the first part we have that 
$(s',m')$ is reachable from $(s,m)$ in the Markov chain $\game\restr_\straa$. 
As every state reachable from a recurrent state in a Markov chain is also recurrent we have $R'(s')=1$.
\end{enumerate}
The desired result follows.
\end{proof}

We now define the \projected strategy for $\straa$. Intuitively we collapse 
memory elements of the original strategy $\straa$ whenever they agree on all 
the $\rec$, $\win$, and $\actions$ functions. 
The \projected strategy plays uniformly all the actions from the set given by
$\actions$ in the collapsed state.

\smallskip\noindent{\bf Collapsed strategy.}
We now construct the \emph{\projected strategy} 
$\straa'=(\straa'_u,\straa'_n,M',m'_0)$ of $\straa$ based on the 
\projection graph $\prgr(\straa)=(V,E)$. 
We will refer to this construction by $\straa'= \prst(\straa)$.
\begin{itemize}

\item The memory set $M'$ are the vertices of the \projection graph 
$\prgr(\straa) = (V,E)$, i.e.,  $M' = V = \{(Y,\win_\straa(m),\rec_\straa(m),\actions_\straa(m)) \mid Y \subseteq \states \text{ and } m \in M\}$.

\item The initial memory is $m'_0 = (\{s_0\}, \win_\straa(m_0), \rec_\straa(m_0), \actions_\straa(m_0))$.

\item The next action function given a memory $(Y,W,R,A) \in M'$ is the uniform distribution over the set of actions $\{a \mid \exists (Y',W',R',A') \in M' \text{ and } (Y,W,R,A) \stackrel{a}{\rightarrow} (Y',W',R',A') \in E\}$, where $E$ are the edges of the \projection graph.

\item The memory update function $\straa'_u((Y,W,R,A),o,a)$ given a memory element $(Y,W,R,A) \in M'$, $a \in \act$, and $o \in \obs$ is the uniform distribution over the set of states $\{(Y',W',R',A') \mid (Y,W,R,A) \stackrel{a}{\rightarrow} (Y', W',R',A') \in E \text{ and } Y' \subseteq \obsmap^{-1}(o)\}$.

\end{itemize}

The following lemma intuitively shows that the \projected strategy can reach all the states the original strategy could reach.

\begin{lemma}
\label{lem:reachability}
Let $\straa' = \prst(\straa)$ be the \projected strategy, $s,s' \in \states$, 
and $m,m' \in M$.
If $(s',m')$ is reachable from $(s,m)$ in $\game\restr_{\straa}$, then for all
beliefs $Y \subseteq \states$ with $s \in Y$ there exists a belief 
$Y' \subseteq \states$ with $s' \in Y'$ such that the state 
$(s',Y',\win_\straa(m'),\rec_\straa(m'), \actions_\straa(m'))$ is reachable 
from $(s,Y,\win_\straa(m),\rec_\straa(m), \actions_\straa(m))$ in $\game\restr_{\straa'}$
\end{lemma}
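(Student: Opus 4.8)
The plan is to proceed by induction on the length of the path from $(s,m)$ to $(s',m')$ in $\game\restr_{\straa}$. For the base case the path has length zero, so $(s',m') = (s,m)$; then for any belief $Y \ni s$ we simply take $Y' = Y$ and the target memory vertex coincides with the source vertex, so reachability in zero steps holds trivially. For the inductive step, suppose $(s',m')$ is reachable from $(s,m)$ via a path whose last transition is $(s'',m'') \rightarrow (s',m')$ in the Markov chain $\game\restr_{\straa}$, where $(s'',m'')$ is reachable from $(s,m)$ by a shorter path. By the induction hypothesis, for the given belief $Y \ni s$ there is a belief $Y'' \ni s''$ such that the vertex $v'' = (Y'', \win_\straa(m''),\rec_\straa(m''),\actions_\straa(m''))$ is reachable from $v = (Y,\win_\straa(m),\rec_\straa(m),\actions_\straa(m))$ in $\game\restr_{\straa'}$.

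It then remains to extend by one step. Since $(s'',m'') \rightarrow (s',m')$ is an edge of $\game\restr_{\straa}$, by the decomposition of $\trans_{\straa}$ there is an action $a$ with $a \in \supp(\straa_n(m'')) = \actions_\straa(m'')$, with $s' \in \supp(\trans(s'',a))$, and with $m' \in \supp(\straa_u(m'',\obsmap(s'),a))$. Put $o = \obsmap(s')$ and define the belief update $Y' = \big(\bigcup_{t \in Y''}\supp(\trans(t,a))\big) \cap \obsmap^{-1}(o)$. Since $s'' \in Y''$ and $s' \in \supp(\trans(s'',a))$ and $\obsmap(s') = o$, we have $s' \in Y'$, and in particular $Y'$ is non-empty. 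I will then check that these three facts are exactly conditions (1)--(3) in the definition of an edge of the \projection graph, so that
$$v'' \stackrel{a}{\rightarrow} v' := (Y', \win_\straa(m'),\rec_\straa(m'),\actions_\straa(m')) \in E.$$
Consequently $a$ lies in the support of $\straa'_n(v'')$ (it labels an outgoing edge of $v''$), and since $Y' \subseteq \obsmap^{-1}(o)$ the vertex $v'$ lies in the support of $\straa'_u(v'', o, a)$. Hence $v''$ has a transition to $v'$ in $\game\restr_{\straa'}$, and concatenating with the path from $v$ to $v''$ gives a path from $v$ to $v'$ in $\game\restr_{\straa'}$ with $s' \in Y'$, completing the induction.

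The main subtlety, rather than an obstacle, is bookkeeping the memory components: one must be careful that the belief component $Y'$ chosen for the collapsed strategy need not be the ``belief along the actual run'' but merely \emph{some} belief containing $s'$ obtained by the update rule from $Y''$, which is why the statement quantifies existentially over $Y'$; and one must confirm that the non-emptiness requirement in the \projection graph edge definition is automatically met because $s'$ itself witnesses it. No use of Lemma~\ref{lem:fix} is needed here — that lemma is about the $\win$ and $\rec$ components being preserved along edges, whereas this lemma only tracks which $(s', \text{memory})$ pairs remain reachable. The argument is essentially a simulation/unfolding argument: every Markov-chain transition of $\straa$ is mirrored, possibly after choosing an appropriate observation and belief, by a Markov-chain transition of the collapsed strategy $\straa'$.
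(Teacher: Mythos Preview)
Your proof is correct and follows essentially the same approach as the paper: both establish the one-step case by unpacking the definition of a transition in $\game\restr_\straa$, verifying it yields an edge in the \projection graph $\prgr(\straa)$, and hence a transition in $\game\restr_{\straa'}$, then extend along a path by induction. One small notational slip: your $v$ and $v''$ are memory elements in $M'$, so the states in $\game\restr_{\straa'}$ whose reachability you are tracking should be written $(s,v)$, $(s'',v'')$, $(s',v')$ rather than just $v$, $v''$, $v'$; the argument is otherwise complete.
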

\begin{proof}
We will start the proof with one step reachability first. Assume there is an edge $(s,m) \rightarrow (s',m')$ in the Markov chain $\game\restr_\straa$ and the action labeling the transition in the POMDP is $a$, i.e., (i)~$s' \in \supp(\trans(s,a))$; (ii)~$a \in \supp(\straa_n(m))$; and (iii) $m' \in \supp(\straa_u(m,\obsmap(s'),a))$.
Let $(s,Y,\win_\straa(m),\rec_\straa(m), \actions_\straa(m))$ be a state in $\game\restr_{\straa'}$.
It follows that $a \in \actions_\straa(m)$ and there exists $Y' \subseteq \states$ and an edge 
$(Y,\win_\straa(m),\rec_\straa(m), \actions_\straa(m)) \stackrel{a}{\rightarrow} (Y',\win_\straa(m'),\rec_\straa(m'), \actions_\straa(m'))$ 
in the \projection graph $\prgr(\straa)$. Therefore, action $a$ is played with positive probability by the strategy $\straa'$ and with positive probability the memory is updated to $(Y', \win_\straa(m'),\rec_\straa(m'), \actions_\straa(m'))$. Therefore there is an edge $(s,Y,\win_\straa(m),\rec_\straa(m), \actions_\straa(m)) \rightarrow (s',Y',\win_\straa(m'),\rec_\straa(m'), \actions_\straa(m'))$ in the Markov chain $\game\restr_{\straa'}$.

We finish the proof by extending the one-step reachability to general reachability. If $(s',m')$ is reachable from $(s,m)$ in $\game\restr_\straa$ then there exists a finite path. Applying the one step reachability for every transition in the path gives us the desired result.
	\end{proof}

\smallskip\noindent\textbf{Random variable notation.}
For all $n \geq 0$ we write $X_n,Y_n,W_n,R_n,A_n,L_n$ for the random variables that correspond to the projection of the $n^{th}$ state of the Markov chain $\game\restr_{\straa'}$ on the $\states$ component, the belief $\powset(\states)$ component, the $\win_\straa$ component, the $\rec_\straa$ component, the $\actions_\straa$ component, and the $n^{th}$ action, respectively.

\smallskip\noindent\textbf{Run of the Markov chain $\game\restr_{\straa'}$.} 
A \emph{run} on the Markov chain $\game \restr_{\straa'}$ is an infinite sequence 
\[
(X_0,Y_0,W_0,R_0,A_0)\stackrel{L_0}{\rightarrow}(X_1,Y_1,W_1,R_1,A_1)\stackrel{L_1}{\rightarrow} \cdots 
\]
such that each finite prefix of the run is generated with positive probability on the Markov chain, 
i.e., for all $i \geq 0$, we have  
(i)~$L_i \in \supp(\straa'_n(Y_i,W_i,R_i,A_i))$; 
(ii)~$X_{i+1} \in \supp(\trans(X_i,L_i))$; and
(iii)~$(Y_{i+1},W_{i+1},R_{i+1},A_{i+1}) \in 
\supp(\straa'_u((Y_i,W_i,R_i,A_i),\obsmap(X_{i+1}),L_i))$. 
In the following lemma we establish important properties of the Markov chain 
$\game\restr_{\straa'}$ that are essential for our proof.

\begin{lemma}
\label{lem:prop}
Let $(X_0,Y_0,W_0,R_0,A_0) \stackrel{L_0}{\rightarrow} (X_1,Y_1,W_1,R_1,A_1) \stackrel{L_1}{\rightarrow} \cdots$ 
be a run of the Markov chain $\game\restr_{\straa'}$, then the following assertions hold for all $i \geq 0$:
\begin{enumerate}
\item $X_{i+1} \in \supp(\trans(X_i,L_i)) \cap Y_{i+1}$;
\item $(Y_i,W_i,R_i,A_i)\stackrel{L_i}{\rightarrow}(Y_{i+1},W_{i+1},R_{i+1},A_{i+1})$ is an edge in the \projection graph $\prgr(\straa)$;
\item if $W_i(X_i) = 1$, then $W_{i+1}(X_{i+1})=1$;
\item if $R_i(X_i) = 1$, then $R_{i+1}(X_{i+1})=1$; and
\item if $W_i(X_i) = 1$ and $R_i(X_i)=1$, then $\reward(X_i,L_i) = 1$.
\end{enumerate}
\end{lemma}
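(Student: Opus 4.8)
The plan is to prove the five assertions by a single induction on $i$, carrying along the auxiliary invariant $X_i \in Y_i$; the base case is immediate since $X_0 = s_0$ and $Y_0 = \{s_0\}$. Assertions~1 and~2 are obtained by unwinding the definition of the collapsed strategy $\straa' = \prst(\straa)$ on the run. From the run conditions, $L_i \in \supp(\straa'_n(Y_i,W_i,R_i,A_i))$ forces $L_i$ to label some outgoing edge of the vertex $(Y_i,W_i,R_i,A_i)$ of $\prgr(\straa)$, and $(Y_{i+1},W_{i+1},R_{i+1},A_{i+1}) \in \supp(\straa'_u((Y_i,W_i,R_i,A_i),\obsmap(X_{i+1}),L_i))$ forces $(Y_i,W_i,R_i,A_i)\stackrel{L_i}{\rightarrow}(Y_{i+1},W_{i+1},R_{i+1},A_{i+1})$ to be an edge of $\prgr(\straa)$ with $Y_{i+1} \subseteq \obsmap^{-1}(\obsmap(X_{i+1}))$; this is exactly assertion~2. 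For assertion~1, the defining equation of that edge gives $Y_{i+1} = \bigcup_{s \in Y_i}\supp(\trans(s,L_i)) \cap \obsmap^{-1}(o)$ where the witnessing observation $o$ must equal $\obsmap(X_{i+1})$ (since $Y_{i+1}$ is a nonempty subset of $\obsmap^{-1}(\obsmap(X_{i+1}))$); combining the inductive hypothesis $X_i \in Y_i$ with the run condition $X_{i+1} \in \supp(\trans(X_i,L_i))$ then yields $X_{i+1} \in \supp(\trans(X_i,L_i)) \cap Y_{i+1}$, which is assertion~1 and at the same time re-establishes the invariant for $i+1$.

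Assertions~3 and~4 follow immediately by applying the two parts of Lemma~\ref{lem:fix} to the edge $(Y_i,W_i,R_i,A_i)\stackrel{L_i}{\rightarrow}(Y_{i+1},W_{i+1},R_{i+1},A_{i+1})$ with the state $s = X_i \in Y_i$ and its successor $s' = X_{i+1} \in \supp(\trans(X_i,L_i)) \cap Y_{i+1}$ supplied by assertion~1.

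For assertion~5 I would first pick a memory element $m \in M$ realizing the vertex $(Y_i,W_i,R_i,A_i)$ --- e.g.\ the witness $m$ of the edge from assertion~2 --- so that $W_i = \win_\straa(m)$, $R_i = \rec_\straa(m)$, $A_i = \actions_\straa(m)$, and in particular $L_i \in A_i = \supp(\straa_n(m))$ because $L_i$ labels an edge out of that vertex. The hypotheses $W_i(X_i)=1$ and $R_i(X_i)=1$ then say precisely that $(X_i,m)$ is recurrent and almost-sure winning for $\limavgone$ in $\game\restr_\straa$. Being recurrent, $(X_i,m)$ lies in a recurrent class $\ov{C}$ of $\game\restr_\straa$; by Property~2, starting from $(X_i,m)$ the state $(X_i,m)$ itself is visited with positive limit-average frequency with probability~1, and at each such visit action $L_i$ is played with the fixed positive probability $\straa_n(m)(L_i)$, so by the strong law of large numbers the state-action pair $(X_i,L_i)$ is taken with positive limit-average frequency with probability~1. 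If $\reward(X_i,L_i)<1$ this would pull $\limavg$ strictly below~1 with positive probability, contradicting that $(X_i,m)$ is almost-sure winning for $\limavgone$; hence $\reward(X_i,L_i)=1$. (This is the state-action-reward analogue of Lemma~\ref{lem:rec_class}, and one could alternatively invoke that lemma directly after the standard reduction of action rewards to state rewards.)

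The step that I expect to require the most care is this last argument for assertion~5 --- namely the passage from ``$(X_i,m)$ is recurrent, hence visited with positive limit-average frequency, and $L_i$ is played there with a fixed positive probability'' to ``$(X_i,L_i)$ is taken with positive limit-average frequency with probability~1'', since that is the one place where genuine probabilistic reasoning (Property~2 plus a law-of-large-numbers argument along the recurrent class) is needed. The remaining four assertions are a mechanical unfolding of the definitions of $\straa'$ and $\prgr(\straa)$ together with Lemma~\ref{lem:fix}.
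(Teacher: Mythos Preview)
Your proposal is correct and follows essentially the same approach as the paper: assertions~1--2 by unwinding the definition of $\straa'$, assertions~3--4 by Lemma~\ref{lem:fix}, and assertion~5 by picking a witnessing memory $m$ and arguing that $(X_i,m)$ lies in an almost-sure winning recurrent class with $L_i$ played there with positive probability. Your treatment is in fact more careful than the paper's in two places: you make the invariant $X_i \in Y_i$ explicit (the paper leaves it implicit in ``follows directly from the definition''), and for assertion~5 you spell out the positive-frequency argument rather than citing Lemma~\ref{lem:rec_class} directly, which is appropriate since that lemma is stated only for state rewards.
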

\begin{proof}
We prove all the points below:
\begin{enumerate}
\item The first point follows directly from the definition of the Markov chain and the \projected strategy $\straa'$.

\item The second point follows from the definition of the \projected strategy $\straa'$.

\item The third point follows from the first two points of this lemma and the first point of Lemma~\ref{lem:fix}.

\item The fourth point follows from the first two points of this lemma and the second point of Lemma~\ref{lem:fix}.

\item For the fifth point consider that $W_i(X_i) = 1$ and $R_i(X_i)=1$. 
Then there exists a memory $m \in M$ such that (i)~$\win_\straa(m) = W_i$, and 
(ii)~$\rec_{\straa}(m)=R_i$. 
Moreover, the state $(X_i,m)$ is a recurrent (since $R_i(X_i)=1$) and 
almost-sure winning state (since $W_i(X_i)=1$) in the Markov chain 
$\game\restr_\straa$. 
As $L_i \in \actions_\straa(m)$ it follows that $L_i \in \supp(\straa_n(m))$, 
i.e., the action $L_i$ is played with positive probability in state $X_i$ given
memory $m$, and $(X_i,m)$ is in an almost-sure winning recurrent class. 
By Lemma~\ref{lem:rec_class} it follows that the reward $\reward(X_i,L_i)$ must 
be $1$.
\end{enumerate}
The desired result follows.
\end{proof}

We now introduce the final notion of a \pseudo-recurrent state that is 
required to complete the proof. 
A state $(X,Y,W,R,A)$ of the Markov chain $\game\restr_{\straa'}$ is 
\pseudo-recurrent, if for all memory elements $m \in M$ that were merged to 
the memory element $(Y,W,R,A)$, the state $(X,m)$ of the Markov chain 
$\game\restr_{\straa}$ is recurrent. 
It will turn out that every recurrent state of the Markov chain 
$\game\restr_{\straa}$ is also \pseudo-recurrent.

\begin{definition}
A state $(X,Y,W,R,A)$ of the Markov chain $\game\restr_{\straa'}$ is called \pseudo-recurrent iff $R(X)=1$.
\end{definition}
Note that due to point~4 of Lemma~\ref{lem:prop} all the states reachable from a \pseudo-recurrent state are also \pseudo-recurrent.
In the following lemma we show that the set of \pseudo-recurrent states is reached with probability $1$.

\begin{lemma}
\label{lem:pseudo_reach}
With probability 1 a run of the Markov chain $\game\restr_{\straa'}$ reaches a \pseudo-recurrent state.
\end{lemma}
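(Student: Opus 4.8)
The plan is to show that a run of $\game\restr_{\straa'}$ reaches a \pseudo-recurrent state (i.e.\ a state with $R(X)=1$) with probability~$1$, by tracking how the $\win_\straa$ and $\rec_\straa$ components of the memory relate to the original strategy $\straa$. The key idea is that the original strategy $\straa$ is almost-sure winning, so in $\game\restr_\straa$ from the initial state $(s_0,m_0)$ every reachable recurrent class consists only of almost-sure winning recurrent states; and by Lemma~\ref{lem:reachability} the collapsed strategy $\straa'$ can ``track'' paths of $\straa$. Concretely, I would argue that along any run of $\game\restr_{\straa'}$, as long as no \pseudo-recurrent state has been seen, the ``abstract'' path $(Y_i,W_i,R_i,A_i)$ in the \projection graph is realizable by some genuine path $(s_0,m_0) \to \cdots \to (X_i, m_i)$ in $\game\restr_\straa$ with $\win_\straa(m_i)=W_i$, $\rec_\straa(m_i)=R_i$, $\actions_\straa(m_i)=A_i$, and $X_i \in Y_i$. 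This is essentially the converse direction of Lemma~\ref{lem:reachability}, following from how edges of $\prgr(\straa)$ are defined (each edge comes from genuine memory elements $m,m'$ and an observation $o$).

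Given such a realizing path in $\game\restr_\straa$, I would then use the structure of finite Markov chains: from $(X_i,m_i)$, with probability~$1$ the chain $\game\restr_\straa$ reaches a recurrent class (Property~1(b)), i.e.\ a recurrent state $(s,m)$ with $\rec_\straa(m)(s)=1$. The point is to transfer this reachability in $\game\restr_\straa$ to reachability in $\game\restr_{\straa'}$: by Lemma~\ref{lem:reachability}, since the recurrent state $(s,m)$ is reachable from $(X_i,m_i)$ in $\game\restr_\straa$, there is a belief $Y$ with $s \in Y$ such that $(s, Y, \win_\straa(m), \rec_\straa(m), \actions_\straa(m))$ is reachable from the current state of $\game\restr_{\straa'}$; and this target state has $R(s) = \rec_\straa(m)(s) = 1$, hence is \pseudo-recurrent. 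So from every state of $\game\restr_{\straa'}$ there is a positive-probability path to some \pseudo-recurrent state. Since $\game\restr_{\straa'}$ is a finite Markov chain, Property~1(a) then gives that a \pseudo-recurrent state is reached with probability~$1$.

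The main obstacle I anticipate is making precise the ``realizing path'' claim — that every finite prefix of a run of $\game\restr_{\straa'}$ that has not yet hit a \pseudo-recurrent state corresponds to an honest path in $\game\restr_\straa$ ending at some $(X_i,m_i)$ with the right $\win$, $\rec$, $\actions$ labels. One has to be careful here: the collapsed strategy $\straa'$ merges many memory elements, and a single abstract edge $(Y,W,R,A)\stackrel{a}{\to}(Y',W',R',A')$ may be witnessed by one pair $(m,m')$ while the state component $X_i$ of the run was produced along a transition that, in $\game\restr_\straa$, needs a possibly \emph{different} memory element $\tilde m$ with $\win_\straa(\tilde m)=W$, etc. The resolution is that for the argument we only need \emph{some} realizing path, and the belief component $Y_i$ of $\straa'$ is precisely designed to guarantee $X_i \in Y_i$ (point~1 of Lemma~\ref{lem:prop}); combined with the fact that the edge exists in $\prgr(\straa)$ (point~2 of Lemma~\ref{lem:prop}) one can inductively pick memory elements $m_i$ with $X_i \in Y_i$ and $(X_i,m_i)$ reachable in $\game\restr_\straa$, while maintaining the label equalities. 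Actually, one does not even need $X_i \in Y_i$ carefully threaded through $\game\restr_\straa$: it suffices to observe that some state $(X_i, m_i)$ with $\win_\straa(m_i)=W_i$ and $\rec_\straa(m_i)=R_i$ exists (by definition of the vertex set of $\prgr(\straa)$) and is reachable in $\game\restr_\straa$ — reachability in $\game\restr_\straa$ from the initial state follows by tracking the edges of $\prgr(\straa)$ back through their witnessing memory pairs. I would therefore structure the proof as: (1) show each state of $\game\restr_{\straa'}$ reachable from the initial state ``corresponds'' to a reachable state of $\game\restr_\straa$ with matching $(\win_\straa,\rec_\straa)$ labels; (2) apply Property~1(b) in $\game\restr_\straa$ to get a reachable recurrent state; (3) pull this back via Lemma~\ref{lem:reachability} to get a \pseudo-recurrent state reachable in $\game\restr_{\straa'}$; (4) conclude by Property~1(a).
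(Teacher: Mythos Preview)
Your proposal is correct and follows essentially the same route as the paper: pick a memory element $m$ with $\win_\straa(m)=W$, $\rec_\straa(m)=R$, $\actions_\straa(m)=A$ (which exists by definition of the vertex set of $\prgr(\straa)$), use Property~1(b) in $\game\restr_\straa$ from $(X,m)$ to find a reachable recurrent state $(X',m')$, transfer this via Lemma~\ref{lem:reachability} to a \pseudo-recurrent state reachable in $\game\restr_{\straa'}$, and conclude by Property~1(a).

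The one place you overcomplicate things is the ``realizing path'' discussion: you do not need $(X_i,m_i)$ to be reachable from $(s_0,m_0)$ in $\game\restr_\straa$, nor do you need to thread a consistent sequence of memory elements along the run. Property~1(b) applies from \emph{any} state of a finite Markov chain, and Lemma~\ref{lem:reachability} is stated for arbitrary $(s,m)$ and arbitrary belief $Y$ with $s\in Y$ --- no reachability hypothesis from the initial pair is required. So for each state $(X,Y,W,R,A)$ of $\game\restr_{\straa'}$ (where $X\in Y$ holds along any run by point~1 of Lemma~\ref{lem:prop}) you can simply pick \emph{any} $m$ with the right three labels and proceed; your step~(1) can be dropped entirely.
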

\begin{proof}
We show that from every state $(X,Y,W,R,A)$ in the Markov chain $\game\restr_{\straa'}$ there exists a reachable \pseudo-recurrent state. Consider a memory element $m \in M$ such that (i)~$\win_\straa(m) = W$, (ii)~$\rec_\straa(m) = R$, and (iii)~$\actions_\straa(m) = A$. Consider a state $(s,m)$ in the Markov chain $\game\restr_\straa$. By Property~1~(b) of Markov chains from every state in a Markov chain the set of recurrent states is reached with probability $1$, in particular there exists a reachable recurrent state $(X',m')$, such that $\rec_\straa(m')(X')=1$.
By Lemma~\ref{lem:reachability}, there exists $Y' \subseteq \states$ such that the state $(X',Y', \win_\straa(m), \rec_\straa(m'),\actions_\straa(m'))$ is reachable from $(X,Y,W,R,A)$ in the Markov chain $\game\restr_{\straa'}$, and moreover the state is \pseudo-recurrent.
As this is true for every state, we have that there is a positive probability of reaching a \pseudo-recurrent from every state in $\game\restr_{\straa'}$. This ensures by Property~1~(a) of Markov chains that \pseudo-recurrent states are reached with probability $1$.
\end{proof}

\begin{lemma}
\label{lem:proj}
The \projected strategy $\straa'$ is a finite-memory almost-sure winning strategy for the $\limavgone$
objective on the POMDP $\game$ with the reward function $\reward$.
\end{lemma}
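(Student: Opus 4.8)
The plan is to argue that $\straa'$ is almost-sure winning by showing that every recurrent class of the Markov chain $\game\restr_{\straa'}$ consists only of states with reward $1$, and then invoke Lemma~\ref{lem:rec_class}. By Lemma~\ref{lem:pseudo_reach}, a run reaches a \pseudo-recurrent state with probability $1$, and by the remark following the definition of \pseudo-recurrent (point~4 of Lemma~\ref{lem:prop}), all states reachable from a \pseudo-recurrent state are again \pseudo-recurrent. Consequently every recurrent class of $\game\restr_{\straa'}$ that is reachable from the initial state must consist entirely of \pseudo-recurrent states, i.e., states $(X,Y,W,R,A)$ with $R(X)=1$.

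The heart of the argument is therefore to establish that within such a recurrent class every state additionally satisfies $W(X)=1$, so that point~5 of Lemma~\ref{lem:prop} applies and forces $\reward(X,L)=1$ for the action $L$ played there, which by Lemma~\ref{lem:rec_class} (applied to the Markov chain $\game\restr_{\straa'}$) gives the almost-sure winning conclusion. To get $W(X)=1$ I would trace back: a state $(X,Y,W,R,A)$ with $R(X)=1$ arises from a memory element $m$ of the original strategy with $\rec_\straa(m)=R$, $\win_\straa(m)=W$, $\actions_\straa(m)=A$, and $(X,m)$ recurrent in $\game\restr_\straa$. Since $\straa$ itself is almost-sure winning for $\limavgone$, by Lemma~\ref{lem:rec_class} the recurrent class of $(X,m)$ in $\game\restr_\straa$ can contain only reward-$1$ states, so in particular $(X,m)$ is almost-sure winning, giving $W(X)=\win_\straa(m)(X)=1$. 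Then point~5 of Lemma~\ref{lem:prop} yields $\reward(X,L)=1$ for the action $L$ actually played from $(X,Y,W,R,A)$ along the run.

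Combining: with probability $1$ a run of $\game\restr_{\straa'}$ reaches a \pseudo-recurrent state, from which point it stays \pseudo-recurrent; hence with probability $1$ it enters a recurrent class all of whose states $(X,Y,W,R,A)$ have $R(X)=1$ and, by the traceback above, $W(X)=1$; by point~5 of Lemma~\ref{lem:prop} every such state contributes reward $1$, so the limit-average payoff of the run is $1$; therefore $\prb^{\straa'}(\limavgone)=1$. Finiteness of the memory $M'$ follows from $|M'|=|V|\le\mem$, which was already noted. The main obstacle I anticipate is the traceback step — ensuring that the particular memory element $m$ witnessing a \pseudo-recurrent state of $\game\restr_{\straa'}$ really has $(X,m)$ recurrent (not merely that \emph{some} merged $m$ is recurrent), so that $\win_\straa(m)(X)=1$ is actually available; this is exactly where the care in the definition of the collapsed graph (merging only memory states that agree on $\rec_\straa$, $\win_\straa$, $\actions_\straa$) and in Lemma~\ref{lem:reachability} is needed.
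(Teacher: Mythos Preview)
Your overall strategy is right: reach a \pseudo-recurrent state almost surely (Lemma~\ref{lem:pseudo_reach}), propagate \pseudo-recurrence (point~4 of Lemma~\ref{lem:prop}), argue that every reachable recurrent class of $\game\restr_{\straa'}$ consists of states with both $R(X)=1$ and $W(X)=1$, and then invoke point~5 of Lemma~\ref{lem:prop} together with Lemma~\ref{lem:rec_class}. The flaw is in how you obtain $W(X)=1$.

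You argue: given a reachable state $(X,Y,W,R,A)$ of $\game\restr_{\straa'}$ with $R(X)=1$, pick $m\in M$ with $(\win_\straa(m),\rec_\straa(m),\actions_\straa(m))=(W,R,A)$; then $(X,m)$ is recurrent in $\game\restr_\straa$, and since $\straa$ is almost-sure winning, the recurrent class of $(X,m)$ consists of reward-$1$ states, whence $(X,m)$ is almost-sure winning and $W(X)=1$. But ``$\straa$ is almost-sure winning'' means almost-sure winning \emph{from $(s_0,m_0)$}; applying Lemma~\ref{lem:rec_class} at $(s_0,m_0)$ tells you only that recurrent classes \emph{reachable from $(s_0,m_0)$} consist of reward-$1$ states. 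Nothing in your argument forces the witness $(X,m)$ to be reachable from $(s_0,m_0)$ in $\game\restr_\straa$: the collapsed strategy mixes transitions coming from different memories that share the same $(W,R,A)$ triple, and Lemma~\ref{lem:reachability} goes only in the direction $\game\restr_\straa \to \game\restr_{\straa'}$, not the reverse. (Incidentally, your stated worry --- that $(X,m)$ might fail to be recurrent --- is not the issue: $R(X)=\rec_\straa(m)(X)=1$ gives recurrence by definition. The missing ingredient is reachability of $(X,m)$, without which Lemma~\ref{lem:rec_class} does not apply to its recurrent class.)

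The paper sidesteps the traceback entirely. It observes that $W(X)=1$ is an invariant of \emph{every} state reachable in $\game\restr_{\straa'}$, not just the \pseudo-recurrent ones: the initial state satisfies $W_0(X_0)=\win_\straa(m_0)(s_0)=1$ because $\straa$ is almost-sure winning from $(s_0,m_0)$, and point~3 of Lemma~\ref{lem:prop} propagates $W_i(X_i)=1$ along every step of every run. Once you use point~3 in this forward way, together with your argument that every reachable recurrent state is \pseudo-recurrent, both hypotheses of point~5 are available and the proof finishes exactly as you describe.
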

\begin{proof}
The initial state of the Markov chain $\game\restr_{\straa'}$ is $(\{s_0\},\win_\straa(m_0), \rec_\straa(m_0), \actions_\straa(m_0))$ and as the strategy $\straa$ is an almost-sure winning strategy we have that $\win_\straa(m_0)(s_0) = 1$. It follows from the third point of Lemma~\ref{lem:prop} that every reachable state $(X,Y,W,R,A)$ in the Markov chain $\game\restr_{\straa'}$ satisfies that $W(X)=1$.

From every state a \pseudo-recurrent state is reached with probability $1$. It follows that all the recurrent states in the Markov chain $\game\restr_{\straa'}$ are also \pseudo-recurrent states. As in all reachable states $(X,Y,W,R,A)$ we have $W(X)=1$, by the fifth point of Lemma~\ref{lem:prop} it follows that every action $L$ played in a \pseudo-recurrent state $(X,Y,W,R,A)$ satisfies that the reward $\reward(X,L)=1$.  As this true for every reachable recurrent class, the fact that the \projected strategy is an almost-sure winning strategy for $\limavgone$ objective follows from Lemma~\ref{lem:rec_class}.
\end{proof}

\begin{theorem}[Strategy complexity]
\label{thm:stra}
The following assertions hold:
(1)~If there exists a finite-memory almost-sure winning strategy in the POMDP $\game = (\states,\act,\trans,\obs,\obsmap,s_0)$ with reward
function $\reward$ for the $\limavgone$ objective, then there exists a finite-memory almost-sure winning strategy with memory size at most
$2^{3 \cdot |\states|+|\act|}$.
(2)~Finite-memory almost-sure winning strategies for $\limavgone$ objectives in POMDPs 
in general require exponential memory and belief-based strategies are not sufficient.
\end{theorem}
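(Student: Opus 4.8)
The plan is to obtain part~(1) directly from the construction developed above, and to derive part~(2) from the examples of Section~3.1 together with a blind-POMDP gadget encoding a hard universal-automaton witness.

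\smallskip
\noindent\textbf{Part (1).} Suppose $\straa=(\straa_u,\straa_n,M,m_0)$ is a finite-memory almost-sure winning strategy for $\limavgone$ on $\game$ with reward function $\reward$. I would form the \projection graph $\prgr(\straa)=(V,E)$ and the \projected strategy $\straa'=\prst(\straa)$. By Lemma~\ref{lem:proj}, $\straa'$ is again a finite-memory almost-sure winning strategy for $\limavgone$ on $\game$ with reward $\reward$, so only the memory bound remains. Its memory set is $M'=V$, and each vertex of $V$ is a tuple $(Y,W,R,A)$ with $Y\subseteq\states$, $W\colon\states\to\{0,1\}$, $R\colon\states\to\{0,1\}$ and $A\subseteq\act$; hence $|M'|=|V|\le 2^{|\states|}\cdot 2^{|\states|}\cdot 2^{|\states|}\cdot 2^{|\act|}=2^{3|\states|+|\act|}=\mem$, which is the claimed bound. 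Nothing further is needed for part~(1): the substantive content — that the collapse neither discards reachable states (Lemma~\ref{lem:reachability}) nor creates a reachable recurrent class carrying reward~$0$ — is exactly what Lemmas~\ref{lem:fix}, \ref{lem:prop} and \ref{lem:pseudo_reach} establish via the \pseudo-recurrent states, and is packaged in Lemma~\ref{lem:proj} (whose final step invokes the Markov-chain characterization of Lemma~\ref{lem:rec_class}).

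\smallskip
\noindent\textbf{Part (2).} That belief-based strategies do not suffice is precisely Example~\ref{ex:belief-suff}, with Example~\ref{ex:complex} refining it to observation-based $\limavgone$ objectives, so that half needs nothing more. For the exponential lower bound I would exhibit a family of \emph{blind} POMDPs $\game_n$ of size polynomial in $n$ — a single observation, so an observation-based strategy is just a distribution over action sequences — in which every almost-sure winning strategy for $\limavgone$ uses $2^{\Omega(|\states|)}$ memory states. The gadget runs a universal finite automaton $\mathcal U_n$ whose shortest accepted word has length $2^{\Theta(n)}$ as a blind POMDP: the controller's actions spell an input word, the POMDP tracks one (probabilistically chosen) run of $\mathcal U_n$, a dedicated action declares the word finished — upon which the play moves to an absorbing reward-$1$ state if the tracked run is accepting and to an absorbing reward-$0$ state otherwise — and every transition before the declaration yields reward~$0$. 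An almost-sure winning strategy for $\limavgone$ must declare with probability~$1$, and since the tracked run is any run of $\mathcal U_n$ with positive probability, the declared word must be accepted along every run; but a fixed witness word of length $2^{\Theta(n)}$ cannot be produced before the declaration by a strategy with subexponentially many memory states (a $k$-state pure strategy produces an ultimately periodic action word, hence declares within the first $k$ steps), and randomization does not help because every action word in the support of an almost-sure winning strategy would itself have to be such a witness.

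\smallskip
The work, and the only real obstacle, sits on the side of part~(2): part~(1) is bookkeeping once Lemma~\ref{lem:proj} is available, and its genuinely delicate point — that the collapse must preserve the recurrence and winning structure relevant to $\limavgone$, which is why the collapsed memory records $\rec_\straa$ and $\win_\straa$ and why the \pseudo-recurrent states are introduced — has already been dealt with. For part~(2) the difficulty is designing the blind gadget so that smallness of memory is provably impossible rather than merely unlikely: every deviation from the forced exponential-length witness must be exposed to the reward mechanism with positive probability, so that any would-be small (possibly randomized) strategy is caught with positive probability and thus fails the almost-sure requirement.
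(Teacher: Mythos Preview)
Your proposal is correct and follows the paper's approach: part~(1) via Lemma~\ref{lem:proj} together with the vertex count of $\prgr(\straa)$, and part~(2) via Example~\ref{ex:belief-suff} plus the universal-automaton shortest-witness lower bound realized as a blind POMDP. Your treatment of the exponential lower bound is considerably more explicit than the paper's one-line invocation of the PFA result, but the underlying construction and argument are the same.
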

\begin{proof}
The first item follows from Lemma~\ref{lem:proj} and the fact that the size of the 
memory set of the \projected strategy $\straa'$ of any finite-memory strategy 
$\straa$ (which is the size of the vertex set of the \projection graph 
of $\straa$) is bounded by $2^{3 \cdot |\states|+|\act|}$.
The second item is obtained as follows: (i)~the exponential memory 
requirement follows the almost-sure winning problem for PFA as 
the shortest witness to the non-emptiness problem for universal 
automata is exponential; and (ii)~the fact that belief-based strategies
are not sufficient follows from Example~\ref{ex:belief-suff}. 
\end{proof}

\newcommand{\wcs}{\mathit{wcs}}

\subsection{Computational complexity}
We will present an exponential time algorithm for the almost-sure winning 
problem in POMDPs with $\limavgone$ objectives under finite-memory strategies.
A naive double-exponential algorithm would be to enumerate all finite-memory 
strategies with memory bounded by $2^{3 \cdot |\states|+|\act|}$ (by Theorem~\ref{thm:stra}).
Our improved algorithm consists of two steps: (i)~first it constructs a special type of 
a \emph{belief-observation} POMDP; and we show that there exists a finite-memory almost-sure winning strategy 
for the objective $\limavgone$ iff there exists a randomized 
memoryless almost-sure winning strategy in the belief-observation POMDP for the $\limavgone$ objective; and
(ii)~ then we show how to determine whether there exists a randomized memoryless almost-sure winning strategy 
 in the belief-observation POMDP for the $\limavgone$ objective in polynomial time with respect to the size of 
the belief-observation POMDP. 
Intuitively a belief-observation POMDP satisfies that the current 
belief is always the set of states with current observation.

\begin{definition}
A POMDP $\game = (\states, \act, \trans, \obs, \obsmap, s_0)$ is a \emph{belief-observation} POMDP iff for every finite prefix $w= (s_0,a_0,s_1,a_1, \ldots, a_{n-1},s_n)$ the belief associated with the observation sequence $\rho = \obsmap(w)$ is the set of states with the last observation $\obsmap(s_n)$ of the observation sequence $\rho$, i.e., $\belief(\rho) = \obsmap^{-1}(\obsmap(s_n))$.
\end{definition}

\subsubsection{Construction of the belief-observation POMDP}
Given a POMDP $\game$ with the $\limavgone$ objective specified by a reward function $\reward$ 
we construct a belief-observation POMDP $\wb{\game}$ with the $\limavgone$ objective 
specified by a reward function $\wb{\reward}$, such that there exists a finite-memory 
almost-sure winning strategy ensuring the objective in the POMDP $\game$ 
iff there exists a randomized memoryless almost-sure winning strategy in the POMDP $\wb{\game}$. We will refer to this construction as $\wb{\game} = \red(\game)$.
Intuitively, the construction will proceed as follows: if there exists an almost-sure winning finite-memory strategy, then there exists an almost-sure winning strategy with memory bounded by $2^{3 \cdot |\states|+|\act|}$. 
This allows us to consider the memory elements $M= 2^{S} \times \{0,1\}^{|S|} \times \{0,1\}^{|S|} \times 2^{\act}$; and intuitively construct the product of the memory $M$ with the POMDP $\game$. 
First we define the set of winning memories $M_{\Win} \subseteq M$ defined as follows:
$$ M_{\Win} = \{ (Y,W,R,A) \mid \text{ for all } s \in Y \text{ we have } W(s)=1 \}$$
Note that every reachable memory $m$ of the \projected strategy of any finite-memory almost-sure winning strategy $\straa$  in the POMDP $\game$ must belong to $M_{\Win}$.


We proceed with the formal construction: 
let $\game = (\states, \act, \trans, \obs, \obsmap, s_0)$ be a POMDP with a reward function $\reward$. 
We construct a POMDP  $\wb{\game} = (\wb{\states}, \wb{\act}, \wb{\trans}, \wb{\obs}, \wb{\obsmap}, \wb{s}_0)$ with a reward function $\wb{\reward}$ as follows:
\begin{itemize}
\item The set of states $\wb{\states} = \wb{\states}_a \cup \wb{\states}_m \cup \{\wb{s}_0, \wb{s}_l \}$, consists of the action selection states $\wb{\states}_a = \states \times M$; the memory selection states $\wb{\states}_m = \states \times 2^\states \times \act \times M$; an initial state $\wb{s}_0$ and an absorbing loosing state $\wb{s}_l$.

\item The actions $\wb{\act}$ are $\act \cup M$, i.e, the actions $\act$ from the POMDP $\game$ to simulate action playing and the memory elements $M$ to simulate memory updates.

\item The observation set is $\wb{\obs} = (M) \cup (2^\states \times \act \times M) \cup \{\wb{s}_0\} \cup \{\wb{s}_l\}$, 
intuitively the first component of the state cannot be observed, i.e., the memory part of the state remains visible; and 
the two newly added states do have their own observations.

\item The observation mapping is then defined $\wb{\obsmap}((s,m)) =m, \: \wb{\obsmap}((s,Y,a,m)) = (Y,a,m), \: \wb{\obsmap}(\wb{s}_0) = \wb{s}_0$ and $\wb{\obsmap}(\wb{s}_l) = \wb{s}_l$.

\item As the precise probabilities do not matter for computing almost-sure winning states under finite-memory strategies for the $\limavgone$ objective we specify the transition function only as edges of the POMDP graph and the probabilities are uniform over the support set. We  define the transition function $\wb{\trans}$ in the following steps:

\begin{enumerate}
\item For every memory element $m \in M_{\Win} \cap \{(\{s_0\},W,R,A) \mid (\{s_0\},W,R,A)  \in M \}$ we add an edge $\wb{s}_0 \stackrel{m}{\rightarrow} (s_0,m)$. Intuitively the set from which $m$ is chosen contains all the possible initial memories of an almost-sure winning \projected strategy.

\item We will say an action $a \in \act$ is \emph{enabled} in the observation $(Y,W,R,A)$ iff (i)~$a \in A$, and (ii)~for all $\wh{s} \in Y$ we have if $W(\wh{s})=1$ and $R(\wh{s})=1$, then $\reward(\wh{s},a)=1$. 
Intuitively, a \projected strategy would only play enabled actions, therefore every action $a\in\act$ possibly played by a \projected strategy remains available in the state of the POMDP. This fact follows from 
the fifth point of Lemma~\ref{lem:prop}. 
For an action $a \in \act$ that is enabled in observation $(Y,W,R,A)$ we have an edge 
$(s,(Y,W,R,A)) \stackrel{a}{\rightarrow} (s',Y',a,(Y,W,R,A))$ iff both of the 
following conditions are satisfied: (i)~$s' \in \supp(\trans(s,a))$; and 
(ii)~$Y' = \bigcup_{\wh{s} \in Y} \supp(\trans(\wh{s},a)) \cap \obsmap^{-1}(\obsmap(s'))$, i.e., the belief update from belief $Y$, under action $a$ and observation $\obsmap(s')$.
For an action $a \in \act$ that is not enabled, 
we have an edge to the loosing absorbing state $\wb{s}_l$, i.e.,  
$(s,(Y,W,R,A)) \stackrel{a}{\rightarrow}\wb{s}_l$.

\item We will say an action $(Y',W',R',A') \in M$ is \emph{enabled} in the observation $(Y',a,(Y,W,R,A))$ iff 
(i)~for all states $\wh{s} \in Y$, if $W(\wh{s})=1$, then for all $\wh{s}' \in \supp(\trans(\wh{s},a)) \cap Y'$ 
we have $W'(\wh{s}')=1$, and 
(ii)~for all states $\wh{s} \in Y$, if $R(\wh{s})=1$, then for all $\wh{s}' \in \supp(\trans(\wh{s},a)) \cap Y'$ we have $R'(\wh{s}')=1$.
Observe that by Lemma~\ref{lem:fix}, the memory updates in a \projected strategy correspond to enabled actions. 
For an action $m$ that is enabled in the observation $(Y',a,(Y,W,R,A))$ we have an edge $(s',Y',a,(Y,W,R,A)) \stackrel{m}{\rightarrow} (s',m)$;
otherwise we have an edge to the loosing absorbing state $\wb{s}_l$, i.e., edge 
$(s',Y',a,(Y,W,R,A)) \stackrel{m}{\rightarrow} \wb{s}_l$ if $m$ is not enabled.

\item The state $\wb{s}_l$ is an absorbing state, i.e., for all $\wb{a} \in \wb{\act}$ we have that $\wb{s}_l \stackrel{\wb{a}}{\rightarrow} \wb{s}_l$.
\end{enumerate}
\end{itemize}
The reward function $\wb{\reward}$ is defined using the reward function $\reward$, i.e., 
$\wb{\reward}((s,(Y,W,R,A)),a) = \reward(s,a)$, and $\wb{\reward}((s',Y',a,(Y,W,R,A)),\wb{a}) =1 $ for all $\wb{a} \in \wb{\act}$.  
The reward for the initial state may be set to an arbitrary value, as the initial state is visited only once. 
The rewards in the absorbing state $\wb{s}_l$ are $0$ for all actions.

In the POMDP $\wb{\game}$ the belief is already included in the state space itself of the POMDP, 
and the belief represents exactly the set of states in which the POMDP can be with positive probability.
Hence we have the following lemma.

\begin{lemma}
The POMDP $\wb{\game}$ is a belief-observation PODMP.
\end{lemma}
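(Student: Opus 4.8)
The plan is to prove, by induction on the length of a finite play prefix $w$ of $\wb\game$, that $\belief(\wb\obsmap(w)) = \wb\obsmap^{-1}(\wb\obsmap(\last(w)))$, which is exactly the defining property of a belief-observation POMDP. The observation that makes this go through is that in $\wb\game$ the memory element is part of the observation, since $\wb\obsmap((s,m)) = m$ and $\wb\obsmap((s',Y',a,m)) = (Y',a,m)$; in particular the belief component $Y$ of a memory element $m = (Y,W,R,A)$, and the belief $Y'$ carried by a memory-selection state, are both observable. Consequently, reading $\wb\obsmap^{-1}(\cdot)$ within the reachable sub-POMDP (equivalently, noting that the only meaningful action-selection states are the pairs $(s,m)$ with $s$ in the belief component of $m$), we will have $\wb\obsmap^{-1}(m) = \{(s,m)\mid s\in Y\}$ for $m=(Y,W,R,A)$, and $\wb\obsmap^{-1}((Y',a,m)) = \{(s',Y',a,m)\mid s'\in Y'\}$, while $\wb s_0$ and $\wb s_l$ each have a singleton pre-image. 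I would carry the invariant together with the auxiliary fact that every reachable action-selection state $(s,m)$, $m=(Y,\dots)$, satisfies $s\in Y$, and every reachable memory-selection state $(s',Y',a,m)$ satisfies $s'\in Y'$.

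The base case $w=\wb s_0$ is immediate since $\wb s_0$ is the unique state carrying its observation. For the inductive step I would assume the invariant for $w$ and extend $w$ by an action $\wb a$, distinguishing cases on the type of $\last(w)$. From $\wb s_0$: the only edges are $\wb s_0\stackrel{m}{\rightarrow}(s_0,m)$ with $m=(\{s_0\},W,R,A)$, and since $(s_0,m)$ is the unique successor, the new belief is $\{(s_0,m)\}=\wb\obsmap^{-1}(m)$ (here the belief component of $m$ is $\{s_0\}$). From an action-selection state $(s,m)$, $m=(Y,W,R,A)$, under an enabled action $a\in\act$: the successors are the states $(s',Y',a,m)$ with $s'\in\supp(\trans(s,a))$ and $Y'=\bigcup_{\wh s\in Y}\supp(\trans(\wh s,a))\cap\obsmap^{-1}(\obsmap(s'))$, and ranging over all $(s,m)$ with $s\in Y$ in the current belief and over all successors with a fixed observation $o$, the reachable states with observation $(Y',a,m)$ are exactly $\{(s',Y',a,m)\mid s'\in Y'\}$ because $Y'$ is by construction the set of $o$-states reachable in one step from $Y$ under $a$. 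From a memory-selection state $(s',Y',a,m)$ under an enabled memory-action $m'$: its enabling condition forces the first component of $m'$ to be $Y'$, the successor is $(s',m')$, and ranging over $s'\in Y'$ the reachable states with observation $m'$ are $\{(s',m')\mid s'\in Y'\}=\wb\obsmap^{-1}(m')$. In every case where a non-enabled action is played the successor is $\wb s_l$, which has a unique fresh observation, so the invariant holds trivially there, and $\wb s_l$ is absorbing.

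I expect the only genuinely delicate step to be the belief-update step of the action-selection case: one must check that the set $Y'$ carried (and observed) in the successor state coincides with the actual set of $\states$-components of the states consistent with the observation history. This is precisely the content of the belief-update clause in the construction, combined with the fact that $Y'$ is $\obsmap$-homogeneous in $\game$ (it is intersected with a single $\obsmap^{-1}(o)$), so that $Y'$ depends only on the previous belief $Y$, the action $a$, and the new observation $o$; that is exactly why the belief in $\wb\game$ can never be finer than the set of observation-compatible states. Once the invariant is established for every finite play prefix, the statement that $\wb\game$ is a belief-observation POMDP follows directly from the definition.
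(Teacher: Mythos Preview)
Your inductive argument is correct and simply makes explicit what the paper asserts in one sentence (``the belief is already included in the state space itself of the POMDP, and the belief represents exactly the set of states in which the POMDP can be with positive probability''). The one subtlety you flag---that $\wb\states_a=S\times M$ as written contains unreachable pairs $(s,(Y,W,R,A))$ with $s\notin Y$, so the literal equality $\belief(\rho)=\wb\obsmap^{-1}(m)$ only holds after restricting to the reachable sub-POMDP (equivalently, taking $\wb\states_a=\{(s,(Y,W,R,A))\mid s\in Y\}$ and $\wb\states_m=\{(s',Y',a,m)\mid s'\in Y'\}$)---is real and is glossed over in the paper; with that restriction your case analysis goes through exactly as written, including your reading that the enabling condition on a memory action $m'$ forces its first component to coincide with the observed $Y'$.
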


We now show that the existence of a finite-memory almost-sure winning strategy for the objective $\limavgone$ 
in POMDP $\game$ implies the existence of a randomized memoryless almost-sure winning strategy in the POMDP $\wb{\game}$ 
and vice versa. 

\begin{lemma}
\label{lem:fmtonm}
If there exists a finite-memory almost-sure winning strategy for the $\limavgone$ objective with the reward function $\reward$ in the POMDP $\game$, 
then there exists a memoryless almost-sure winning strategy for the $\limavgone$ objective with the reward function $\wb{\reward}$ in the POMDP $\wb{\game}$.
\end{lemma}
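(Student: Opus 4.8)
The plan is to recycle the collapsed strategy of the previous subsection and read it off directly as a memoryless, observation-based strategy on $\wb{\game} = \red(\game)$, exploiting the fact that $\red$ stores the memory element inside the state and exposes it in the observation. First I would pass from the given finite-memory almost-sure winning strategy $\straa$ to its collapsed strategy $\straa' = \prst(\straa)$: by Lemma~\ref{lem:proj} it is again finite-memory almost-sure winning for $\limavgone$, its memory set $M'$ (the vertex set of $\prgr(\straa)=(V,E)$) is a subset of $M$, and by the proof of Lemma~\ref{lem:proj} every reachable state $(X,Y,W,R,A)$ of $\game\restr_{\straa'}$ satisfies $W(X)=1$; in particular $M'\subseteq M_{\Win}$ and the initial memory $m'_0 = (\{s_0\},\win_\straa(m_0),\rec_\straa(m_0),\actions_\straa(m_0))\in M_{\Win}$, so the edge $\wb{s}_0\stackrel{m'_0}{\rightarrow}(s_0,m'_0)$ is present in $\wb{\game}$. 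I then define the memoryless strategy $\wb{\straa}$ on $\wb{\game}$ by: from $\wb{s}_0$ move (via action $m'_0$) to $(s_0,m'_0)$; in an action-selection state $(s,m)$, whose observation is $m$, play $\straa'_n(m)$; in a memory-selection state $(s',Y',a,m)$, whose observation is $(Y',a,m)$, play $\straa'_u(m,o,a)$ where $o$ is the unique observation with $\emptyset\neq Y'\subseteq\obsmap^{-1}(o)$ (recall that $Y'$ is the belief update of $Y$ under $a,o$, so $o$ is determined, and all memory elements in the support of $\straa'_u(m,o,a)$ have first component $Y'$). This is observation-based because the observation of a state of $\wb{\game}$ records exactly the data on which $\wb{\straa}$'s choice depends.

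\textbf{Never reaching $\wb{s}_l$.} Along a run of $\wb{\game}\restr_{\wb{\straa}}$ starting at $\wb{s}_0$, I would argue inductively that the memory elements traversed are vertices of $\prgr(\straa)$ and that the choices of $\wb{\straa}$ follow edges of $\prgr(\straa)$. Then each action $a\in\supp(\straa'_n(m))$ is \emph{enabled} in the observation $m=(Y,W,R,A)$, because $a\in\actions_\straa(m)=A$ and, by point~5 of Lemma~\ref{lem:prop}, $\reward(\wh s,a)=1$ whenever $W(\wh s)=R(\wh s)=1$; and each memory update $m'\in\supp(\straa'_u(m,o,a))$ is \emph{enabled} in the observation $(Y',a,(Y,W,R,A))$, because the edge $m\stackrel{a}{\rightarrow}m'$ of $\prgr(\straa)$ satisfies precisely the two conditions of Lemma~\ref{lem:fix}. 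Hence $\wb{\straa}$ keeps the play inside $\wb{\states}_a\cup\wb{\states}_m$ forever.

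\textbf{Transferring the objective.} Each transition $(s,m)\stackrel{a}{\rightarrow}(s',m')$ of $\game\restr_{\straa'}$ is simulated in $\wb{\game}\restr_{\wb{\straa}}$ by the two transitions $(s,m)\stackrel{a}{\rightarrow}(s',Y',a,m)\stackrel{m'}{\rightarrow}(s',m')$ with $Y'$ the belief update of $Y$ under $a$ and $\obsmap(s')$, and conversely; so the reachable part of $\wb{\game}\restr_{\wb{\straa}}$ is obtained from the reachable part of $\game\restr_{\straa'}$ by subdividing every edge with a fresh memory-selection state carrying reward $1$. Consequently the reachable recurrent classes of $\wb{\game}\restr_{\wb{\straa}}$ are exactly the subdivisions of the reachable recurrent classes of $\game\restr_{\straa'}$. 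On such a class every memory-selection state has $\wb{\reward}=1$ by construction, and every action-selection state $(s,m)$ corresponds to a recurrent state of $\game\restr_{\straa'}$, which therefore has $R(s)=1$ (proof of Lemma~\ref{lem:proj}) and, being reachable, $W(s)=1$, so by point~5 of Lemma~\ref{lem:prop} every action $a$ played there satisfies $\wb{\reward}((s,m),a)=\reward(s,a)=1$. Thus all reachable recurrent classes of $\wb{\game}\restr_{\wb{\straa}}$ carry reward $1$ everywhere, and Lemma~\ref{lem:rec_class} yields that $\wb{\straa}$ is almost-sure winning for $\limavgone$ in $\wb{\game}$ (the state $\wb{s}_0$ is transient, so its reward is irrelevant). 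I expect the main obstacle to be the bookkeeping in the second step: verifying that the purely combinatorial ``enabled'' predicates hard-wired into $\red(\game)$ coincide with what Lemma~\ref{lem:fix} and point~5 of Lemma~\ref{lem:prop} guarantee about the collapsed strategy, so that the simulation provably never leaks to $\wb{s}_l$; once that correspondence is established, the payoff transfer via Lemma~\ref{lem:rec_class} is routine.
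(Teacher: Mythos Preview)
Your approach is essentially identical to the paper's: define $\wb{\straa}$ from the collapsed strategy $\straa'$ by reading the memory off the observation, then argue that the resulting Markov chain is (a subdivision of) $\game\restr_{\straa'}$ and invoke Lemma~\ref{lem:rec_class}; the paper is terser, leaving the verification that $\wb{s}_l$ is avoided implicit in its claim that the two chains are isomorphic, whereas you spell out the enabled-action and enabled-memory checks via Lemma~\ref{lem:fix} and point~5 of Lemma~\ref{lem:prop}. One small slip: the claim $M'\subseteq M_{\Win}$ does not follow from what you cite (the vertex set of $\prgr(\straa)$ allows arbitrary $Y\subseteq S$, so not every vertex satisfies $W(s)=1$ for all $s\in Y$), but you only actually use $m'_0\in M_{\Win}$, which is correct since $Y=\{s_0\}$ and $\win_\straa(m_0)(s_0)=1$.
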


	\begin{proof}
	Assume there  is a finite-memory almost-sure winning strategy $\straa$ in the POMDP $\game$, then the \projected strategy $\straa' = (\straa'_u,\straa'_n, M, (\{s_0\},W,R,A)) = \prst(\straa)$ is also an almost-sure winning strategy in the POMDP $\game$. We define the memoryless strategy $\wb{\straa}: \wb{\obs} \rightarrow \distr(\wb{\act})$ as follows:
	\begin{itemize}
	\item In the initial observation $\{\wb{s}_0\}$ play the action $(\{s_0\},W,R,A)$.
	\item In observation $(Y,W,R,A)$ play $\wb{\straa}((Y,W,R,A)) = \straa'_n((Y,W,R,A))$.
	\item In observation $(Y',a,(Y,W,R,A))$ we denote by $o$ the unique observation such that $Y' \subseteq \obsmap^{-1}(o)$, then $\wb{\straa}(Y',a,(Y,W,R,A)) = \straa'_u((Y,W,R,A),o,a)$.
	\item In the observation $\{\wb{s}_l\}$ no matter what is played the loosing absorbing state is not left. 
	\end{itemize}
	Let $G_1 = \game\restr_{\straa'}$ and $G_2 = \wb{\game} \restr_{\wb{\straa}}$. We will for simplicity collapse the edges in the Markov Chain $G_2$, i.e., we will consider an edge $(s,(Y,W,R,A)) \stackrel{a}{\rightarrow} (s',(Y',W',R',A'))$ whenever $(s,(Y,W,R,A)) \stackrel{a}{\rightarrow} (s',Y',a,(Y,W,R,A)) \stackrel{(Y',W',R',A')}{\rightarrow} (s',(Y',W',R',A'))$. Note that in the first step the Markov chain $G_2$ reaches a state $(s_0,(\{s_0\},W,R,A))$, and one can observe that the Markov chains reachable from the initial state in $\game_1$ and the state $(s_0,(\{s_0\},W,R,A))$ in $\game_2$ are isomorphic (when collapsed edge are considered). 
	By Lemma~\ref{lem:rec_class}, all the reachable recurrent classes in the Markov chain $G_1$ have all the rewards equal to $1$. As all the intermediate states in the Markov chain $G_2$ that were collapsed have reward according to $\wb{\reward}$ equal to  $1$ for all the actions, it follows that all the reachable recurrent classes in $G_2$ have all the rewards assigned by the reward function $\wb{\reward}$  equal to $1$. By Lemma~\ref{lem:rec_class} it follows that $\wb{\straa}$ is a memoryless almost-sure winning strategy in the POMDP $\wb{\game}$ for the objective $\limavgone$.
	\end{proof}

	\begin{lemma}
	\label{lem:nmtofm}
	If there exists a memoryless almost-sure winning strategy for the $\limavgone$ objective with the reward function $\wb{\reward}$ in the POMDP $\wb{\game}$, then there exists a finite-memory almost-sure winning strategy for the $\limavgone$ objective withe the reward function $\reward$ in the POMDP $\game$.
	\end{lemma}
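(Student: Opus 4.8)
The plan is to invert the construction of Lemma~\ref{lem:fmtonm}: given a memoryless almost-sure winning strategy $\wb{\straa}$ in $\wb{\game}$, I will build a finite-memory strategy $\straa$ in $\game$ whose memory set is (the reachable part of) $M = 2^{\states}\times\{0,1\}^{|S|}\times\{0,1\}^{|S|}\times 2^{\act}$, and which simulates $\wb{\straa}$ by collapsing the two half-steps of $\wb{\game}$ (action selection followed by memory selection) into a single transition of $\game$.

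First I would fix the initial memory. Since $\wb{\straa}$ is almost-sure winning from $\wb{s}_0$ and $\wb{s}_0\stackrel{m}{\rightarrow}(s_0,m)$ has positive probability for every $m$ in the support of $\wb{\straa}(\{\wb{s}_0\})$, prefix-independence of $\limavgone$ together with a convex-combination argument shows that $\wb{\straa}$ is almost-sure winning from $(s_0,m)$ for every such $m$; I fix one, and by construction $m^* = (\{s_0\},W,R,A)\in M_{\Win}$. Next I would argue that $\wb{s}_l$ is not reachable from $(s_0,m^*)$ in $\wb{\game}\restr_{\wb{\straa}}$: it is absorbing with all rewards $0$, so reaching it with positive probability would give a positive-measure set of paths with $\limavg=0$, contradicting almost-sure winning from $(s_0,m^*)$. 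Hence along every run of $\wb{\game}\restr_{\wb{\straa}}$ from $(s_0,m^*)$, at every action-selection state the support of $\wb{\straa}$ consists only of \emph{enabled} actions of $\act$, and at every memory-selection state it consists only of \emph{enabled} memory updates from $M$ --- which is exactly what makes $\wb{\straa}$ simulable by an observation-based strategy of $\game$.

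Then I would define $\straa=(\straa_u,\straa_n,M,m^*)$ by $\straa_n((Y,W,R,A)) = \wb{\straa}((Y,W,R,A))$ (a distribution over $\act$, by the previous paragraph), and $\straa_u((Y,W,R,A),o,a) = \wb{\straa}((Y',a,(Y,W,R,A)))$ where $Y' = \bigcup_{s\in Y}\supp(\trans(s,a))\cap\obsmap^{-1}(o)$ is the (deterministic) belief update. The crucial observation is that $Y'$, and hence the distribution over the next memory element $(Y',W',R',A')$, depends only on $(Y,a,o)$ and not on the actual successor state, so $\straa_u$ is a legitimate memory-update function and $\straa$ is observation-based. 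By construction the Markov chain $\game\restr_{\straa}$ started at $(s_0,m^*)$ is isomorphic, after contracting the memory-selection intermediate states of $\wb{\game}$ (all of which carry reward $1$ under $\wb{\reward}$), to the part of $\wb{\game}\restr_{\wb{\straa}}$ reachable from $(s_0,m^*)$, which avoids $\wb{s}_l$.

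Finally I would transfer the winning condition. Since $\wb{\straa}$ is almost-sure winning from $(s_0,m^*)$, by Lemma~\ref{lem:rec_class} every reachable recurrent class of $\wb{\game}\restr_{\wb{\straa}}$ has all rewards equal to $1$; contracting the reward-$1$ intermediate states preserves this, so every reachable recurrent class of $\game\restr_{\straa}$ has all rewards equal to $1$ (every action played in a recurrent state has reward $1$), and Lemma~\ref{lem:rec_class} again yields that $\straa$ is a finite-memory almost-sure winning strategy for $\limavgone$ in $\game$. The part needing the most care is verifying the Markov-chain isomorphism --- that the action-then-memory two-phase transitions of $\wb{\game}$ really collapse to the single memory update of $\straa$, and that the initial segment $\wb{s}_0\to(s_0,m^*)$ is correctly absorbed into the choice of initial memory; everything else is routine bookkeeping.
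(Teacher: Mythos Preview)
Your proposal is correct and follows essentially the same approach as the paper: extract a finite-memory strategy whose memory is the $M$-component of $\wb{\game}$'s states, with $\straa_n$ and $\straa_u$ read off from $\wb{\straa}$, then argue that the Markov chain $\game\restr_{\straa}$ is isomorphic (after contracting the reward-$1$ memory-selection states) to the reachable part of $\wb{\game}\restr_{\wb{\straa}}$ and invoke Lemma~\ref{lem:rec_class}. Your treatment is slightly more careful than the paper's on two points --- you explicitly justify picking a single initial memory $m^*$ from the support of $\wb{\straa}(\{\wb{s}_0\})$ via prefix-independence, and you use the actual distribution $\wb{\straa}((Y',a,(Y,W,R,A)))$ for $\straa_u$ rather than the paper's uniform-over-support (both work since only the support matters for recurrent classes) --- but these are presentational refinements, not a different route.
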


	\begin{proof}
	Let $\wb{\straa}$ be a memoryless almost-sure winning strategy. Intuitively we use everything from the state except the first component as the memory in the constructed finite-memory strategy $\straa = (\straa_u,\straa_n, M, m_0)$, i.e., 
	\begin{itemize}
	\item $\straa_n((Y,W,R,A)) = \wb{\straa}((Y,W,R,A))$;
	\item $\straa_u((Y,W,R,A),o,a)$ we update uniformly to the elements from the set $\supp(\wb{\straa}((Y',a,(Y,W,R,A))))$, where $Y'$
is the belief update from $Y$ under observation $o$ and action $a$.
	\item $m_0 = \wb{\straa}(\{s'_0\})$, this can be in general a probability distribution, in our setting the initial memory is deterministic. However, this can modeled by adding an additional initial state from which the required memory update is going to be modeled.
	\end{itemize}
	For simplicity we collapse edges in the Markov chain $\game_2 = \wb{\game}\restr_{\wb{\straa}}$ and write $(s,(Y,W,R,A)) \stackrel{a}{\rightarrow} (s',(Y',W',R',A'))$ whenever $(s,(Y,W,R,A)) \stackrel{a}{\rightarrow} (s',Y',a,(Y,W,R,A)) \stackrel{(Y',W',R',A')}{\rightarrow} (s',(Y',W',R',A'))$. One can observe that the graphs reachable from the state $(s_0,(\{s_0\},W,R,A))$ in the Markov chain $\game_1 =\game \restr_{\straa}$ and the state 
	$(s_0,(\{s_0\},W,R,A))$ in the Markov chain $\game_2$ are isomorphic when the collapsed edges are considered. As the strategy $\wb{\straa}$ is almost-sure winning, it follows that the loosing absorbing state is not reachable in $\game_2$, 
	and by Lemma~\ref{lem:rec_class}, in every reachable recurrent class all the rewards are $1$. It follows that in every reachable recurrent class in $\game_1$ all the rewards are $1$ and the desired result follows.
	\end{proof}\

\subsubsection{Polynomial time algorithm for belief-observation POMDPs}
We will  present a polynomial-time algorithm to determine the set of states from 
which there exists a memoryless almost-sure winning strategy for the objective $\limavgone$ 
in the belief-observation POMDP $\wb{\game} =(\wb{\states},\wb{\act},\wb{\trans},\wb{\obs},\wb{\obsmap},\wb{s}_0)$.

For simplicity in presentation we enhance the belief-observation POMDP $\wb{\game}$ with an additional function 
$\wb{\av} : \wb{\obs} \rightarrow \powset(\wb{\act}) \setminus \emptyset$ denoting the set of actions that are 
available for an observation, i.e., in this part we will consider POMDPs to be a tuple 
$\red(\game)  = \wb{\game} =(\wb{\states},\wb{\act},\wb{\trans},\wb{\obs},\wb{\obsmap},\wb{\av},\wb{s}_0)$. Note that this does not increase the 
expressive power of the model, as an action not available in an observation may be simulated 
by an edge leading to a loosing absorbing state.

\smallskip\noindent{\bf Almost-sure winning observations.}
Given a POMDP $\wb{\game} =(\wb{\states},\wb{\act},\wb{\trans},\wb{\obs},\wb{\obsmap},\wb{\av},\wb{s}_0)$ and an objective $\psi$, let $\almostm(\psi)$ 
denote the set of observations $\wb{o} \in \wb{\obs}$, such that there exists a memoryless almost-sure winning strategy ensuring 
the objective from every state $\wb{s} \in \wb{\obsmap}^{-1}(\wb{o})$, i.e.,
$ \almostm(\psi) = \{ \wb{o} \in \wb{\obs} \mid \text{ there exists a memoryless strategy $\wb{\straa}$, such that for all } \wb{s} \in \wb{\obsmap}^{-1}(\wb{o}) \text{ we have } \prb_{\wb{s}}^{\wb{\straa}}(\psi)=1\}$. 
Our goal is to compute the set $\almostm(\limavgone)$ given the belief-observation POMDP $\wb{\game} = \red(\game)$.
Our algorithm will reduce the computation to \emph{safety} and \emph{reachability} objectives
which are defined as follows:
\begin{itemize}
 	\item \emph{Reachability and safety objectives.}
	Given a set $\target \subseteq \wb{S}$ of target states, the \emph{reachability} objective 
	$\Reach(\target) = \{ (\wb{s}_0, \wb{a}_0, \wb{s}_1, \wb{a}_1, \wb{s}_2 \ldots) \in \Omega \mid \exists k \geq 0:  \wb{s}_k \in \target\}$
	requires that a target state in $\target$ is visited at least once.
	Dually, the \emph{safety} objective $\safe(\targetsafe) = \{ (\wb{s}_0, \wb{a}_0, \wb{s}_1, \wb{a}_1, \wb{s}_2 \ldots) \in \Omega \mid \forall k \geq 0:  \wb{s}_k \in \targetsafe\}$ 
	requires that only states in $\targetsafe$ are visited.


\end{itemize}

In the first step of the computation of the winning observations, 
we will restrict the set of available actions in the belief-observation POMDP $\wb{\game}$. 
Observe, that any almost-sure winning strategy must avoid reaching the loosing absorbing state $\wb{s}_l$. 
Therefore, we restrict the actions in the POMDP to only so called \emph{allowable or safe}  actions.

\begin{itemize}
\item \emph{(Allow).} 
Given a set of observations $\wb{O} \subseteq \wb{\obs}$ and an observation $\wb{o} \in \wb{O}$ we define by $\allow(\wb{o},\wb{O})$ the set of actions that when played in observation $\wb{o}$ (in any state $\wb{\obsmap}^{-1}(\wb{o})$) ensure that the next observation is in $\wb{O}$, i.e.,
	$$\allow(\wb{o},\wb{O}) = \{ \wb{a} \in \wb{\av}(\wb{o}) \mid \bigcup_{\wb{s} \in \wb{\obsmap}^{-1}(\wb{o})} \wb{\obsmap}(\supp(\wb{\trans}(\wb{s},\wb{a}))) \subseteq \wb{O} \}$$
	\end{itemize}

	We will denote by $\wb{S}_{\good} = \wb{\states} \setminus \wb{s}_l$ the set of states of the POMDP without the loosing absorbing state. We compute the set of observations $\almostm(\safe(\wb{S}_{\good}))$ from which there exists a memoryless strategy ensuring almost-surely that the loosing absorbing state $\wb{s}_l$ is not visited. 

	 We construct a restricted POMDP $\wt{G} = (\wt{\states},\wt{\act},\wt{\trans},\wt{\obs},\wt{\obsmap},\wt{\av},\wt{s}_0)$, where $\wt{\act} = \wb{\act}$, $\wt{s}_0 =\wb{s}_0$ and the rest is defined as follows:
	\begin{itemize}
	\item the set of states is restricted to $\wt{\states} = \wb{\obsmap}^{-1}(\almostm(\safe(\wb{S}_{\good})))$, 
	\item the set of \emph{safe} actions available in observation $\wt{o}$ are 
	$$\wt{\av}(\wt{o}) = \wb{\av}(\wt{o}) \cap \allow(\wt{o}, \almostm(\safe(\wb{S}_{\good})))$$
	\item the transition function $\wt{\trans}$ is defined as $\wb{\trans}$ but restricted to states $\wt{\states}$, and 
	\item the observation $\wt{\obs} = \almostm(\safe(\wb{S}_{\good}))$ and the observation mapping $\wt{\obsmap}$ is defined as $\wb{\obsmap}$ restricted to states $\wt{\states}$.

	\end{itemize}


	Note that any memoryless almost-sure winning strategy $\wb{\straa}$ for the objective $\limavgone$ on the POMDP $\wb{\game}$ can be interpreted on the restricted POMDP $\wt{\game}$ as playing an action from the set $\wb{\av}(o)\setminus\wt{\av}(o)$ or reaching a state in $\wb{\states} \setminus \wt{\states}$ leads to a contradiction to the fact that $\wb{\straa}$ is an almost-sure winning strategy (the loosing absorbing state is reached with positive probability). Similarly due to the fact that the POMDP $\wb{\game}$ is a belief-observation POMDP it follows that the restricted POMDP $\wt{\game}$ is also a belief-observation POMDP.

	We define a subset of states of the belief-observation POMDP $\wt{\game}$ that intuitively correspond to winning \pseudo-recurrent states (wcs), i.e.,
	$\wt{S}_{\wcs} = \{(s,(Y,W,R,A)) \mid W(s) = 1, R(s) = 1 \}$. Finally we compute the set of observations $\wt{\w} = \almostm(\reach(S_{\wcs}))$ in the restricted belief-observation POMDP $\wt{\game}$. We show that the set of observations $\wt{\w}$ is equal to the set of observations $\almostm(\limavgone)$ in the POMDP $\wb{\game}$. In the following two lemmas we establish the required inclusions:

	\begin{lemma} 
	$\wt{\w} \subseteq \almostm(\limavgone)$.
	\end{lemma}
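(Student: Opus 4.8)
The plan is to show that a memoryless strategy witnessing $\wt{\w} = \almostm(\reach(S_{\wcs}))$ in the restricted POMDP $\wt{\game}$ can be converted into a memoryless almost-sure winning strategy for $\limavgone$ in $\wb{\game}$. First I would fix an observation $\wt{o} \in \wt{\w}$ and a memoryless strategy $\wt{\straa}$ that from every state with observation $\wt{o}$ reaches $S_{\wcs}$ with probability $1$ in $\wt{\game}$, while additionally playing only actions in $\wt{\av}$ (hence never leaving $\wt{\states}$, so the losing state $\wb{s}_l$ is never visited). I would then combine this with a strategy that, once the set $S_{\wcs}$ is reached, remains in the winning \pseudo-recurrent states: since the $R$- and $W$-components are monotone along edges (by Lemma~\ref{lem:prop}, points 3 and 4), once a state with $W(s)=R(s)=1$ is entered every subsequent state also satisfies $W=R=1$, so the reachability strategy can be freely continued without ever falling out of $S_{\wcs}$.

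The heart of the argument is the payoff computation. I would appeal to the fact that $\wb{\game}$ (and its restriction $\wt{\game}$) is a belief-observation POMDP, so a memoryless strategy induces a finite Markov chain on $\wt{\states}$; by Property~1 of Markov chains, from every reachable state a recurrent class is reached with probability $1$, and since $S_{\wcs}$ is reached with probability $1$ and is closed under the transition relation (by monotonicity of $R$ and $W$), every reachable recurrent class lies entirely inside $S_{\wcs}$. Within such a recurrent class every state $(s,(Y,W,R,A))$ has $W(s)=1$ and $R(s)=1$; by the fifth point of Lemma~\ref{lem:prop}, any action $L$ that the strategy plays there satisfies $\wb{\reward}((s,(Y,W,R,A)),L)=1$, and the intermediate memory-selection states all carry $\wb{\reward}$ equal to $1$ by definition. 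Hence every state-action pair visited infinitely often has reward $1$, and by Lemma~\ref{lem:rec_class} the $\limavgone$ objective is satisfied almost surely. Therefore $\wt{o} \in \almostm(\limavgone)$ in $\wb{\game}$, which gives the claimed inclusion.

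The main obstacle I anticipate is making the ``combine the reach strategy with staying in $S_{\wcs}$'' step precise in the memoryless setting: a priori one cannot switch strategies, so I would argue that the single strategy $\wt{\straa}$ already works, i.e.\ that once $S_{\wcs}$ is hit the same memoryless strategy keeps the run inside $S_{\wcs}$ because $S_{\wcs}$ is a union of observations (the $W,R$ components are part of the observation in $\wb{\game}$) and $\wt{\straa}$ restricted to those observations, being part of a safe strategy in $\wt{\game}$, cannot leave $\wt{\states}$; combined with monotonicity of $R$, the run stays in $S_{\wcs}$ forever. A secondary subtlety is confirming that $S_{\wcs}$, viewed through $\wt{\obsmap}$, is observation-closed so that ``reach $S_{\wcs}$'' is a well-posed observation-based objective — this follows since $W$ and $R$ are components of the observation and the defining condition $W(s)=R(s)=1$ could in principle depend on the hidden first component $s$, so I would need to use that within a fixed observation $(Y,W,R,A)$ the relevant $s$ ranges over $Y$ and the belief-observation property guarantees the run's actual state is always consistent, letting the reachability computation in $\wt{\game}$ correctly certify that $W(s)=R(s)=1$ holds for the true current state.
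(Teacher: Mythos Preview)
Your overall approach is the same as the paper's: take the memoryless strategy $\wt{\straa}$ that witnesses $\almostm(\reach(\wt{S}_{\wcs}))$ in $\wt{\game}$ and show it is already almost-sure winning for $\limavgone$ in $\wb{\game}$, by arguing that (i) it never reaches $\wb{s}_l$, (ii) it reaches $\wt{S}_{\wcs}$ almost surely, (iii) $\wt{S}_{\wcs}$ is closed under its transitions, and (iv) every reward collected inside $\wt{S}_{\wcs}$ equals $1$.

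There is, however, a genuine gap in your justification of steps (iii) and (iv). You invoke points~3--5 of Lemma~\ref{lem:prop}, but that lemma is stated for runs of the Markov chain $\game\restr_{\straa'}$, where $\straa'=\prst(\straa)$ is the \projected strategy of a fixed finite-memory almost-sure winning strategy $\straa$. Its proof works because for each memory tuple $(Y,W,R,A)$ encountered there is an actual memory element $m$ of $\straa$ with $\win_\straa(m)=W$, $\rec_\straa(m)=R$, $\actions_\straa(m)=A$. In $\wb{\game}$, by contrast, the memory tuples range over all of $2^{\states}\times\{0,1\}^{|\states|}\times\{0,1\}^{|\states|}\times 2^{\act}$ with no ambient strategy behind them, so Lemma~\ref{lem:prop} does not apply to runs of $\wb{\game}\restr_{\wt{\straa}}$.

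The correct justification, which the paper uses, goes directly through the construction of $\wb{\game}$: any non-enabled action leads to $\wb{s}_l$, so a strategy that stays inside $\wt{\states}=\wb{\obsmap}^{-1}(\almostm(\safe(\wb{S}_{\good})))$ plays only enabled actions. The enabled condition for memory-update actions $(Y',W',R',A')$ encodes precisely the monotonicity of $W$ and $R$ (this is where Lemma~\ref{lem:fix} was hard-wired into $\wb{\game}$), and the enabled condition for actions $a\in\act$ encodes that $\reward(s,a)=1$ whenever $W(s)=R(s)=1$ (hard-wiring point~5 of Lemma~\ref{lem:prop}). Once you cite these enabled-action conditions instead of Lemma~\ref{lem:prop}, your argument goes through verbatim.

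Your secondary worry about $\wt{S}_{\wcs}$ being observation-closed is unnecessary: the reachability objective $\reach(\wt{S}_{\wcs})$ is defined for an arbitrary set of states, and $\almostm(\cdot)$ already quantifies over all states in an observation, so the computation is well-posed regardless.
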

	\begin{proof}
	Let $\wt{\straa}$ be a memoryless almost-sure winning strategy for the objective $\reach(\wt{S}_{\wcs})$ from every state that has  observation $\wt{o}$ in the POMDP $\wt{\game}$, for all $\wt{o}$ in $\wt{\w}$. We will show that the strategy $\wt{\straa}$ also almost-surely ensures the $\limavgone$ objective in POMDP $\wb{\game}$. 

	Consider the Markov chain $\wb{\game}\restr_{\wt{\straa}}$. As the strategy $\wt{\straa}$ plays in the POMDP $\wt{\game}$ that is restricted only to actions that keep the game in observations $\almostm(\safe(\wb{S}_{\good}))$ it follows that  
  the loosing absorbing state $\wb{s}_l$ is not reachable in the Markov chain $\wb{\game}\restr_{\wt{\straa}}$. Also the strategy ensures that the set of states $\wt{S}_{\wcs}$ is reached with probability $1$ in the Markov chain $\wb{\game}\restr_{\wt{\straa}}$. 

	 Let $(s,(Y,W,R,A)) \stackrel{a}{\rightarrow} (s',Y',a,(Y,W,R,A))$ be an edge in the Markov chain $\wb{\game}\restr_{\wt{\straa}}$ and assume that $(s,(Y,W,R,A)) \in \wt{S}_{\wcs}$, i.e., $W(s) = 1$ and $R(s)=1$. The only actions $(Y',W',R',A')$ available in the state $(s',Y',a,(Y,W,R,A))$ are \emph{enabled} actions that satisfy:

	 \begin{itemize}
	 \item for all $\wh{s} \in Y$, if $W(\wh{s})=1$, then for all $\wh{s}' \in \supp(\trans(\wh{s},a)) \cap Y'$  we have $W'(\wh{s}')=1$, and
	 \item for all $\wh{s} \in Y$, if $R(\wh{s})=1$, then for all $\wh{s}' \in \supp(\trans(\wh{s},a)) \cap Y'$  we have $R'(\wh{s}')=1$.
	 \end{itemize}
 
	 It follows that all the states reachable in one step from $(s',Y',a,(Y,W,R,A))$ are also in $\wt{S}_{\wcs}$. Similarly, for every \emph{enabled} action in state $(s,(Y,W,R,A))$ we have that for all $s \in Y$, if $W(s)=1$ and $R(s)=1$, then $\reward(s,a)=1$. Therefore, all the rewards $\wb{\reward}$ from states in $\wt{S}_{\wcs}$ are $1$, and all the intermediate states $(s',Y',a,(Y,W,R,A))$ have reward $1$ by definition. This all together ensures that after reaching the set $\wt{S}_{\wcs}$ only rewards $1$ are received, and as the set $\wt{S}_{\wcs}$ is reached with probability $1$, it follows by Lemma~\ref{lem:rec_class} that $\wt{\straa}$ is an almost-sure winning strategy in the POMDP $\wb{\game}$ for the $\limavgone$ objective.
	\end{proof}

	\begin{lemma} 
	$ \almostm(\limavgone) \subseteq \wt{\w}$.
	\end{lemma}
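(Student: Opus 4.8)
The plan is to show that from the witness strategy for $\wb o\in\almostm(\limavgone)$ one can build a memoryless strategy on the restricted POMDP $\wt\game$ that almost surely reaches $\wt S_{\wcs}$. Fix an observation $\wb o\in\almostm(\limavgone)$ and a memoryless strategy $\wb\straa$ on $\wb\game$ that is almost-sure winning for $\limavgone$ from every state in $\wb\obsmap^{-1}(\wb o)$. First I would record two elementary consequences: since $\wb s_l$ is absorbing with reward $0$ only, reaching $\wb s_l$ with positive probability would give limit-average payoff $0$ with positive probability, so $\wb\straa$ avoids $\wb s_l$ almost surely; consequently it plays only actions in $\allow(\cdot,\almostm(\safe(\wb S_{\good})))$ along its runs, the reachable part of $\wb\game\restr_{\wb\straa}$ from $\wb\obsmap^{-1}(\wb o)$ stays inside $\wt\states$, and (using the belief-observation property of $\wb\game$, so that from $\wb o$ every state in every reachable belief is visited with positive probability) we get $\wb o\in\wt\obs$ and that $\wb\straa$ restricts to a legal memoryless strategy on $\wt\game$.

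The crux is that $\wb\straa$ itself need not reach $\wt S_{\wcs}=\{(s,(Y,W,R,A))\mid W(s)=R(s)=1\}$ almost surely: by Lemma~\ref{lem:fix} the flags $W$ and $R$ carried by the memory elements only propagate forward, so a memoryless strategy can cycle forever in a recurrent class whose memories all carry $R(\cdot)=0$ on the current state while still collecting only reward $1$ (already one-state examples show this). To repair this I would re-derive a strategy whose $R$-component is the genuine recurrence function, by a round trip through $\game$. As in the proof of Lemma~\ref{lem:nmtofm}, read off from $\wb\straa$ a finite-memory strategy $\straa$ on $\game$ over the memory set $M$ whose induced Markov chain is isomorphic, on its reachable part, to $\wb\game\restr_{\wb\straa}$, hence is almost-sure winning for $\limavgone$. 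Now apply Lemma~\ref{lem:fmtonm} to $\straa$: it internally forms the collapsed strategy $\straa'=\prst(\straa)$ and returns a memoryless strategy $\wb\straa^{\star}$ on $\wb\game$ whose induced Markov chain is isomorphic to $\game\restr_{\straa'}$. By Lemma~\ref{lem:pseudo_reach} (together with the fact, from the proof of Lemma~\ref{lem:proj}, that every recurrent state of $\game\restr_{\straa'}$ is \pseudo-recurrent) and the third point of Lemma~\ref{lem:prop}, every recurrent state of $\game\restr_{\straa'}$ satisfies $R(\cdot)=1$ and $W(\cdot)=1$ on its current state, i.e.\ corresponds to a state of $\wt S_{\wcs}$; hence $\wb\straa^{\star}$ reaches $\wt S_{\wcs}$ with probability $1$ from every state in $\wb\obsmap^{-1}(\wb o)$. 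Finally, by Lemma~\ref{lem:fix} and the fifth point of Lemma~\ref{lem:prop} the strategy $\straa'$ plays only enabled actions and enabled memory updates, so $\wb\straa^{\star}$ never reaches $\wb s_l$ and restricts to a legal memoryless strategy on $\wt\game$; this witnesses $\wb o\in\almostm(\reach(\wt S_{\wcs}))=\wt\w$.

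The step I expect to be the main obstacle is this middle round trip. One must run the $\wb\game\to\game\to\prst\to\wb\game$ passage starting from an arbitrary observation $\wb o$ rather than from the designated initial state, reconcile the belief component already present in the memory set $M$ with the belief component introduced by the collapsed-graph construction $\prgr(\straa)$, and check that recurrent classes of the final Markov chain are actually contained in $\wt S_{\wcs}$ (not merely reachable from it), so that every action and memory update played inside them is enabled and no reward-$0$ intermediate state occurs. The belief-observation bookkeeping needed to conclude that $\wb\straa$ and $\wb\straa^{\star}$ stay well-defined after restriction to $\wt\game$ is routine but must be carried out carefully.
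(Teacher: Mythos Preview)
Your approach is essentially the same as the paper's: both argue by passing from the memoryless witness $\wb\straa$ on $\wb\game$ back to a finite-memory strategy $\straa$ on $\game$ via Lemma~\ref{lem:nmtofm}, collapsing it to $\straa'=\prst(\straa)$, and then returning to a memoryless strategy on $\wb\game$ via Lemma~\ref{lem:fmtonm}, using Lemma~\ref{lem:pseudo_reach} and the isomorphism of the induced Markov chains to conclude that $\wt S_{\wcs}$ is reached almost surely. The paper phrases this as a contradiction argument but the content is identical.

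Your write-up is in fact more careful than the paper's on two points. First, you make explicit the observation that the \emph{original} memoryless witness $\wb\straa$ need not itself reach $\wt S_{\wcs}$ (because its $R$-flags are arbitrary labels, not the genuine recurrence function), which is precisely what forces the round trip; the paper's proof takes this detour without motivating it. Second, the obstacle you flag --- that Lemmas~\ref{lem:fmtonm} and~\ref{lem:nmtofm} are stated for the designated initial state rather than an arbitrary observation $\wb o$, and that the belief component of the collapsed memory must be reconciled with the belief component already carried by $M$ --- is real, and the paper's proof glosses over the same point. For the application at hand (deciding whether $\wb s_0\in\wb\obsmap^{-1}(\almostm(\limavgone))$) only the case $\wb o=\{\wb s_0\}$ is ultimately needed, which sidesteps the first difficulty; for the full inclusion as stated one does need the bookkeeping you describe, and your sketch of how to carry it out is correct.
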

	\begin{proof}
	Assume towards contradiction that there exists an observation $\wt{o} \in \almostm(\limavgone) \setminus \wt{\w}$. 
	As we argued before, the observation $\wt{o}$ must belong to the set $\almostm(\safe(\wb{S}_{\good}))$, otherwise with positive probability the loosing absorbing state $\wb{s}_l$ is reached.
	
	As $\wt{o} \in \almostm(\limavgone)$ there exists a memoryless almost-sure winning strategy $\wb{\straa}$ in the POMDP $\wb{\game}$ for the objective $\limavgone$. By Lemma~\ref{lem:nmtofm}  there exists a finite-memory almost-sure winning strategy $\straa$ in POMDP $\game$ for the $\limavgone$ objective and by Theorem~\ref{thm:stra} there exists a finite-memory almost-sure winning \projected strategy $\straa' = \prst(\straa)$ in POMDP $\game$.
 
 Let $(s,(Y,W,R,A))$ be a reachable \pseudo-recurrent state in the Markov chain $\game\restr_{\straa}$. Then by the definition of \pseudo-recurrent states we have that $R(s) = 1$, and as the strategy $\straa'$ is almost-sure winning we also have that $W(s) = 1$, i.e., if we consider the state $(s,(Y,W,R,A))$ of the POMDP $\wb{\game}$ we obtain that the state belongs to the set $\wt{S}_{\wcs}$.
 Note that by Lemma~\ref{lem:pseudo_reach} the set of \pseudo-recurrent states in the Markov chain $\game\restr_{\straa'}$ is reached with probability $1$. 
	  By the construction presented in Lemma~\ref{lem:fmtonm} we obtain a memoryless almost-sure winning strategy $\wb{\straa}$ for the objective $\limavgone$ in the POMDP $\wb{\game}$. Moreover, we have that the Markov chains $\game\restr_{\straa'}$ and $\wb{\game}\restr_{\wb{\straa}}$ are isomorphic when simplified edges are considered. In particular it follows that the set of states $\wt{S}_{\wcs}$ is reached with probability $1$ in the Markov chain $\wb{\game}\restr_{\wb{\straa}}$ and therefore $\wb{\straa}$ is a witness strategy for the fact that the observation $\wt{o}$ belongs to the set $\wt{\w}$. The contradiction follows.
	\end{proof}

To complete the computation for almost-sure winning for $\limavgone$ objectives
we now present polynomial time solutions for almost-sure safety and 
almost-sure reachability objectives for randomized memoryless strategies
 in the belief-observation POMDP $\wb{\game}$.
We start with a few notations below:

\begin{itemize}
\item \emph{(Pre).} The predecessor function given a set of observations $\wb{\obsset}$ selects the observations $\wb{o} \in \wb{\obsset}$ such that $\allow(\wb{o},\wb{\obsset})$ is non-empty , i.e.,
$$\pre(\wb{\obsset}) = \{ \wb{o} \in \wb{\obs} \mid \allow(\wb{o},\wb{\obsset}) \not = \emptyset \}.$$

\item \emph{(Apre).} Given a set $\wb{Y} \subseteq \wb{\obsset}$ of observations  and a set
$\wb{X} \subseteq \wb{S}$ of states such that $\wb{X} \subseteq \wb{\obsmap}^{-1}(\wb{Y})$, the set $\apre(\wb{Y},\wb{X})$ denotes the states from $\wb{\obsmap}^{-1}(\wb{Y})$ such that there exists an action that ensures that the next observation is in $\wb{Y}$ and the set $\wb{X}$ is reached with positive probability, i.e.,:
$$ \apre(\wb{Y},\wb{X}) = \{ \wb{s} \in \wb{\obsmap}^{-1}(\wb{Y}) \mid \exists \wb{a} \in \allow(\wb{\obsmap}(\wb{s}),\wb{Y}) \text{ such that } \supp(\wb{\trans}(\wb{s},\wb{a})) \cap \wb{X} \not =  \emptyset \}.$$

\item \emph{(ObsCover).} For a set $\wb{U} \subseteq \wb{S}$ of states we define the $\obscover(\wb{U}) \subseteq \wb{\obs}$ to be the set of observations $\wb{o}$ such that all states
with observation $\wb{o}$ are in $\wb{U}$, 
i.e., $\obscover(\wb{U}) = \{ \wb{o} \in \wb{\obs} \mid \wb{\obsmap}^{-1}(\wb{o}) \subseteq \wb{U}\}$.
\end{itemize}
Using the above notations we present the solution of almost-sure 
winning for safety and reachability objectives.

\smallskip\noindent{\bf Almost-sure winning for safety objectives.} 
Given a safety objective $\safe(\wb{F})$, for a set $\wb{F} \subseteq \wb{S}$ of states, 
let $\obsset_{\wb{F}}=\obscover(\wb{F})$ denote the set of observations $\wb{o}$ such that 
$\wb{\obsmap}^{-1}(\wb{o}) \subseteq \wb{F}$, i.e., all states $\wb{s} \in \wb{\obsmap}^{-1}(\wb{o})$ 
belong to $\wb{F}$.
We denote by $\nu X$ the greatest fixpoint and by $\mu X$ the least fixpoint. 
Let 
\[
Y^* = \nu Y.(\obsset_{\wb{F}} \cap \pre(Y)) =\nu Y. (\obscover(\wb{F}) \cap \pre(Y))
\]
be the greatest fixpoint of the function $f(Y)= \obsset_{\wb{F}} \cap \pre(Y)$. 
Then the set $Y^*$ is obtained by the following computation:
\begin{enumerate}
\item $Y_{0} \leftarrow \obsset_{\wb{F}}$; and
\item repeat $Y_{i+1} \leftarrow \pre(Y_{i})$ until a fixpoint is reached.
\end{enumerate}
We show that $Y^*=\almostm(\safe(\wb{F}))$.

\begin{lemma}
\label{lem:nonemptyallow}
For every observation $\wb{o} \in Y^*$ we have $\allow(\wb{o},Y^*)\neq \emptyset$ 
(i.e., $\allow(\wb{o},Y^*)$ is non-empty).
\end{lemma}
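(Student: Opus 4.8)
The plan is to read the claim straight off the fixpoint equation defining $Y^*$, so there is essentially nothing to compute. The first step is to record what $Y^*$ is: by definition it is the greatest fixpoint $Y^* = \nu Y.(\obscover(\wb F)\cap\pre(Y))$, equivalently the set returned by the iteration $Y_0 \leftarrow \obscover(\wb F)$, $Y_{i+1}\leftarrow\pre(Y_i)$; this iteration ranges over the finite set $\powset(\wb\obs)$, successive iterates are nested, and the loop halts exactly when $Y_{i+1}=Y_i$. Hence the returned $Y^*$ genuinely satisfies $Y^* = \obscover(\wb F)\cap\pre(Y^*)$, and in particular $Y^*\subseteq\pre(Y^*)$.

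The second step is to unfold the definition of $\pre$. By definition $\pre(Y^*)=\{\wb o\in\wb\obs \mid \allow(\wb o,Y^*)\neq\emptyset\}$. Combining this with the inclusion $Y^*\subseteq\pre(Y^*)$ from the first step, every observation $\wb o\in Y^*$ lies in $\pre(Y^*)$, and membership in $\pre(Y^*)$ is by definition exactly the statement that $\allow(\wb o,Y^*)\neq\emptyset$. That is the claim.

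There is no real obstacle here; the only point I would be careful to state explicitly — and the nearest thing to a subtlety — is the justification that the iteration actually terminates and that the reported $Y^*$ satisfies the full equality $Y^*=\obscover(\wb F)\cap\pre(Y^*)$ (rather than only one inclusion), which follows from finiteness of $\wb\obs$ together with the loop's stopping condition, or simply from $Y^*$ being a fixpoint of a monotone operator on a finite lattice. Note that none of the belief-observation structure of $\wb\game$, none of the functions $\win_\straa$, $\rec_\straa$, $\actions_\straa$, and no reachability or safety analysis is invoked: the lemma is purely a statement about the $\pre$ operator and its fixpoint, and will be used afterwards precisely to know that the memoryless "stay in $Y^*$" strategy is well defined on $\wb\obsmap^{-1}(Y^*)$.
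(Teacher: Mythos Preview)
Your proof is correct and follows essentially the same approach as the paper: both arguments rest on the observation that $Y^*$ being a fixpoint forces $Y^*\subseteq\pre(Y^*)$, and then unfold the definition of $\pre$ to conclude $\allow(\wb o,Y^*)\neq\emptyset$ for each $\wb o\in Y^*$. The only cosmetic difference is that the paper phrases this by contradiction while you give the direct version.
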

\begin{proof}
Assume towards contradiction that there exists an observation $\wb{o} \in Y^*$ 
such that $\allow(\wb{o},Y^*)$ is empty. 
Then $\wb{o} \not\in \pre(Y^*)$ and hence the observation 
must be removed in the next iteration of the algorithm. 
This implies $\pre(Y^*) \not = Y^*$, we reach a contradiction that 
$Y^*$ is a fixpoint.
\end{proof}

\begin{lemma}
The set $Y^*$ is the set of almost-sure winning observations for the
safety objective $\safe(\wb{F})$, i.e., $Y^* = \almostm(\safe(\wb{F}))$, 
and can be computed in linear time.
\end{lemma}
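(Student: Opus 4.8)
The plan is to prove the two inclusions $Y^* \subseteq \almostm(\safe(\wb{F}))$ and $\almostm(\safe(\wb{F})) \subseteq Y^*$, and then to argue the linear running time. I would start by recording that, being a fixpoint, $Y^*$ satisfies $Y^* = \obscover(\wb{F}) \cap \pre(Y^*)$; hence $Y^* \subseteq \obscover(\wb{F})$, so $\wb{\obsmap}^{-1}(Y^*) \subseteq \wb{F}$, and by Lemma~\ref{lem:nonemptyallow} the set $\allow(\wb{o},Y^*)$ is non\-empty for every $\wb{o}\in Y^*$. For the inclusion $Y^* \subseteq \almostm(\safe(\wb{F}))$ I would exhibit the memoryless strategy $\wb{\straa}$ that at every observation $\wb{o}\in Y^*$ plays the uniform distribution over $\allow(\wb{o},Y^*)$ (and plays arbitrarily elsewhere); this is well defined by the previous sentence. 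By the definition of $\allow$, whenever the current observation lies in $Y^*$ and $\wb{\straa}$ is played, the next observation again lies in $Y^*$; therefore, starting from any state whose observation is in $Y^*$, every play consistent with $\wb{\straa}$ stays inside $\wb{\obsmap}^{-1}(Y^*) \subseteq \wb{F}$ forever, so $\safe(\wb{F})$ holds with certainty, hence with probability $1$. Thus $Y^* \subseteq \almostm(\safe(\wb{F}))$.

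For the reverse inclusion I would show $\almostm(\safe(\wb{F})) \subseteq Y_i$ for every approximant $Y_i$ of the fixpoint computation, by induction on $i$; this yields $\almostm(\safe(\wb{F})) \subseteq Y^*$. For $i=0$, if $\wb{o}\in\almostm(\safe(\wb{F}))$ with memoryless witness $\wb{\straa}$, then from every state $\wb{s}$ with $\wb{\obsmap}(\wb{s})=\wb{o}$ the play starts in $\wb{s}$, so safety forces $\wb{s}\in\wb{F}$; hence $\wb{\obsmap}^{-1}(\wb{o})\subseteq\wb{F}$, i.e. $\wb{o}\in\obscover(\wb{F})=Y_0$. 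For the inductive step, take $\wb{o}\in\almostm(\safe(\wb{F}))$ with memoryless witness $\wb{\straa}$ and pick any $\wb{a}\in\supp(\wb{\straa}(\wb{o}))$; I claim $\wb{a}\in\allow(\wb{o},Y_i)$, which together with the induction hypothesis $\wb{o}\in Y_i$ gives $\wb{o}\in Y_{i+1}$. Indeed, for any $\wb{s}\in\wb{\obsmap}^{-1}(\wb{o})$ and any $\wb{s}'\in\supp(\wb{\trans}(\wb{s},\wb{a}))$ the strategy $\wb{\straa}$ is almost-sure winning for $\safe(\wb{F})$ from $\wb{s}$ and reaches $\wb{s}'$ with positive probability, hence it is almost-sure winning for $\safe(\wb{F})$ from $\wb{s}'$ too. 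Now I would invoke that $\wb{\game}$ is a belief-observation POMDP: the belief at observation $\wb{\obsmap}(\wb{s}')$ equals the full class $\wb{\obsmap}^{-1}(\wb{\obsmap}(\wb{s}'))$, so every state carrying that observation is an $\wb{a}$-successor of some state in $\wb{\obsmap}^{-1}(\wb{o})$, and therefore $\wb{\straa}$ is almost-sure winning for $\safe(\wb{F})$ from \emph{every} such state, i.e. $\wb{\obsmap}(\wb{s}')\in\almostm(\safe(\wb{F}))$, whence $\wb{\obsmap}(\wb{s}')\in Y_i$ by the induction hypothesis. As this holds for all such $\wb{s}'$, we conclude $\wb{a}\in\allow(\wb{o},Y_i)$.

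The step I expect to be the main obstacle is precisely this transfer, in the inductive step, of ``almost-sure winning under the fixed memoryless strategy'' from one observation class to its successor observation class: this fails for general POMDPs, and it is exactly where the belief-observation property of $\wb{\game}$ is indispensable, since one must argue that the belief after the relevant observation sequence is the entire observation class, so that \emph{all} states carrying that observation are simultaneously possible and must therefore all be winning. Finally, for the complexity claim: each iteration of the fixpoint either removes at least one observation or has already reached the fixpoint, so there are at most $|\wb{\obs}|$ iterations; and the whole procedure is a standard backward greatest-fixpoint (safety) computation, implementable in time linear in the size of $\wb{\game}$ by precomputing for each pair (observation, action) the set of possible successor observations and, each time an observation is deleted from the current set, decrementing the count of surviving allowable actions of every observation that could move to it, deleting an observation once no allowable action survives. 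This establishes $Y^* = \almostm(\safe(\wb{F}))$ together with the linear-time bound.
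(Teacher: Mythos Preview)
Your proposal is correct and follows essentially the same approach as the paper. The first inclusion is identical (exhibit a memoryless strategy playing within $\allow(\wb{o},Y^*)$; the paper picks a pure action rather than the uniform distribution, but this is immaterial). For the second inclusion the paper argues the contrapositive, proving by induction on $i$ that no memoryless almost-sure winning strategy exists from observations in $\wb{\obs}\setminus Y_i$, whereas you prove the equivalent direct statement $\almostm(\safe(\wb{F}))\subseteq Y_i$; both inductions hinge on the same step, namely that a successor observation of an almost-sure winning observation is again almost-sure winning for the \emph{entire} observation class, and both therefore rely on the belief-observation property at exactly this point (the paper invokes it a bit more tersely). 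Your treatment of the linear-time bound is more detailed than the paper's one-line remark, but consistent with it.
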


\begin{proof}
We prove the two desired inclusions:
(1)~$Y^* \subseteq \almostm(\safe(\wb{F}))$; and 
(2)~$\almostm(\safe(\wb{F})) \subseteq Y^*$.
\begin{enumerate}
\item \emph{(First inclusion).} By the definition of $Y_0$ we have that 
$\wb{\obsmap}^{-1}(Y_0) \subseteq \wb{F}$. 
As $Y_{i+1} \subseteq Y_{i}$ we have that $\wb{\obsmap}^{-1}(Y^*) \subseteq \wb{F}$. 
By Lemma~\ref{lem:nonemptyallow}, for all observations $\wb{o}\in Y^*$ we have 
$\allow(\wb{o},Y^*)$ is non-empty. 
A pure memoryless that plays some action from $\allow(\wb{o},Y^*)$ in $\wb{o}$, for 
$\wb{o}\in Y^*$, ensures that the next observation is in $Y^*$.
Thus the strategy ensures that only states from 
$\wb{\obsmap}^{-1}(Y^*) \subseteq \wb{F}$ are visited, 
and therefore is an almost-sure winning strategy for the safety objective.

\item \emph{(Second inclusion).} We prove that there is no almost-sure winning
strategy from $\wb{\obs} \setminus Y^*$ by induction:
\begin{itemize}
\item \textbf{(Base case).} There is no almost-sure winning strategy from observations $\wb{\obs} \setminus Y_0$. Note that $Y_0 = \obsset_{\wb{F}}$.
In every observation $\wb{o} \in \wb{\obs} \setminus Y_0$ there exists a state $\wb{s} \in \wb{\obsmap}^{-1}(\wb{o})$ such that $\wb{s} \not \in \wb{F}$. As $\game$ is a belief-observation POMDP there is a positive probability of being in state $\wb{s}$, and therefore not being in $\wb{F}$. 
\item \textbf{(Inductive step).} We show that there is no almost-sure winning strategy from observations in $\wb{\obs} \setminus Y_{i+1}$. Let $Y_{i+1} \not = Y_{i}$ and $\wb{o} \in Y_{i} \setminus Y_{i+1}$ (or equivalently $(\wb{\obs} \setminus Y_{i+1}) \setminus (\wb{\obs} \setminus Y_{i})$). As the observation $\wb{o}$ is removed from $Y_i$ it follows that $\allow(\wb{o}, Y_i) = \emptyset$. It follows that no matter what action is played, there is a positive probability of being in a state $\wb{s} \in \wb{\obsmap}^{-1}(\wb{o})$ such that playing the action would leave the set $\wb{\obsmap}^{-1}(Y_i)$ with positive probability, and thus reaching the observations $\wb{\obs} \setminus Y_i$ from which there is no almost-sure winning strategy by induction hypothesis.
\end{itemize}
\end{enumerate}
This shows that $Y^*=\almostm(\safe(\wb{F}))$, and the linear time computation follows
from the straight forward computation of greatest fixpoints.
The desired result follows.
\end{proof}


\smallskip\noindent{\bf Almost-sure winning for reachability objectives.} 
Consider a set $\wb{T}\subseteq \wb{S}$ of target states, and the reachability objective
$\Reach(\wb{T})$. 
We will show that: 
$$\almostm(\Reach(\wb{T})) = \nu Z. \obscover(\mu X. ((\wb{T} \cap \obsmap^{-1}(Z) ) \cup \apre(Z,X))).$$
Let $Z^* = \nu Z. \obscover(\mu X. ((\wb{T} \cap \obsmap^{-1}(Z) ) \cup \apre(Z,X)))$. 
In the following two lemmas we show the two desired inclusions, i.e., 
$\almostm(\Reach(\wb{T})) \subseteq Z^*$ and then we show that 
$Z^* \subseteq \almostm(\Reach(\wb{T}))$.

	\begin{lemma}
	\label{lem:bsuby}
	$\almostm(\Reach(\wb{T})) \subseteq Z^*$.
	\end{lemma}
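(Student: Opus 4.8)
The plan is to prove $\almostm(\Reach(\wb{T})) \subseteq Z^*$ by showing that the complement $\wb{\obs} \setminus Z^*$ contains no almost-sure winning observation, using the outer greatest-fixpoint structure of $Z^*$. Recall $Z^* = \nu Z.\, \obscover(\mu X.\,((\wb{T}\cap\obsmap^{-1}(Z))\cup\apre(Z,X)))$, so I would reason about the decreasing sequence $Z_0 = \wb{\obs} \supseteq Z_1 \supseteq \cdots$ of approximants, where $Z_{k+1} = \obscover(\mu X.\,((\wb{T}\cap\obsmap^{-1}(Z_k))\cup\apre(Z_k,X)))$, and prove by induction on $k$ that from every observation in $\wb{\obs}\setminus Z_k$ there is no memoryless almost-sure winning strategy for $\Reach(\wb{T})$. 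Taking the limit then gives the claim, since an observation outside $Z^*$ is outside some $Z_k$.

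The base case $k=0$ is vacuous since $Z_0 = \wb{\obs}$. For the inductive step, fix $\wb{o}\in Z_k\setminus Z_{k+1}$ and a candidate memoryless strategy $\wb{\straa}$, and consider a state $\wb{s}\in\wb{\obsmap}^{-1}(\wb{o})$. The key point is that $\wb{o}\notin Z_{k+1}=\obscover(\mu X.\,\ldots)$ means $\wb{\obsmap}^{-1}(\wb{o})\not\subseteq X^*_k$ where $X^*_k = \mu X.\,((\wb{T}\cap\obsmap^{-1}(Z_k))\cup\apre(Z_k,X))$; so there is a state $\wb{s}$ with observation $\wb{o}$ lying outside $X^*_k$. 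I would show that from any such state, under $\wb{\straa}$, with positive probability the play either leaves $\wb{\obsmap}^{-1}(Z_k)$ (hence reaches an observation in $\wb{\obs}\setminus Z_k$, from which by the induction hypothesis $\Reach(\wb{T})$ is not almost-surely winning) or stays forever inside $\wb{\obsmap}^{-1}(Z_k)\setminus X^*_k$ without ever visiting $\wb{T}$. The latter is where the least-fixpoint characterization of $X^*_k$ does the work: a state outside $X^*_k$ is neither a target-in-$Z_k$ state nor in $\apre(Z_k,X^*_k)$, so for every action either the next observation escapes $Z_k$ with positive probability, or the support of the successor distribution misses $X^*_k$ entirely; iterating, a play confined to $\wb{\obsmap}^{-1}(Z_k)$ must stay confined to $\wb{\obsmap}^{-1}(Z_k)\setminus X^*_k$, and since $\wb{T}\cap\obsmap^{-1}(Z_k)\subseteq X^*_k$ it never meets $\wb{T}$. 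Using that $\wb{\game}$ is a belief-observation POMDP, the ``positive probability of being in state $\wb{s}$'' argument is legitimate: the belief after any consistent history with last observation $\wb{o}$ is all of $\wb{\obsmap}^{-1}(\wb{o})$, so the bad state is genuinely reachable with positive probability.

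The main obstacle I anticipate is handling the alternation of the two fixpoints carefully: the induction hypothesis is stated at the level of the outer approximants $Z_k$, but the escape argument within a single step must simultaneously track the inner least fixpoint $X^*_k$ and argue that a play either falls out of $Z_k$ (invoking the hypothesis) or is trapped away from $\wb{T}$ inside $Z_k$. One has to be precise that ``trapped away from $X^*_k$'' is an invariant: if the current state is in $\wb{\obsmap}^{-1}(Z_k)\setminus X^*_k$, then for the action the strategy plays, either there is positive probability of leaving $\wb{\obsmap}^{-1}(Z_k)$, or all successors that stay in $\wb{\obsmap}^{-1}(Z_k)$ are again outside $X^*_k$ (otherwise the state would be in $\apre(Z_k,X^*_k)\subseteq X^*_k$). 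Combining this invariant with a standard argument that an infinite play in a finite Markov chain that always has positive probability of some ``bad'' event will realize such an event with positive probability (Property~1 of Markov chains) closes the step. Everything else — the fixpoint approximant bookkeeping and the appeal to the belief-observation property — is routine.
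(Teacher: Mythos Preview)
Your approach differs from the paper's. The paper does not descend through the approximants $Z_0\supseteq Z_1\supseteq\cdots$; instead it shows directly that $W^*:=\almostm(\Reach(\wb{T}))$ is itself a (post-)fixpoint of $Z\mapsto\obscover(\mu X.\,((\wb{T}\cap\wb{\obsmap}^{-1}(Z))\cup\apre(Z,X)))$ and then invokes that $Z^*$ is the greatest fixpoint. Concretely, letting $X^*$ be the inner least fixpoint with $Z=W^*$, the paper assumes $\obscover(X^*)\subsetneq W^*$, takes a witness almost-sure winning strategy $\straa^*$ from $W^*$, and derives a contradiction via the same two-case split you use: either $\straa^*$ plays some action outside $\allow(\cdot,W^*)$, or from states in $\wb{\obsmap}^{-1}(W^*)\setminus X^*$ only $\allow$-actions are played and the play is trapped away from $\wb{T}$. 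This buys a one-shot argument instead of an induction, and the ``escape'' case is cleaner because leaving $W^*$ is, by definition of $W^*$, immediately losing.

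There is a genuine gap in your dichotomy as stated. You claim that from $\wb{s}\in\wb{\obsmap}^{-1}(Z_k)\setminus X^*_k$, under $\wb{\straa}$, with positive probability the play either leaves $\wb{\obsmap}^{-1}(Z_k)$ or stays forever in $\wb{\obsmap}^{-1}(Z_k)\setminus X^*_k$. But this is not exhaustive: if $\wb{\straa}$ plays some $\wb{a}\notin\allow(\wb{\obsmap}(\wb{s}),Z_k)$, the definition of $\allow$ only guarantees that \emph{some other} state $\wb{s}'$ with observation $\wb{\obsmap}(\wb{s})$ escapes $Z_k$; from $\wb{s}$ itself the successor may well stay in $\wb{\obsmap}^{-1}(Z_k)$ and land inside $X^*_k$ (nothing in $\wb{s}\notin\apre(Z_k,X^*_k)$ forbids this for non-$\allow$ actions). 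So your invariant ``a play confined to $Z_k$ stays in $Z_k\setminus X^*_k$'' fails, and the conclusion that $\wb{\straa}$ fails \emph{from $\wb{s}$} does not follow. The induction hypothesis also does not rescue the escape branch directly: it tells you that \emph{some} state of the escaped-to observation is losing for $\wb{\straa}$, not the particular state your play reaches. The fix is exactly what the paper exploits: use the belief-observation property to argue that whenever a successor observation $\wb{o}'$ is reachable under $\wb{a}$ from the current observation, then \emph{every} state of $\wb{\obsmap}^{-1}(\wb{o}')$ is reachable from \emph{some} state of the current observation, so a failure anywhere in $\wb{\obsmap}^{-1}(\wb{o}')$ propagates back. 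Once you make that step explicit, your inductive argument goes through---but at that point it is essentially the paper's fixpoint argument unrolled along the $Z_k$.
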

	\begin{proof}
	Let $W^* = \almostm(\Reach(\wb{T}))$.
	We first show that $W^*$ is a fixpoint of the function 
	\[
	f(Z)= \obscover (\mu X. ((\wb{T} \cap \obsmap^{-1}(Z) ) \cup
	\apre(Z,X))),
	\] 
	i.e., 
	we will show that $W^*= \obscover(\mu X. ((\wb{T} \cap \obsmap^{-1}(W^*) ) \cup \apre(W^*,X)))$ . 
	As $Z^*$ is the greatest fixpoint it will follow that $W^* \subseteq Z^*$.
	Let 
	\[
	X^* = (\mu X.((\wb{T} \cap \obsmap^{-1}(W^*) ) \cup \apre(W^*,X))),
	\]
	and $\wh{X}^*=\obscover(X^*)$.
	Note that by definition we have $X^* \subseteq \wb{\obsmap}^{-1}(W^*)$ as the inner 
	fixpoint computation only computes states that belong to $\wb{\obsmap}^{-1}(W^*)$.
	Assume towards contradiction that $W^*$ is not a fixpoint, i.e., 
	$\wh{X}^*$ is a strict subset of $W^*$. 
	For all states $\wb{s}\in \wb{\obsmap}^{-1}(W^*) \setminus X^*$, 
	for all actions $\wb{a} \in \allow(\wb{\obsmap}(\wb{s}),W^*)$ we have 
	$\supp(\wb{\trans}(\wb{s},\wb{a})) \subseteq (\wb{\obsmap}^{-1}(W^*)\setminus X^*)$. 
	Consider any randomized memoryless almost-sure winning strategy $\straa^*$ 
	from $W^*$ and we consider two cases:
	\begin{enumerate}
	\item Suppose there is a state $\wb{s} \in \wb{\obsmap}^{-1}(W^*)\setminus X^*$ 
	such that an action that does not belong to $\allow(\wb{\obsmap}(\wb{s}),W^*)$ 
	is played with positive probability by $\straa^*$.
	Then with positive probability the observations from $W^*$ are left 
	(because from some state with same observation as $\wb{s}$ an observation in the
	complement of $W^*$ is reached with positive probability).
	Since from the complement of $W^*$ there is no randomized memoryless almost-sure winning 
	strategy (by definition), it contradicts that $\straa^*$ is an almost-sure winning 
	strategy from $W^*$.
	\item Otherwise for all states $\wb{s} \in \wb{\obsmap}^{-1}(W^*)\setminus X^*$ 
	the strategy $\straa^*$ plays only actions in $\allow(\wb{\obsmap}(\wb{s}),W^*)$, 
	and then the probability to reach $X^*$ is zero, i.e., $\safe(\wb{\obsmap}^{-1}(W^*)\setminus X^*)$ 
	is ensured. 
	Since all target states in $\wb{\obsmap}^{-1}(W^*)$ belong to $X^*$ 
	(they get included in iteration~0 of the fixpoint computation) 
	it follows that $(\wb{\obsmap}^{-1}(W^*)\setminus X^*) \cap \wb{T} =\emptyset$, and 
	hence $\safe(\wb{\obsmap}^{-1}(W^*) \setminus X^*)
	\cap \Reach(\wb{T})=\emptyset$,
	and we again reach a contradiction that $\straa^*$ is an almost-sure
	winning strategy. 
	\end{enumerate}
	It follows that $W^*$ is a fixpoint, and thus we get that $W^* \subseteq Z^*$.
	\end{proof}

	\begin{lemma}
	\label{lem:ysubb}
	$Z^* \subseteq \almostm(\Reach(\wb{T}))$.
	\end{lemma}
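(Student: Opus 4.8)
The plan is to produce a single randomized memoryless strategy $\straa^*$ that is almost-sure winning for $\Reach(\wb{T})$ from every state whose observation belongs to $Z^*$; since $\almostm$ is defined observation-wise, this immediately gives $Z^*\subseteq\almostm(\Reach(\wb{T}))$. Because $Z^*$ is a greatest fixpoint it is in particular a fixpoint of the map, so writing $X^*=\mu X.((\wb{T}\cap\obsmap^{-1}(Z^*))\cup\apre(Z^*,X))$ we have $Z^*=\obscover(X^*)$; consequently $\obsmap^{-1}(Z^*)\subseteq X^*$, i.e. every state $\wb{s}$ with $\obsmap(\wb{s})\in Z^*$ lies in $X^*$ and therefore has a finite \emph{rank}, the least index $i$ with $\wb{s}\in X_i$ in the iteration $X_0=\wb{T}\cap\obsmap^{-1}(Z^*)$, $X_{i+1}=X_i\cup\apre(Z^*,X_i)$. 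Note also that $X^*\subseteq\obsmap^{-1}(Z^*)$.

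First I would define $\straa^*$: in every observation $\wb{o}\in Z^*$ with $\allow(\wb{o},Z^*)\neq\emptyset$ play the uniform distribution over $\allow(\wb{o},Z^*)$, and play arbitrarily elsewhere. The key observation to record is that whenever $\wb{o}\in Z^*$ contains a state $\wb{s}$ of rank $r\geq 1$ we have $\wb{s}\in\apre(Z^*,X_{r-1})$, which by definition of $\apre$ forces $\allow(\wb{o},Z^*)\neq\emptyset$; hence $\allow(\wb{o},Z^*)$ is empty for an observation $\wb{o}\in Z^*$ only when every state with observation $\wb{o}$ already lies in $\wb{T}$, a situation irrelevant to reachability. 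In particular, at any state $\wb{s}$ with $\obsmap(\wb{s})\in Z^*$ and $\wb{s}\notin\wb{T}$ the strategy $\straa^*$ plays only actions in $\allow(\obsmap(\wb{s}),Z^*)$, so the successor again has observation in $Z^*$; inductively, a $\straa^*$-play starting in $\obsmap^{-1}(Z^*)$ stays in $\obsmap^{-1}(Z^*)$ until $\wb{T}$ is reached.

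Next I would show by induction on the rank that from every $\wb{s}$ with $\obsmap(\wb{s})\in Z^*$ there is, under $\straa^*$, a path to $\wb{T}$ of length at most $\mathrm{rank}(\wb{s})$ staying inside $\obsmap^{-1}(Z^*)$ and of probability at least $\big(\tfrac{1}{|\wb{\act}|}\cdot\delta_{\min}\big)^{\mathrm{rank}(\wb{s})}$, where $\delta_{\min}$ is the least positive transition probability. Rank $0$ is immediate. For rank $r\geq 1$, $\wb{s}\in\apre(Z^*,X_{r-1})$ yields an action $\wb{a}\in\allow(\obsmap(\wb{s}),Z^*)$ with some successor $\wb{s}'\in X_{r-1}\subseteq\obsmap^{-1}(Z^*)$; $\straa^*$ plays $\wb{a}$ with probability $\geq 1/|\wb{\act}|$ and then reaches $\wb{s}'$ with probability $\geq\delta_{\min}$, and $\wb{s}'$ has rank $\leq r-1$, so the induction hypothesis applies. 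Consequently, from every state in $\obsmap^{-1}(Z^*)$ the set $\wb{T}$ is reached within $|\wb{S}|$ steps with probability at least $p_0:=\big(\tfrac{1}{|\wb{\act}|}\delta_{\min}\big)^{|\wb{S}|}>0$. Combining this with the invariance of $\obsmap^{-1}(Z^*)$ under $\straa^*$ until $\wb{T}$ is hit, the Markov chain $\wb{\game}\restr_{\straa^*}$ (with $\wb{T}$ made absorbing) has, starting from any state with observation in $Z^*$, a positive-probability path to $\wb{T}$ from every reachable state, so by Property~1(a) of Markov chains $\wb{T}$ is reached with probability~$1$; this proves $Z^*\subseteq\almostm(\Reach(\wb{T}))$.

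The step I expect to be the main obstacle is reconciling the observation-based (hence state-independent within an observation) choice of $\straa^*$ with the state-based rank function: two states sharing an observation may have different ranks, and the action that lowers the rank of one may raise that of the other. The way around this — which I would make sure to state explicitly — is that we never claim monotone rank decrease along all branches; we only need that from every reachable non-target state some single branch lowers the rank with probability bounded below by $\tfrac{1}{|\wb{\act}|}\delta_{\min}$, together with the two facts that $Z^*=\obscover(X^*)$ puts all of $\obsmap^{-1}(Z^*)$ inside $X^*$ (so all these states genuinely have ranks) and that $\allow(\cdot,Z^*)$-play keeps the play inside $\obsmap^{-1}(Z^*)$, so the bounded-progress estimate can be iterated indefinitely.
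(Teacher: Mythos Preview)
Your proposal is correct and follows essentially the same approach as the paper: both define $\straa^*$ as the uniform distribution over $\allow(\wb{o},Z^*)$ for $\wb{o}\in Z^*$, use the resulting $\obsmap^{-1}(Z^*)$-invariance, and exploit the rank structure of the inner least fixpoint $X^*$ to argue that $\wb{T}$ is reached almost surely. The only cosmetic difference is the concluding mechanism---the paper shows every reachable recurrent class of $\wb{\game}\restr_{\straa^*}$ must intersect $X_0\subseteq\wb{T}$ (by descending through the ranks inside the class), whereas you give an explicit lower bound on the probability of a rank-decreasing path and invoke Property~1(a) directly; these are equivalent ways of packaging the same rank argument.
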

	\begin{proof}
	Since the goal is to reach the set $\wb{T}$, wlog we assume the set $\wb{T}$ to be absorbing.
	We define a randomized memoryless strategy $\straa^*$ for the objective $\almostm(\Reach(\wb{T}))$ 
	as follows: for an observation $\wb{o} \in Z^*$, play all actions from the set 
	$\allow(\wb{o}, Z^*)$ uniformly at random.
	Since the strategy $\straa^*$ plays only actions
	in $\allow(\wb{o}, Z^*)$, for $\wb{o} \in Z^*$, it ensures that the set of states 
	$\wb{\obsmap}^{-1}(Z^*)$ is not left, (i.e., $\safe(\wb{\obsmap}^{-1}(Z^*))$ is ensured).
	We now analyze the computation of the inner fixpoint, i.e., analyze the computation 
	of $\mu X.((\wb{T} \cap \wb{\obsmap}^{-1}(Z^*) ) \cup \apre(Z^*,X)))$
	as follows:
	\begin{itemize}
	\item $X_0 = 
	(\wb{T} \cap \wb{\obsmap}^{-1}(Z^*) ) \cup \apre(Z^*,\emptyset)))
	= \wb{T} \cap \wb{\obsmap}^{-1}(Z^*) \subseteq \wb{T}$ (since  $\apre(Z^*,\emptyset)$ is $\emptyset$);
	\item $X_{i+1} = (\wb{T} \cap \wb{\obsmap}^{-1}(Z^*)) \cup
	\apre(Z^*,X_i)))$
	\end{itemize}
	Note that we have $X_0 \subseteq \wb{T}$.
	For every state $\wb{s}_j \in X_j$ the set of played actions $\allow(\wb{\obsmap}(\wb{s}_j),
	Z^*)$ contains an action $\wb{a}$ such that $\supp(\wb{\trans}(\wb{s}_j,\wb{a})) \cap X_{j-1}$ is non-empty.
	Let $C$ be an arbitrary reachable recurrent class in the Markov chain $\game \restr
	\straa^*$ reachable from a state in $\wb{\obsmap}^{-1}(Z^*)$.
	Since  $\safe(\wb{\obsmap}^{-1}(Z^*))$ is ensured, it follows that $C \subseteq \wb{\obsmap}^{-1}(Z^*)$.
	Consider a state in $C$ that belongs to $X_j \setminus X_{j-1}$ for $j \geq 1$.
	Since the strategy ensures that for some action $\wb{a}$ played with positive 
	probability we must have $\supp(\wb{\trans}(\wb{s}_j,\wb{a})) \cap X_{j-1} \neq \emptyset$,
	it follows that $C \cap X_{j-1} \neq \emptyset$.
	Hence by induction $C \cap X_0 \neq \emptyset$.
	It follows $C \cap \wb{T} \neq \emptyset$.
	Hence all reachable recurrent classes $C$ that intersect with $Z^*$ are contained in $Z^*$, but 
	not contained in $Z^*\setminus \target$, i.e., all reachable recurrent classes are 
	the absorbing states in $\target$.
	Thus the strategy $\straa^*$ ensures that $\wb{T}$ is reached with probability~1.
	Thus we have $Z^* \subseteq \almostm(\Reach(\wb{T}))$. 
	\end{proof}

	\begin{lemma}
	The set $\almostm(\Reach(\wb{T}))$ can be computed 
	in quadratic time for belief-observation POMDPs, for target set $\wb{T} \subseteq \wb{S}$.
	\end{lemma}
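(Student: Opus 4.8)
The plan is to rely on the fixpoint characterization proved in the two preceding lemmas, namely that $\almostm(\Reach(\wb{T})) = Z^*$ where $Z^* = \nu Z.\,\obscover(\mu X.((\wb{T}\cap\obsmap^{-1}(Z))\cup\apre(Z,X)))$, and to argue that this nested fixpoint can be evaluated with only a quadratic amount of work in the size of $\wb{\game}$.

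First I would bound the outer greatest fixpoint. Starting from $Z_0 = \wb{\obs}$, the sequence $Z_0 \supseteq Z_1 \supseteq \cdots$ produced by iterating $f(Z)=\obscover(\mu X.((\wb{T}\cap\obsmap^{-1}(Z))\cup\apre(Z,X)))$ is monotone decreasing: since the inner fixpoint $X^\ast$ is always a subset of $\obsmap^{-1}(Z)$, any observation all of whose states lie in $X^\ast$ already belongs to $Z$, so $f(Z)\subseteq Z$. Hence in every step that is not already a fixpoint at least one observation is removed, and the outer computation stabilises after at most $|\wb{\obs}|\le|\wb{S}|$ iterations.

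Second, for a fixed set $Z$ of observations I would show that the inner least fixpoint $X^\ast = \mu X.((\wb{T}\cap\obsmap^{-1}(Z))\cup\apre(Z,X))$ is computable in time linear in the size of $\wb{\game}$ by a standard backward attractor computation: initialise a worklist with the states in $\wb{T}\cap\obsmap^{-1}(Z)$, and whenever a state enters $X$, scan its predecessors $\wb{s}$ with $\obsmap(\wb{s})\in Z$ and add $\wb{s}$ to $X$ as soon as some action $\wb{a}\in\allow(\obsmap(\wb{s}),Z)$ has a successor already in $X$; this last test is maintained incrementally by keeping, per state–action pair, the flag ``$\wb{a}$ is allowed w.r.t. $Z$'' (precomputed for the current $Z$ in one linear pass) and ``$\wb{a}$ has a successor in $X$''. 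Each edge is touched a constant number of times, so this costs $O(|\wb{\trans}|)$; applying $\obscover(\cdot)$ to $X^\ast$ afterwards is likewise linear, since it only requires, per observation, checking whether all its states lie in $X^\ast$. Thus one outer iteration runs in linear time, and combining with the previous paragraph the whole computation of $Z^\ast = \almostm(\Reach(\wb{T}))$ takes $O(|\wb{\obs}|)$ linear-time rounds, i.e., quadratic time.

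The only point requiring care is that the set of allowable actions $\allow(\cdot,Z)$ — and therefore the inner attractor itself — genuinely changes between successive outer iterations, so the inner fixpoint really has to be recomputed from scratch (with all allowed/hit flags re-initialised for the current $Z$) at the start of each outer round; this repeated recomputation is exactly what turns the overall bound from linear into quadratic, and I would make sure the bookkeeping is reset correctly so that no stale flag from a previous round can cause a state to be (wrongly) included.
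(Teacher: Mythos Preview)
Your proposal is correct and takes exactly the same approach as the paper: invoke the two preceding lemmas to identify $\almostm(\Reach(\wb{T}))$ with the nested fixpoint $Z^*$, then bound the evaluation cost. The paper's own proof is a single sentence (``Follows directly from Lemma~\ref{lem:bsuby} and Lemma~\ref{lem:ysubb}''), leaving the quadratic-time analysis implicit; your argument simply spells out the details the paper omits---that the outer $\nu$-iteration is monotone decreasing and hence terminates in at most $|\wb{\obs}|$ rounds, and that each inner $\mu$-fixpoint is a standard backward attractor computable in linear time---so there is nothing substantively different.
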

	\begin{proof}
	Follows directly from Lemma~\ref{lem:bsuby} and Lemma~\ref{lem:ysubb}. 
	\end{proof}

	\smallskip\noindent{\bf The EXPTIME-completeness.}
	In this section we first showed that given a POMDP $\game$ with a $\limavgone$
	objective we can construct an exponential size belief-observation POMDP
	$\wb{\game}$ and the computation of the almost-sure winning set for 
	$\limavgone$ objectives is reduced to the computation of the almost-sure 
	winning set for safety and reachability objectives, for which we established
	linear and quadratic time algorithms respectively.
	This gives us an $2^{O(|S| + |\act|)}$ time algorithm to decide 
	(and construct if one exists) the existence of finite-memory almost-sure
	winning strategies in POMDPs with $\limavgone$ objectives.
	The EXPTIME-hardness for almost-sure winning easily follows from the result
	of Reif for two-player partial-observation games with safety objectives~\cite{Reif84}:
	(i)~First observe that in POMDPs, if almost-sure safety is violated, then it is
	violated in a finite prefix which has positive probability, and hence for 
	almost-sure safety, the probabilistic player can be treated as an adversary. 
	This shows that the almost-sure safety problem for POMDPs is EXPTIME-hard.
	(ii)~The almost-sure safety problem reduces to almost-sure winning for 
	limit-average objectives by assigning reward~1 to safe states, reward~0
	to non-safe states and make the non-safe states absorbing.
	It follows that POMDPs with almost-sure winning for $\limavgone$ objectives
	under finite-memory strategies is EXPTIME-hard.

	\begin{theorem}
	The following assertions hold:
	\begin{enumerate}
	\item Given a POMDP $G$ with $|S|$ states, $|\act|$ actions, and a $\limavgone$ objective, the existence (and the construction if one exists) of a 
	finite-memory almost-sure winning strategy can be 
	achieved in  $2^{O(|S|+|\act|)}$ time.

	\item The decision problem of given a POMDP and a $\limavgone$ objective whether
	there exists a finite-memory almost-sure winning strategy
	is EXPTIME-complete.
	\end{enumerate}

	\end{theorem}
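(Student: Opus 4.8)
The plan is to assemble the machinery built in this section into the two claimed statements; no new ideas are needed, so the work is careful composition. For assertion~(1) I would first construct the belief-observation POMDP $\wb{\game}=\red(\game)$ together with $\wb{\reward}$ and record that its size is $2^{O(|\states|+|\act|)}$: the memory alphabet $M=2^{\states}\times\{0,1\}^{|\states|}\times\{0,1\}^{|\states|}\times 2^{\act}$ has $2^{3|\states|+|\act|}$ elements, the state and action sets of $\wb{\game}$ are polynomial in $|\states|$, $|\act|$ and $|M|$, and every side condition in the construction (membership in $M_{\Win}$, enabledness of an action, enabledness of a memory update) is a purely syntactic test on tuples, so $\wb{\game}$ is computable within the stated bound. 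Then, by Lemma~\ref{lem:fmtonm} and Lemma~\ref{lem:nmtofm}, a finite-memory almost-sure winning strategy for $\limavgone$ exists in $\game$ exactly when a randomized memoryless almost-sure winning strategy for $\limavgone$ exists in $\wb{\game}$, and the proof of Lemma~\ref{lem:nmtofm} is constructive, so a witness in $\wb{\game}$ yields a finite-memory witness in $\game$.

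It then remains to decide memoryless almost-sure winning for $\limavgone$ on the belief-observation POMDP, for which I would follow the reduction developed in this subsection: compute $\almostm(\safe(\wb{S}_{\good}))$, restrict $\wb{\game}$ to the safe sub-POMDP $\wt{\game}$, compute $\wt{\w}=\almostm(\reach(\wt{S}_{\wcs}))$ in $\wt{\game}$, and invoke the two inclusion lemmas of this subsection to conclude $\wt{\w}=\almostm(\limavgone)$; there is a finite-memory almost-sure winning strategy from $s_0$ iff the initial observation of $\wb{\game}$ belongs to this set. Since the almost-sure safety and almost-sure reachability subroutines run in linear and quadratic time in the size of their input POMDP, and that input has size $2^{O(|\states|+|\act|)}$, the entire decision procedure, including the extraction of a witness finite-memory strategy, runs in time $2^{O(|\states|+|\act|)}$, which gives assertion~(1).

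For assertion~(2), membership in EXPTIME is immediate from assertion~(1), so the content is EXPTIME-hardness, which I would obtain by reducing from the sure-winning problem for two-player partial-observation safety games, EXPTIME-complete by Reif~\cite{Reif84}. Given such a game, replace the adversary's moves by uniformly random ones, obtaining a POMDP with a safety objective $\safe(\wb{F})$, where $\wb{F}$ is the set of safe states. The faithfulness claim is that the protagonist wins surely in the game iff a \emph{finite-memory} almost-sure winning strategy for $\safe(\wb{F})$ exists in the POMDP: the forward direction is immediate since a sure-winning belief-based strategy keeps every play in $\wb{F}$; for the converse, any almost-sure winning strategy for $\safe(\wb{F})$, even infinite-memory, can have no play in its support that leaves $\wb{F}$ (a violation would occur along a finite prefix of positive probability, contradicting probability~$1$), so choosing at each history an action in the support gives an observation-based pure strategy that surely wins in the game, and since belief-based, hence finite-memory, strategies suffice for partial-observation safety games, a finite-memory witness exists. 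Finally I would reduce almost-sure safety to almost-sure $\limavgone$ by assigning reward~$1$ to states in $\wb{F}$, reward~$0$ to states outside $\wb{F}$, and making the latter absorbing: a play staying in $\wb{F}$ has payoff~$1$, a play leaving $\wb{F}$ is trapped in a reward-$0$ absorbing state, so by Lemma~\ref{lem:rec_class} a strategy is almost-sure winning for $\limavgone$ iff it is almost-sure winning for $\safe(\wb{F})$, and this reduction preserves finite memory. Chaining the two reductions yields EXPTIME-hardness, and with membership, EXPTIME-completeness.

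The only points I expect to require genuine care, everything else being bookkeeping, are (a) the size accounting, i.e. checking that $\wb{\game}$ really has size $2^{O(|\states|+|\act|)}$ so that composing the polynomial subroutines stays within $2^{O(|\states|+|\act|)}$; and (b) tracking finite memory through the hardness reduction, i.e. ensuring that "almost-sure winning for safety by some strategy" collapses to "sure winning in the game" and comes back as a finite-memory almost-sure winning strategy, so that the hardness is established for the finite-memory problem and not merely for the general one.
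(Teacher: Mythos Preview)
Your proposal is correct and follows essentially the same approach as the paper: assemble the exponential-size belief-observation POMDP $\wb{\game}$, invoke Lemmas~\ref{lem:fmtonm} and~\ref{lem:nmtofm}, reduce $\almostm(\limavgone)$ to the safety and reachability computations, and for hardness reduce from Reif's partial-observation safety games via the ``almost-sure safety $=$ sure safety'' observation and then encode safety as $\limavgone$ by making non-safe states absorbing with reward~$0$. If anything, your treatment of point~(b)---tracking that the hardness reduction actually lands on the \emph{finite-memory} problem---is more explicit than the paper's, which simply asserts it.
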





\section{Finite-memory strategies with Quantitative Constraint}
We will show that the problem of deciding whether there exists a finite-memory 
(as well as an infinite-memory) almost-sure winning strategy for the objective 
$\limavghalf$ is undecidable. 
We present a reduction from the standard undecidable problem for probabilistic 
finite automata (PFA). 
A PFA $\automaton = (\states, \act, \trans,F,s_0)$ is a special case of a POMDP 
$\game = (\states,\act,\trans,\obs,\obsmap,s_0)$ with a single observation 
$\obs=\set{o}$ such that for all states $s\in \states$ we have $\obsmap(s)=o$.
Moreover, the PFA proceeds for only finitely many steps, and has a set $F$ of 
desired final states.
The \emph{strict emptiness problem} asks for the existence of a strategy $w$ 
(a finite word over the alphabet $\act$) such that the measure of the runs 
ending in the desired final states $F$ is strictly greater than $\frac{1}{2}$;
and the strict emptiness problem for PFA is undecidable~\cite{PazBook}.


\smallskip\noindent{\bf Reduction.}
Given a PFA $\automaton = (\states, \act, \trans,F,s_0)$ 
we construct a POMDP $\game = (\states',\act',\trans', \obs, \obsmap,s'_0)$ 
with a Boolean reward function $\reward$ such that there exists a word 
$w \in \act^*$  accepted with probability strictly greater than $\frac{1}{2}$ 
in $\automaton$ iff there exists a finite-memory almost-sure winning 
strategy in $\game$ for the objective $\limavghalf$.
Intuitively, the construction of the POMDP $\game$ is as follows: for every 
state $s \in \states$ of $\automaton$ we construct a pair of states $(s,1)$ and 
$(s,0)$ in $\states'$ with the property 
that $(s,0)$ can only be reached with a new action~$\$$ (not in $\act$) 
played in state $(s,1)$. The transition function $\trans'$ from the state 
$(s,0)$ mimics the transition function $\trans$, i.e., 
$\trans'((s,0),a)((s',1)) = \trans(s,a)(s')$.
The reward $\reward$ of $(s,1)$ (resp. $(s,0)$) is $1$ (resp. $0$), ensuring 
the average of the pair to be $\frac{1}{2}$. 
We add a new available action $\#$ that when played in a final state reaches a 
state $\good \in \states'$ with reward $1$, and when played in a non-final state
reaches a state $\bad \in \states'$ with reward $0$, 
and for states $\good$ and $\bad$ given action $\#$ the next state is the 
initial state.
An illustration of the construction on an example is depicted on Figure~\ref{fig:incompatible}.
Whenever an action is played in a state where it is not available, the POMDP reaches a loosing absorbing state, 
i.e., an absorbing state with reward $0$, and for brevity we omit transitions
to the loosing absorbing state.
The formal construction of the POMDP $\game$ is as follows:
\begin{itemize}
\item $\states' = (\states \times \{0,1\}) \cup \{\good,\bad\}$,

\item $s'_0 = (s_0,1)$,

\item $\act' = \act \cup \{\#,\$\}$,

\item The actions $a\in \act\cup \{\#\}$ in states $(s,1)$ (for $s \in S$) 
lead to the loosing absorbing state; the action $\$$ in states $(s,0)$ (for 
$s \in S$) leads to the loosing absorbing states; and the actions 
$a\in \act\cup \{\$\}$ in states $\good$ and $\bad$ lead 
to the loosing absorbing state.
The other transitions are as follows:
For all $s \in \states$: 
(i)~$\trans'((s,1),\$)((s,0)) = 1$, 
(ii)~for all $a \in \act$ we have $\trans'((s,0),a)((s',1)) = \trans(s,a)(s')$,
and 
(iii)~for action $\#$ we have 
	 $$
	\trans'((s,0),\#)(\good) = 
	\begin{cases}
	1 & \text{ if $s \in F$} \\
	0 & \text{ otherwise}
	\end{cases} \hspace{2em} 
	\trans'((s,0),\#)(\bad) = 
	\begin{cases}
	1 & \text{ if $s \not \in F$} \\
	0 & \text{ otherwise}
	\end{cases}$$
	$$\trans'(\good,\#)(s'_0) = 1 \hspace{2em} \trans'(\bad,\#)(s'_0) = 1,$$
	\item there is a single observation $\obs = \{o\}$, and all the states $s \in \states'$ have $\obsmap(s)=o$.
	\end{itemize}
We define the Boolean reward function $\reward$ only as a function of the 
state, i.e., $\reward:\states' \rightarrow \{0,1\}$ and show the 
undecidability even for this special case of reward functions. 
For all $s \in \states$ the reward is $ \reward((s,0)) = 0$, and similarly 
$\reward((s,1))=1$, and the remaining two states have rewards 
$\reward(\good) =1$ and $\reward(\bad) = 0$.
Note that though the rewards are assigned as function of states,
the rewards appear on the transitions.
We now establish the correctness of the reduction.

	\begin{figure}[ht]
			\begin{center}
			\begin{tikzpicture}[>=latex]
				\tikzstyle{every node}=[font=\small]

				\node[Player1,initial,initial text=]  (s0) {$s_0$};
				\node[Final, right of=s0,xshift=10]  (a) {$s$};
				\draw[->]{
					(s0) edge[] node[auto] {a} (a)
					(a) edge[loop] node[above] {b} (a)
				};

				\node[Player1,initial,initial text=,right of=a, xshift=50] (s01) {$s_0,1$};
				\node[Player1,right of=s01,xshift=30] (s00) {$s_0,0$};
				\node[Player1,right of=s00,xshift=30] (s11) {$s,1$};
				\node[Player1,right of=s11,xshift=30] (s10) {$s,0$};
				\node[Player1,inner sep=0.01cm,above of=s00,yshift=20] (good) {$\good$};
				\node[Player1,inner sep=0.1cm,below of=s00,yshift=-20] (bad) {$\bad$};

				\draw[->]{
					(s01) edge[] node[auto] {$\$$} (s00)
					(s00) edge[] node[auto] {$a$} (s11)
					(s11) edge[bend left] node[auto] {$\$$} (s10)
					(s10) edge[bend left] node[auto] {b} (s11)
					(s10) edge[bend right] node[right] {$\#$} (good)
					(good) edge[bend right] node[left] {$\#$} (s01)
					(s00) edge[] node[right] {$\#$} (bad)
					(bad) edge[bend left] node[left] {$\#$} (s01)
				};
			\end{tikzpicture}
					\caption{Transformation of the PFA $\automaton$ to a POMDP $\game$}
			\label{fig:incompatible}
			\end{center}
	\end{figure}
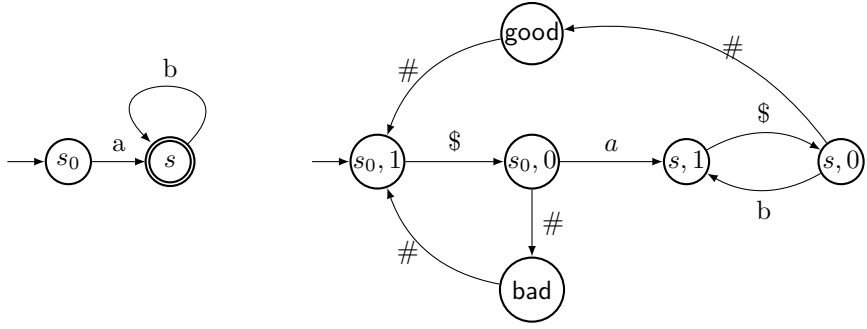

\begin{lemma}\label{lem:finquant_1}
If there exists a word $w\in\act^*$ accepted with probability strictly greater 
than $\frac{1}{2}$ in $\automaton$, then there exists a pure finite-memory 
almost-sure winning strategy in the POMDP $\game$ for the objective 
$\limavghalf$.
\end{lemma}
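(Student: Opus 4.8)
The plan is to exhibit an explicit pure finite-memory strategy $\straa$ that cyclically replays the witness word $w = a_1 a_2 \cdots a_k$, interleaving the special action $\$$ so that every letter of $w$ is read from a ``$0$-copy'' state, then tests acceptance with $\#$, and returns to $(s_0,1)$ to start over. Formally, let $\straa$ have memory $M = \set{0,1,\ldots,2k+2}$ with initial memory $0$; the memory update increments $m \mapsto m+1$ for $m < 2k+2$ and resets $2k+2 \mapsto 0$ (the observation plays no role since $\obs$ is a singleton); the action selection plays $\$$ at the even positions $0,2,\ldots,2k$, plays $a_j$ at position $2j-1$ for $1 \le j \le k$, and plays $\#$ at positions $2k+1$ and $2k+2$. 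First I would check that this strategy never enters the losing absorbing state: at the even positions $0,\ldots,2k$ the current state has the form $(s,1)$, where $\$$ is the only available action and $\$$ is played; at the odd positions $1,\ldots,2k-1$ the current state is some $(s,0)$ and a letter of $\act$ is played; at position $2k+1$ the state is $(s,0)$ and $\#$ is played; at position $2k+2$ the state is $\good$ or $\bad$ and $\#$ is played. Hence the induced play stays inside the ``good'' part of $\game$ forever.

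Next I would describe the Markov chain $\game\restr_\straa$. From $((s_0,1),0)$ each \emph{cycle} of $2k+3$ steps visits $((s_0,1),0), ((s_0,0),1), ((s_1,1),2), ((s_1,0),3), \ldots, ((s_k,1),2k), ((s_k,0),2k+1)$, then $(\good,2k+2)$ with probability $p_w$ or $(\bad,2k+2)$ with probability $1-p_w$, and then returns deterministically to $((s_0,1),0)$; here $p_w \in (\tfrac12,1]$ is the acceptance probability of $w$ in $\automaton$, because the random states $s_1,\ldots,s_k$ visited along a cycle are distributed exactly as the successive states of $\automaton$ reading $a_1,\ldots,a_k$ from $s_0$, so $\Pr[s_k\in F]=p_w$. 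Crucially, since $\reward$ depends only on the state with $\reward((s,1))=1$ and $\reward((s,0))=0$ for every $s$, the reward string collected in one cycle is the \emph{fixed} word $1,0,1,0,\ldots,1,0$ ($k+1$ ones and $k+1$ zeros, over positions $0$ through $2k+1$) followed by a single bit equal to $1$ iff $\good$ was reached in that cycle. Because each cycle restarts from the deterministic state $((s_0,1),0)$, the acceptance bits $B_1,B_2,\ldots$ of successive cycles are i.i.d.\ Bernoulli$(p_w)$.

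Then I would run a law-of-large-numbers computation. After $n$ complete cycles, i.e.\ $n(2k+3)$ steps, the total reward is $n(k+1)+\sum_{i=1}^n B_i$, so the running average equals $\frac{n(k+1)+\sum_{i=1}^n B_i}{n(2k+3)}$, which by the strong law of large numbers converges almost surely to $\frac{(k+1)+p_w}{2k+3}$. Since rewards lie in $[0,1]$ and the cycle length $2k+3$ is fixed, for $n(2k+3)\le m<(n+1)(2k+3)$ the average of the first $m$ rewards differs from the average of the first $n(2k+3)$ rewards by $O(1/n)$; hence $\limavg(\reward,\rho)=\frac{(k+1)+p_w}{2k+3}$ with probability $1$. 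Finally, $\frac{(k+1)+p_w}{2k+3}>\frac12$ iff $2(k+1)+2p_w>2k+3$ iff $2p_w>1$, which holds precisely because $p_w>\frac12$. Therefore $\prb^{\straa}(\limavghalf)=1$, and $\straa$ is pure, finite-memory (of size $2k+3$), and almost-sure winning, as required.

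I do not expect a real obstacle here; the only point needing care is the observation that the reward string within a cycle has a deterministic prefix, so that the sole source of randomness affecting the long-run average is the i.i.d.\ sequence of acceptance outcomes --- this is exactly what reduces the claim to the strong law of large numbers together with the arithmetic equivalence $\frac{(k+1)+p_w}{2k+3}>\frac12 \iff p_w>\frac12$.
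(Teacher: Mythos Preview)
Your proposal is correct and follows essentially the same approach as the paper: construct the periodic strategy that interleaves $\$$ with the letters of $w$, then plays $\#\#$ to test acceptance and restart, and appeal to the strong law of large numbers on the i.i.d.\ per-cycle acceptance bits. Your bookkeeping is in fact slightly more careful than the paper's (you correctly include the final $\$$ before $\#\#$ and obtain cycle length $2k+3$ with limit $\frac{k+1+p_w}{2k+3}$), but the argument is the same.
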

\begin{proof}
Let $w \in \act^*$ be a word accepted in $\automaton$ with probability  
$\mu > \frac{1}{2}$ and let the length of the word be $|w| = n$. 
We construct a pure finite-memory almost-sure winning strategy for the 
objective $\limavghalf$ objective in the POMDP $\game$ as follows: 
We denote by $w[i]$ the $i^{th}$ action in the word $w$. The finite-memory strategy we construct is specified as an ultimately periodic 
word  $( \$ \: w[1] \: \$ \: w[2] \ldots \: \$ w[n]  \: \# \: \#)^{\omega}$.
Observe that by the construction of the POMDP $\game$, the sequence of rewards (that appear on 
the transitions) is $(10)^n$ followed by (i)~$1$ with probability $\mu$ (when 
$F$ is reached), and (ii)~$0$ otherwise; and the whole sequence is repeated 
ad infinitum.
Also observe that once the pure finite-memory strategy is fixed we obtain a 
Markov chain with a single recurrent class since the starting state belongs to 
the recurrent class and all states reachable from the starting state 
form the recurrent class.
We first establish the almost-sure convergence of the sequence of partial 
averages of the periodic blocks, and then of the sequences inside the 
periodic blocks as well.

\smallskip\noindent{\em Almost-sure convergence of periodic blocks.}
Let $r_1, r_2, r_3, \ldots $ be the infinite sequence of rewards and 
$s_j  = \frac{1}{j} \cdot \sum_{i=1}^{j}r_i$. 
The infinite sequence of rewards can be partitioned into blocks of length 
$2\cdot n+1$, intuitively corresponding to the transitions of a single run on the word 
$(\$ \: w[1] \: \$ \: w[2] \ldots \: \$ w[n] \: \# \: \#)$. 
We define a random variable $X_i$ denoting average of rewards of the $i^{th}$ 
block in the sequence, i.e., with probability $\mu$ for all $i$ the value of 
$X_i$ is $\frac{n+1}{2\cdot n+1}$ and with probability $1-\mu$ the value is 
$\frac{n}{2\cdot n+1}$. 
The expected value of $X_i$ is therefore equal to $\Exp(X_i)=\frac{\mu+n}{2\cdot n+1}$, 
and as we have that $\mu > \frac{1}{2}$ it follows that $\Exp(X_i) > \frac{1}{2}$.
The fact that we have a single recurrent class and after the $\#\#$ the 
initial state is reached implies that the random variable sequence 
$(X_i)_{i\geq 0}$ is an infinite sequence of i.i.d's. 
By the Strong Law of Large Numbers (SLLN)~\cite[Theorem~7.1, page~56]{Durrett} 
we have that 
$$\prb\left(\lim_{j \rightarrow \infty} \frac{1}{j}(X_1 + X_2 + \ldots + X_j) = \frac{\mu+n}{2\cdot n+1}\right) = 1$$

\smallskip\noindent{\em Almost-sure convergence inside the periodic blocks.}
It follows that with probability $1$ the $\limavg$ of the partial averages on blocks is strictly greater than $\frac{1}{2}$. 
As the blocks are of fixed length $2\cdot n+1$, if we look at the sequence of 
averages at every $2\cdot n+1$ step, i.e., the sequence $s_{j\cdot (2n+1)}$ 
for $j > 0 $, we have that this sequence converges with probability $1$ 
to a value strictly greater than $\frac{1}{2}$. 
It remains to show that all $s_i$ converge to that value. 
As the elements of the subsequence converging with probability $1$ are always separated by exactly $2\cdot n +1$ elements (i.e.,
constant number of elements) and due to the definition of $\limavg$ the deviation introduced by these $2\cdot n +1$ 
elements ultimately gets smaller than any $\epsilon > 0$ as the length of the path increases 
(the average is computed from the whole sequence so far, and deviation caused
by a fixed length is negligible as the length increases). 
Therefore the whole sequence $(s_i)_{i >0}$ converges to a value strictly greater than $\frac{1}{2}$ with probability~1.
It follows that the strategy ensures $\limavghalf$ with probability~1.
\end{proof}

In the next two lemmas we first show the other direction for pure finite-memory strategies and then extend the result to the class of randomized infinite-memory
strategies.

\begin{lemma}
If there exists a pure finite-memory almost-sure winning strategy in the POMDP 
$\game$ for the objective $\limavghalf$, then there exists a word $w \in \act^*$
accepted with probability strictly greater than $\frac{1}{2}$ in $\automaton$ .
\end{lemma}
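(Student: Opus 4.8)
The plan is to exploit that $\game$ has only one observation, so that a pure finite-memory strategy is just a deterministic machine emitting a fixed, eventually periodic infinite word over $\act' = \act \cup \set{\#,\$}$; almost-sure winning will then force this word into a clean block form, and each block will turn out to be an independent simulation of $\automaton$ on a fixed finite input. First I would normalise the strategy: if $\straa = (\straa_u,\straa_n,M,m_0)$ is a pure finite-memory almost-sure winning strategy, then since $|\obs| = 1$ the memory update depends only on the current memory element and the played action, so the memory sequence $m_0,m_1,m_2,\dots$ evolves deterministically and is eventually periodic, and hence the emitted action sequence $\rho = \straa_n(m_0)\,\straa_n(m_1)\cdots$ is a fixed eventually periodic word $\rho = u\cdot v^{\omega}$ for finite words $u,v$.

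The second step is to pin down which words $\rho$ can be almost-sure winning. Since the losing absorbing state has payoff $0$ and, if reachable, is reached with positive probability after a finite prefix, $\rho$ must at every step play an action available in \emph{every} state reachable with positive probability at that step. Using that from states $(s,1)$ only $\$$ is available, from states $(s,0)$ only the actions of $\act\cup\set{\#}$ are available, and from $\good$ and $\bad$ only $\#$ is available (sending both to $(s_0,1)$), one checks that the current configuration \emph{type} --- one of $(\cdot,1)$, $(\cdot,0)$, $\set{\good,\bad}$ --- is a deterministic function of the prefix of $\rho$ read so far, and that the only way $\rho$ can complete an $\#\#$ without reaching the losing state is to play the first $\#$ from a type-$(\cdot,0)$ configuration and the second from the resulting type-$\set{\good,\bad}$ configuration, landing the play in $(s_0,1)$. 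If $v$ contains no $\#$, then past $u$ the reward stream is the alternating pattern $1,0,1,0,\dots$, so $\limavg = \tfrac12$ and $\straa$ is not almost-sure winning; hence $v$ contains a $\#$. Then $\rho$ completes infinitely many $\#\#$-pairs, at a deterministic and eventually periodic set of times, which cut $\rho$ into consecutive \emph{blocks}, each of the form $\$\,b_1\,\$\,b_2\cdots\$\,b_j\,\$\,\#\,\#$ with $b_1,\dots,b_j\in\act$ and $j\geq 0$, and each starting with the play deterministically in $(s_0,1)$. I would choose $u$ and $v$ to consist of whole blocks, and let $w^{(1)},\dots,w^{(k)}\in\act^{*}$ (with $k\geq 1$) be the words $b_1\cdots b_j$ of the blocks making up $v$.

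The third step is the probabilistic core, closely paralleling Lemma~\ref{lem:finquant_1}. Every block starts in the fixed state $(s_0,1)$ and plays a deterministic action sequence, so conditioned on reaching a block start its internal evolution uses only fresh randomness, independent of the past; moreover a block with word $w^{(t)}$ reaches $\good$ exactly when the simulated run of $\automaton$ on $w^{(t)}$ ends in $F$ --- an event of probability $\mu_t$ equal to the acceptance probability of $w^{(t)}$ in $\automaton$ --- and a block whose word has length $j$ contributes reward sum $j+2$ if it reaches $\good$ and $j+1$ otherwise, spread over $2j+3$ reward positions. Restricted to each residue class modulo $k$, the block outcomes are therefore i.i.d.\ Bernoulli, so by the Strong Law of Large Numbers~\cite[Theorem~7.1, page~56]{Durrett} the long-run average reward converges with probability $1$ to the constant
\[
c \;=\; \frac{\sum_{t=1}^{k}(j_t+1) + \sum_{t=1}^{k}\mu_t}{\sum_{t=1}^{k}(2 j_t + 3)}, \qquad j_t := |w^{(t)}|,
\]
and, exactly as in Lemma~\ref{lem:finquant_1}, the bounded block length lets one pass from block-boundary times to all times. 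Since $\straa$ is almost-sure winning we must have $c > \tfrac12$, which rearranges to $2\sum_{t=1}^{k}\mu_t > k$; by averaging, some $\mu_{t^{*}} > \tfrac12$, i.e.\ the word $w := w^{(t^{*})}\in\act^{*}$ is accepted by $\automaton$ with probability strictly greater than $\tfrac12$, as required.

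I expect the main obstacle to be the second step: rigorously establishing that avoidance of the losing state forces exactly this block normal form --- in particular that the configuration type, and hence the block-boundary times, are genuinely determined by $\rho$ (which uses that $\good$ and $\bad$ behave identically after $\#$) and that $u,v$ can be taken to be unions of whole blocks --- together with the independence-and-restart structure that legitimises invoking the Strong Law of Large Numbers in the third step.
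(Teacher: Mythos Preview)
Your proposal is correct and follows essentially the same approach as the paper: reduce the pure finite-memory strategy to an eventually periodic word, use avoidance of the losing state to obtain a block normal form, and then apply the Strong Law of Large Numbers to the block averages to conclude that some extracted word must be accepted with probability greater than $\tfrac12$. Your block form $\$\,b_1\,\$\,b_2\cdots\$\,b_j\,\$\,\#\,\#$ with $2j+3$ reward positions is in fact the accurate one (the paper's displayed form omits the final $\$$ before $\#\#$), and your explicit computation of the limit $c$ and the rearrangement $2\sum_t\mu_t>k$ is a clean direct argument where the paper instead argues by contradiction; the only other difference is that the paper applies SLLN to the average over one full period $v'$ (so the variables are genuinely i.i.d.), whereas you apply it per residue class modulo $k$ and then recombine --- both are valid, the paper's route being marginally shorter.
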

\begin{proof}
Assume there exists a pure finite-memory almost-sure winning strategy $\straa$ for the objective $\limavghalf$. 
Observe that as there is only a single observation in the POMDP $\game$ the strategy $\straa$ can be viewed as 
an ultimately periodic infinite word of the form $u\cdot v^{\omega}$, where 
$u,v$ are finite words from $\act'$. 
Note that $v$ must contain the subsequence $\# \#$, as otherwise the $\limavg$ 
would be only $\frac{1}{2}$. 
Similarly, before every letter $a \in \act$ in the words $u,v$, the strategy 
must necessarily play the $\$$ action, as otherwise the loosing absorbing 
state is reached.

In the first step we align the $\#\#$ symbols in $v$. Let us partition the 
word $v$ into two parts $v = y \cdot x$ such that $y$ is the shortest prefix 
ending with $\#\#$. Then the ultimately periodic word $u \cdot y \cdot 
(x\cdot y)^{\omega} = u \cdot v^\omega$ is also a strategy ensuring 
almost-surely $\limavghalf$.
Due to the previous step we consider $u'=u \cdot y$ and $v'= x\cdot y$, and thus have that 
$v'$ is of the form:
	 $$ \$  w_1[1]  \$  w_1[2] \ldots \$ w_1[n_1]  \#\# \$ w_2[1] \$ w_2[2] \ldots \$ w_2[n_2] \# \# \ldots \$ w_m[1]  \$  w_m[2] \ldots \$ w_m[n_m] \# \#$$
	 We extract the set of words $W =  \{w_1,w_2, \ldots, w_m\}$ from $v'$. Assume towards contradiction that all the words in the set $W$ are accepted in the PFA $\automaton$ with probability at most $\frac{1}{2}$. 
As in Lemma~\ref{lem:finquant_1} we define a random variable $X_i$ denoting the average of rewards after reading $v'$. 
It follows that the expected value of $\Exp(X_i) \leq \frac{1}{2}$ for all $i \geq 0$. By using SLLN we obtain that almost-surely the $\limavg$ of $u\cdot v^{\omega}$ is $\Exp(X_i)$, and hence it 
is not possible as $u\cdot v^{\omega}$  is an almost-sure winning strategy for 
the objective $\limavghalf$.
We reach a contradiction to the assumption that all the words in $W$ are accepted with probability at most $\frac{1}{2}$ in $\automaton$. 
Therefore there exists a word $w \in W$ that is accepted in ${\automaton}$ 
with probability strictly greater than $\frac{1}{2}$, which concludes the proof.
\end{proof}

	To complete the reduction we show in the following lemma that pure strategies are sufficient for the POMDPs constructed in our reduction.

\begin{lemma}
Given the POMDP $\game$ of our reduction, if there is a randomized (possibly infinite-memory)
almost-sure winning strategy for the objective $\limavghalf$,
then there exists a pure finite-memory almost-sure winning strategy $\straa'$ for the objective $\limavghalf$.
\end{lemma}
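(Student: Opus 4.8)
The plan is to reduce an arbitrary randomized, possibly infinite‑memory, almost‑sure winning strategy to the hypothesis of Lemma~\ref{lem:finquant_1}, i.e.\ to produce a single word $w\in\act^*$ accepted by $\automaton$ with probability strictly greater than $\frac12$; Lemma~\ref{lem:finquant_1} then immediately yields the required pure finite‑memory strategy $\straa'$.

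First I would pass from a randomized strategy to a deterministic play. Since $\game$ has a single observation, every observation‑based strategy $\straa$ depends only on the sequence of actions played so far, so (by Kuhn's theorem, or simply by unrolling the definition of the measure $\mu^{\straa}$) $\straa$ induces a Borel probability measure $\nu$ on infinite action words $\vec a\in(\act')^{\omega}$ with $\prb^{\straa}(\cdot)=\int \prb^{\vec a}(\cdot)\,d\nu(\vec a)$, where $\prb^{\vec a}$ is the distribution obtained by playing the deterministic word $\vec a$. If $\straa$ is almost‑sure winning then $\int \prb^{\vec a}(\limavghalf)\,d\nu=1$, hence there is a single infinite word $\vec a$ with $\prb^{\vec a}(\limavghalf)=1$.

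Next I would determine the structure of such a winning $\vec a$. The \emph{type} of the current state (a copy $(\cdot,1)$, a copy $(\cdot,0)$, one of $\good,\bad$, or the loosing absorbing state) is a deterministic function of the prefix of $\vec a$, because the probabilistic transitions move only inside $\states\times\{0,1\}$ and never reach the loosing state. As $\prb^{\vec a}(\limavghalf)=1$ forbids ever reaching the loosing state (whose $\limavg$ is $0$) and also forbids looping $\$\,a\,\$\,a\cdots$ forever (whose reward sequence is $1\,0\,1\,0\cdots$, so $\limavg=\tfrac12$), the word $\vec a$ must be an infinite concatenation $\vec a=B_1B_2B_3\cdots$ of finite blocks, where block $B_i$ plays $\$$, interleaves $\$$ with the letters of a word $u_i\in\act^*$, and ends with $\#\#$. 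After each $\#\#$ the run is deterministically back at $(s_0,1)$, so the blocks are probabilistically independent: if $X_i\in\{0,1\}$ indicates that $B_i$ ends in $\good$, the $X_i$ are independent with $\prb(X_i=1)=\mu_i$, where $\mu_i$ is the acceptance probability of $u_i$ in $\automaton$, and the reward sequence inside $B_i$ is the deterministic pattern $1\,0\,1\,0\cdots1\,0$ of length $2(|u_i|+1)$ followed by $X_i$.

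Finally I would show that $\mu_{i_0}>\frac12$ for some $i_0$, so that Lemma~\ref{lem:finquant_1} applied to $w=u_{i_0}$ finishes the proof. Suppose instead $\mu_i\le\frac12$ for all $i$. Writing $c_i=|u_i|+1$ and $C_k=\sum_{i\le k}c_i$, the expected average of the rewards through the end of block $B_k$ is $\frac{C_k+\sum_{i\le k}\mu_i}{2C_k+k}\le\frac12$; since block boundaries form a subsequence of prefix lengths tending to infinity, $\liminf_N \Exp[\text{average of the first }N\text{ rewards}]\le\frac12$, and Fatou's lemma gives $\Exp^{\vec a}[\limavg]\le\frac12$. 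But $\prb^{\vec a}(\limavghalf)=1$ means $\limavg>\frac12$ almost surely, and a bounded random variable that is a.s.\ strictly larger than $\frac12$ has expectation strictly larger than $\frac12$ — a contradiction. The main obstacle is exactly this last point together with the block decomposition: unlike the preceding lemma, $\vec a$ need not be ultimately periodic, so the blocks may have unbounded length; this is why the argument must go through an expectation/Fatou estimate (equivalently a Kolmogorov strong‑law estimate on $\frac1k\sum_{i\le k}(X_i-\mu_i)$) that only needs to push the $\liminf$ of the partial averages down to $\frac12$, rather than through the exact periodic computation used for finite‑memory strategies.
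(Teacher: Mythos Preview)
Your proof is correct and follows essentially the same route as the paper: decompose the play into $\#\#$-delimited blocks, show via a Fatou/expectation argument that some block's underlying PFA word must be accepted with probability strictly greater than $\tfrac12$, and then invoke Lemma~\ref{lem:finquant_1} (equivalently, play that block periodically) to obtain the pure finite-memory strategy. The only presentational difference is that you first derandomise to a single deterministic infinite word $\vec a$ before decomposing into blocks, whereas the paper keeps the randomized strategy and argues over all realisations at once; your extra step makes the block structure deterministic and the Fatou estimate slightly cleaner, but the core argument is the same.
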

\begin{proof}
Let $\straa$ be a randomized (possibly infinite-memory) almost-sure winning strategy for the objective $\limavghalf$.
As there is a single observation in the POMDP $\game$ constructed in the reduction, the strategy does not receive any useful
feedback from the play, i.e., the memory update function $\straa_u$ always receives as one of the parameters the unique observation.
Note that the strategy needs to play the pair $\#\#$ of actions infinitely often with probability $1$, i.e., with probability $1$
the resolving of the probabilities in the strategy $\straa$ leads to an infinite word $\rho  = w_1 \#\# w_2 \#\# \ldots$,
as otherwise the limit-average payoff is at most $\frac{1}{2}$ with positive probability.
From each such run $\rho$ we extract the finite words $w_1,w_2, \ldots$ that occurs in $\rho$,
and then consider the union of all such words as $W$.
We consider two options:
\begin{enumerate}
 \item If there exists a word $v$ in $W$ such that the expected average of rewards after playing this word is strictly greater than $\frac{1}{2}$, then the pure strategy $v^{\omega}$ is also a pure finite-memory almost-sure winning strategy for the objective $\limavghalf$.
        
\item Assume towards contradiction that all the words in $W$ have the expected reward at most $\frac{1}{2}$.
Then with probability $1$ resolving the probabilities in the strategy $\straa$ leads to an infinite word $\wb{w} = w_1 \#\# w_2 \#\# \ldots$, 
where each word $w_i$ belongs to  $W$, that is played on the POMDP $\game$.
Let us define a random variable $X_i$ denoting the average between $i$ and $(i+1)$-th occurrence of $\#\#$.
The expected average $\Exp(X_i)$ is at most $\frac{1}{2}$ for all $i$.
Therefore the expected $\limavg$ of the sequence $\wh{w}$ is at most:

$$ \Exp(\liminf_{n\rightarrow \infty} \frac{1}{n} \sum\limits_{i=0}^{n} X_i).$$

Since $X_i$'s are non-negative measurable functions, by Fatou's lemma~\cite[Theorem~3.5, page 16]{Durrett} 
that shows the integral of limit inferior of a sequence of non-negative 
measurable functions is at most the limit inferior of the integrals of these 
functions, we have the following inequality:

$$ \Exp(\liminf_{n\rightarrow \infty} \frac{1}{n} \sum\limits_{i=0}^{n} X_i) \leq  \liminf_{n \rightarrow \infty} \Exp(\frac{1}{n} \sum\limits_{i=0}^{n} X_i) \leq \frac{1}{2}.$$

Note that since the strategy $\straa$ is almost-sure winning  for the objective $\limavghalf$, 
then the expected value of rewards must be strictly greater than $\frac{1}{2}$. 
Thus we arrive at a contradiction.
Hence there must exist a word in $W$ that that has an expected payoff 
strictly greater than $\frac{1}{2}$ in $\game$.
\end{enumerate}
This concludes the proof.
\end{proof}

\begin{theorem}
The problem whether there exists a finite (or infinite-memory) 
almost-sure winning strategy in a POMDP for the objective $\limavghalf$ is 
undecidable.
\end{theorem}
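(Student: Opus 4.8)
The plan is to assemble the reduction together with the three preceding lemmas into a single equivalence and then invoke the undecidability of the strict emptiness problem for PFA. First I would note that the construction $\game$ from a PFA $\automaton=(\states,\act,\trans,F,s_0)$ is a polynomial-time computable map, so it suffices to show the equivalence: there is a word $w\in\act^*$ accepted by $\automaton$ with probability strictly greater than $\frac12$ if and only if there is a finite-memory (equivalently, by the last of the three lemmas, an arbitrary randomized) almost-sure winning strategy in $\game$ for the objective $\limavghalf$. The forward direction is exactly Lemma~\ref{lem:finquant_1}: from an accepting word $w$ of length $n$ with acceptance probability $\mu>\frac12$ one plays the pure ultimately periodic strategy $(\$\,w[1]\,\$\,w[2]\cdots\$\,w[n]\,\#\,\#)^\omega$, whose block-averages $X_i$ are i.i.d.\ with $\Exp(X_i)=\frac{\mu+n}{2n+1}>\frac12$, and the Strong Law of Large Numbers together with the bounded block length forces the whole prefix-average sequence to converge almost surely to a value $>\frac12$.

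Second, for the reverse direction I would invoke the second preceding lemma: a pure finite-memory almost-sure winning strategy, seen through the single observation as an ultimately periodic word $u\,v^\omega$, must play $\$$ before every $\act$-letter (otherwise the loosing absorbing sink of reward $0$ is reached) and must contain $\#\#$ inside $v$ (otherwise the payoff is exactly $\frac12$); after aligning the $\#\#$ markers and extracting the finitely many sub-words $w_1,\dots,w_m\in\act^*$, the Strong Law of Large Numbers shows that if each $w_i$ were accepted with probability at most $\frac12$ then the payoff would almost surely be at most $\frac12$, a contradiction, so some $w_i$ witnesses strict non-emptiness. Finally, the third preceding lemma removes the restriction to pure finite-memory strategies: with a single observation the controller receives no feedback, so any randomized strategy produces almost surely an infinite word $w_1\#\#w_2\#\#\cdots$; if some sub-word occurring with positive probability has expected block reward $>\frac12$ it already yields a pure finite-memory winning strategy $w^\omega$, and otherwise Fatou's lemma bounds the expected $\limavg$ by $\frac12$, contradicting almost-sure winning. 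Chaining these three implications, the existence of a finite-memory (or infinite-memory) almost-sure winning strategy for $\limavghalf$ in $\game$ is equivalent to a positive answer to the strict emptiness problem for $\automaton$, which is undecidable, and the theorem follows for both memory classes.

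The main obstacle is the probabilistic convergence bookkeeping rather than the combinatorics of the gadget. The definition of $\limavg$ uses $\liminf$, so in the positive direction one must argue that the prefix averages genuinely converge and not merely that a sub-sequence does — this is the \emph{almost-sure convergence inside the periodic blocks} step, which exploits that consecutive anchor points are separated by a constant number of steps, so the deviation they introduce into the running average vanishes as the horizon grows. In the infinite-memory direction the $\liminf$ sits inside an expectation and the inequality can only go one way, which is precisely why Fatou's lemma (and not a reverse inequality) is the appropriate tool; getting the direction of this inequality right, and checking that every ``illegal'' action — in particular skipping a required $\$$ or $\#$ — is genuinely punished by the reward-$0$ absorbing sink, are the points that need care.
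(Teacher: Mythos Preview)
Your proposal is correct and mirrors the paper's approach exactly: the theorem is obtained by chaining the reduction with the three preceding lemmas (word accepted with probability $>\frac12$ $\Rightarrow$ pure finite-memory a.s.\ winning strategy; pure finite-memory a.s.\ winning strategy $\Rightarrow$ such a word; arbitrary randomized strategy $\Rightarrow$ pure finite-memory strategy), and then invoking the undecidability of the strict emptiness problem for PFA. Your identification of the SLLN/Fatou steps and the constant-block-length argument for passing from block averages to full prefix averages matches the paper's reasoning.
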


\section{Infinite-memory strategies with Qualitative Constraint}
In this section we show that the problem of deciding the existence of 
infinite-memory almost-sure winning strategies in POMDPs with $\limavgone$ 
objectives is undecidable. 
We prove this fact by a reduction from the \emph{value~1 problem} in 
PFA, which is undecidable~\cite{GO10}.
The value~$1$ problem given a PFA $\automaton$ asks whether for every 
$\epsilon > 0$ there exists a finite word $w$ such that the word is accepted
in ${\automaton}$ with probability at least $1-\epsilon$ (i.e., the limit of
the acceptance probabilities is~1).

\smallskip\noindent{\bf Reduction.}
Given a PFA $\automaton = (\states, \act, \trans,F,s_0)$, 
we construct a POMDP $\game' = (\states',\act',\trans', \obs', \obsmap',s'_0)$ 
with a reward function $\reward'$, such that $\automaton$ satisfies the value 
$1$ problem iff there exists an infinite-memory almost-sure winning strategy in 
$\game'$ for the objective $\limavgone$. 
Intuitively, the construction adds two additional states  $\good$ and $\bad$. 
We add an edge from every state of the PFA under a new action $\$$, this edge 
leads to the state $\good$ when played in a final state, and to the state 
$\bad$ otherwise. 
In the states $\good$ and $\bad$ we add self-loops under a new action $\#$.
The action $\$$ in the states $\good$ or $\bad$ leads back to the initial 
state. 
An example of the construction is illustrated with Figure~\ref{fig:infconst}. 
All the states belong to a single observation, and we will use Boolean reward
function on states.
The reward for all states except the newly added state $\good$ is~$0$, 
and the reward for the state $\good$ is~$1$.
The formal construction is as follows:

\begin{itemize}
\item $\states' = \states \cup \{\good,\bad \}$,
\item $s'_0 = s_0$,
\item $\act' = \act \cup \{\#,\$\}$,
\item For all $s,s' \in \states$ and $a \in \act$ we have 
$\trans'(s,a)(s') = \trans(s,a)(s')$,
$$\trans'(s,\$)(\good) = 
\begin{cases}
1 & \text{ if $s \in F$} \\
0 & \text{ otherwise}
\end{cases} \hspace{2em} 
\trans'(s,\$)(\bad) = 
\begin{cases}
1 & \text{ if $s \not \in F$} \\
0 & \text{ otherwise}
\end{cases}$$
$$\trans'(\good,\$)(s_0) =  \trans'(\bad,\$)(s_0) = 1,$$
$$\trans'(\good,\#)(\good) = \trans'(\bad,\#)(\bad) = 1$$
\item there is a single observation $\obs = \{o\}$, and all the states 
$s \in \states'$ have $\obsmap(s)=o$.
\end{itemize}

	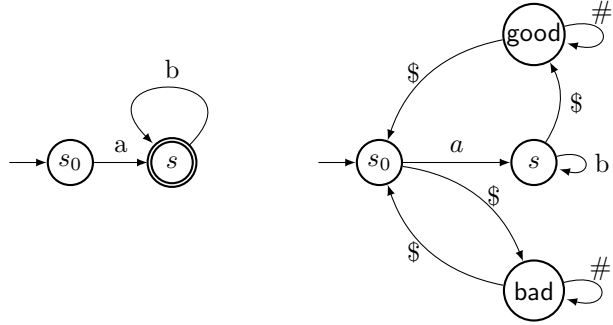
\begin{figure}[ht]
			\begin{center}
			\begin{tikzpicture}[>=latex]
				\tikzstyle{every node}=[font=\small]

				\node[Player1,initial,initial text=]  (s0) {$s_0$};
				\node[Final, right of=s0,xshift=10]  (a) {$s$};
				\draw[->]{
					(s0) edge[] node[auto] {a} (a)
					(a) edge[loop] node[above] {b} (a)
				};

				\node[Player1,initial,initial text=,right of=a, xshift=50] (s01) {$s_0$};
				\node[Player1,right of=s01,xshift=30] (s11) {$s$};
				\node[Player1,inner sep=0.01cm,above of=s00,yshift=20] (good) {$\good$};
				\node[Player1,inner sep=0.08cm,below of=s00,yshift=-20] (bad) {$\bad$};

				\draw[->]{
					(s01) edge[] node[auto] {$a$} (s11)
					(s11) edge[loop right] node[auto] {b} (s11)
					(s11) edge[bend right] node[right] {$\$$} (good)
					(s01) edge[bend left] node[right] {$\$$} (bad)
					(good) edge[loop right] node[above] {$\#$} (good)
					(bad) edge[loop right] node[above] {$\#$} (bad)
					(good) edge[bend right] node[left] {$\$$} (s01)
					(bad) edge[bend left] node[left] {$\$$} (s01)
				};
			\end{tikzpicture}
					\caption{Transformation of the PFA $\automaton$ to a POMDP $\game$}
			\label{fig:infconst}
			\end{center}
	\end{figure}

When an action is played in a state without an outgoing edge for the action, 
the loosing absorbing state is reached; i.e., for action $\#$ in states in $S$
and actions $a \in \act$ for states $\good$ and $\bad$, the next state is
the loosing absorbing state.
The Boolean reward function $\reward':\states' \rightarrow \{0,1\}$ assigns all states  $s \in \states \cup \{\bad\}$ the reward $ \reward'(s) = 0$, and $\reward'(\good) = 1$.

\begin{lemma}
If the PFA $\automaton$ satisfies the value $1$ problem, then there exists an infinite-memory almost-sure winning strategy ensuring the $\limavgone$ objective.
\end{lemma}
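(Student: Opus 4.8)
The plan is to build an explicit \emph{blind} (single-observation) strategy out of a sequence of PFA witness words whose acceptance probabilities approach $1$, and then show by a Borel--Cantelli argument that the induced play spends a fraction of time approaching $1$ in the unique reward-$1$ state $\good$. Concretely: using the value~$1$ hypothesis, first fix a summable sequence $\epsilon_k$ (e.g.\ $\epsilon_k = 2^{-k}$) and, for each $k$, a word $w_k \in \act^*$ accepted by $\automaton$ with probability at least $1-\epsilon_k$; write $\ell_k = |w_k|$. Then fix ``padding lengths'' $N_k$ that dominate the word lengths, say any $N_k \ge (k+1)(\ell_k + \ell_{k+1} + 2)$. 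The strategy $\straa$ is the infinite-memory strategy (well defined since $\game'$ has a single observation) that plays the action word
\[
w_1 \cdot \$ \cdot \#^{N_1} \cdot \$ \cdot w_2 \cdot \$ \cdot \#^{N_2} \cdot \$ \cdots .
\]
Every action along this word is legal, so the losing absorbing state is never reached; and since the closing $\$$ of each block sends the play deterministically back to $s_0$, the ``$k$-th round'' $w_k\cdot\$\cdot\#^{N_k}\cdot\$$ occupies exactly $\ell_k + N_k + 2$ consecutive steps of the play.

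The analysis would proceed as follows. Let $B_k$ be the indicator of the event that, after reading $w_k$ from $s_0$ in round $k$, the current state is not in $F$ (equivalently, the following $\$$ leads to $\bad$ instead of $\good$). By the choice of $w_k$ we have $\prb^{\straa}(B_k=1) \le \epsilon_k$, and since $\sum_k \epsilon_k < \infty$, the (easy half of the) Borel--Cantelli lemma gives that almost surely only finitely many $B_k$ equal $1$; let $K$ be the a.s.\ finite last such index. I would then fix a run in this full-measure event and bound its partial averages $\frac1n \sum_{i=0}^{n-1}\reward'(s_i)$ from below. Up to the end of round $K$ only a fixed finite amount of time $T_K$ and reward has accumulated; for every $k > K$ the round is ``good'', so it contains at least $N_k$ consecutive steps in $\good$ (contributing reward~$1$), its only reward-$0$ steps being the $\ell_k + 1$ steps that read $w_k\$$. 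The running average is lowest at the moment just before $\good$ is entered in round $m$, where it is at least
\[
\frac{\sum_{k=K+1}^{m-1} N_k}{\;T_K + \sum_{k=K+1}^{m-1}(\ell_k + N_k + 2) + \ell_m + 1\;}
\]
(after $\good$ is entered, numerator and denominator both grow by $1$ per step and the ratio is below $1$, so it only increases). By the choice $N_k \ge (k+1)(\ell_k+\ell_{k+1}+2)$ one has $\ell_k + 2 \le N_k/k$ and $\ell_m \le N_{m-1}/m$, whence a routine Cesàro/Kronecker estimate (using $\sum_k N_k = \infty$) gives that $\sum_{k=K+1}^{m-1}(\ell_k+2)$, $\ell_m + 1$ and the constant $T_K$ are all $o\!\big(\sum_{k=K+1}^{m-1} N_k\big)$ as $m \to \infty$, so the displayed lower bound tends to $1$. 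Since all rewards are at most $1$, it follows that $\liminf_n \frac1n\sum_{i=0}^{n}\reward'(s_i) = 1$ on this event, i.e.\ $\prb^{\straa}(\limavgone)=1$, as required.

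The only delicate point — and where I expect to spend the most care — is the within-round bookkeeping: a long witness word $w_m$, or a bad round among the first $K$, temporarily injects a block of reward-$0$ steps, and one must check that the choice of the $N_k$ makes every such transient asymptotically negligible in the limit inferior, \emph{uniformly over the position inside the round}. Monotonicity of the relevant ratio along the $\#^{N_m}$-block reduces this to the single worst-case estimate displayed above. Everything else is routine: note in particular that independence of the $B_k$ is not needed (only $\sum_k \prb^{\straa}(B_k=1)<\infty$), and legality of the action word and the fixed round length are immediate from the construction.
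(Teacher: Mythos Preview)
Your proposal is correct and follows essentially the same approach as the paper: choose witness words with summable acceptance defects, apply the first Borel--Cantelli lemma to conclude that almost surely only finitely many rounds land in $\bad$, and pad each round with enough $\#$'s so that the running average is forced to $1$. The only differences are cosmetic: the paper picks $n_i$ so that $n_i/k_i \ge 1-\tfrac{1}{i}$ for the cumulative length $k_i$, whereas you impose the local condition $N_k \ge (k+1)(\ell_k+\ell_{k+1}+2)$ and reduce the $\liminf$ analysis to the single worst-case time per round; both yield the same conclusion. Your treatment is arguably cleaner in two places --- you include the return-$\$$ explicitly in the action word (the paper's displayed word omits it, though the converse lemma and the construction make clear it is intended), and your Borel--Cantelli step applies directly to the per-round events $B_k$ rather than via the paper's bound on ``$k{+}1$ visits given $k$ visits''.
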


\begin{proof}
We construct an almost-sure winning strategy $\straa$ that we describe as 
an infinite word. 
As $\automaton$ satisfies the value $1$  problem, there exists a sequence of 
finite words $(w_i)_{i \geq 1}$, such that each $w_i$ is accepted in 
$\automaton$ with probability at least $1-\frac{1}{2^{i+1}}$. 
We construct an infinite word $w_1 \cdot \$ \cdot \#^{n_1} \cdot w_2\cdot  \$ \cdot \#^{n_2} \cdots$, 
where each $n_i \in \mathbb{N}$ is a natural number that satisfies the following 
condition: let $k_i=|w_{i+1}\cdot \$| + \sum_{j=1}^{i} (|w_j \cdot \$|+n_j)$ be the length 
of the word sequence before $\#^{n_{i+1}}$, then we must have 
$\frac{n_i}{k_i} \geq 1-\frac{1}{i}$. 
In other words, the length $n_i$ is long enough such that even if the whole
sequence of length $\sum_{j=1}^{i} (|w_j \cdot \$|+n_j)$ before $n_i$ and the 
sequence of length $|w_{i+1} \cdot \$| $ is zero's, the sequence $n_i$ one's ensures
the average is at least $1-\frac{1}{i}$. 
Intuitively the condition ensures that even if the rewards are always $0$ for 
the prefix up to $w_i\cdot \$$ of a run, even then a single visit to the $\good$ 
state after $w_i \cdot \$$ can ensure the average to be greater $1-\frac{1}{i}$ from the 
end of $\#^{n_i}$ up to the point $w_{i+1} \cdot \$$ ends.
We first argue that if the state $\bad$ is visited finitely often with 
probability~1, then with probability~1 the objective $\limavgone$ is satisfied.

\smallskip\noindent{\em Almost-sure winning if $\bad$ only finitely often.}
Observe that if the state $\bad$ appears only finitely often, then from some 
$j \geq 0$, for all $\ell \geq j$, 
the state visited after $w_1 \cdot \$ \cdot \#^{n_1} \cdot w_2\cdot  \$ \cdot \#^{n_2} \cdots w_{\ell} \cdot \$$
is the state $\good$, and then the sequence $\#^{n_{\ell}}$ ensures that the 
payoff from the end of $\#^{n_{\ell}}$ up to the end of $w_{\ell+1} \cdot \$$ is at 
least $1-\frac{1}{\ell}$.
If the state $\good$ is visited, then the sequence of $\#$ gives reward~1, and
thus after a visit to the state $\good$, the average only increases in the 
sequence of $\#$'s. 
Hence it follows that lim-inf average of the rewards is at least 
$1-\frac{1}{i}$, for all $i\geq 0$ (i.e., for every $i\geq 0$, there exists a 
point in the path such that the average of the rewards never falls below 
$1-\frac{1}{i}$).
Since this holds for all $i \geq 0$, it follows that $\limavgone$ is ensured 
with probability~1, (provided with probability~1 the state $\bad$ appears only 
finitely often).

\smallskip\noindent{\em The state $\bad$ only finitely often.}
We now need to show that the state $\bad$ is visited infinitely often with 
probability~0 (i.e., only finitely often with probability~1). 
We first upper bound the probability $u_{k+1}$ to visit the state $\bad$ at 
least $k+1$ times, given $k$ visits to state $\bad$.
The probability $u_{k+1}$ is at most 
$ \frac{1}{2^{k+1}}( 1 + \frac{1}{2} + \frac{1}{4} + \cdots) $.
The above bound for $u_{k+1}$ is obtained as follows: following the visit to 
$\bad$ for $k$ times, the words $w_j$, for $j\geq k$ are played;
and hence the probability to reach $\bad$ decreases by $\frac{1}{2}$ every 
time the next word is played; and after $k$ visits the probability is always
smaller than $\frac{1}{2^{k+1}}$.
Hence the probability to visit $\bad$ at least $k+1$ times, given $k$ visits, 
is at most the sum above, which is $\frac{1}{2^{k}}$.
Let $\cale_k$ denote the event that $\bad$ is visited at least $k+1$ times 
given $k$ visits to $\bad$.
Then we have 
$\sum_{k\geq 0} \prb(\cale_k) \leq \sum_{k\geq 1} \frac{1}{2^{k}} < \infty$.
By Borel-Cantelli lemma~\cite[Theorem~6.1, page 47]{Durrett} we know that if 
the sum of probabilities is finite (i.e., $\sum_{k\geq 0} \prb(\cale_k) < \infty$), then the probability 
that infinitely many of them occur is~0 
(i.e., $\prb(\lim \sup_{k\to\infty}\cale_k)=0$).
Hence the probability that $\bad$ is visited infinitely often is~0, i.e., with 
probability~1 $\bad$ is visited finitely often.

It follows that the strategy $\straa$ that plays the infinite word 
$w_1 \cdot \$ \cdot \#^{n_1} \cdot w_2\cdot  \$ \cdot \#^{n_2} \cdots$
is an almost-sure winning strategy for the objective $\limavgone$.
\end{proof}

\begin{lemma}
If there exists an infinite-memory almost-sure winning strategy for the objective $\limavgone$, then the PFA $\automaton$ 
satisfies the value $1$ problem.
\end{lemma}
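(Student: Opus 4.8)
The plan is to prove the contrapositive: assuming $\automaton$ does not have value~$1$, I will show that \emph{every} strategy in $\game'$ fails to be almost-sure winning for $\limavgone$. So fix $\epsilon>0$ such that $\prb_{\automaton}(w)\le 1-\epsilon$ for every word $w\in\act^*$ (this includes the empty word, so in particular $s_0\notin F$), and let $\straa$ be an arbitrary strategy. Since $\game'$ has a single observation, $\straa$ is blind and is equivalent to a probability distribution over infinite action words $\rho\in(\act')^\omega$; conditioned on $\rho$, the play of $\game'$ is driven only by the POMDP transitions. The first step is to throw away degenerate $\rho$: if $\rho$ plays an unavailable action with positive probability the losing absorbing state is reached and $\limavgone$ already fails with positive probability, so I may assume that $\rho$ is \emph{legal} almost surely, which forces the block shape $\rho=v_1\,\$\,\#^{k_1}\,\$\,v_2\,\$\,\#^{k_2}\,\$\cdots$ with $v_i\in\act^*$ and $k_i\ge 0$ (possibly ending in a $\#^\omega$ tail). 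Call the occurrences of $\$$ played from a state of $\states$ the \emph{decision points}; the point of the construction is that at the $i$-th decision point the play moves to $\good$ with probability $\prb_{\automaton}(v_i)\le 1-\epsilon$ and to $\bad$ otherwise, and since every $\$$ out of $\good$ or $\bad$ returns deterministically to $s_0$, the decision points are mutually independent conditioned on $\rho$.

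Next I would dispose of the case where $\rho$ contains only finitely many $\$$, an event I call $F_{\mathrm{fin}}$. On $F_{\mathrm{fin}}$ the play ends with a $\$$-free suffix which, to avoid the losing absorbing state, is either $\#^\omega$ entered from $\good$ or $\bad$, or a suffix in $\act^\omega$ that stays inside $\states$; the last two possibilities make $\limavgone$ fail outright, and in the first the play wins iff the final decision point (with segment $v$) went to $\good$, i.e.\ with conditional probability $\prb_{\automaton}(v)\le 1-\epsilon$. Hence $\prb^{\straa}(\limavgone)\le 1-\epsilon\cdot\prb(F_{\mathrm{fin}})$; if $\prb(F_{\mathrm{fin}})>0$ we are already done, so assume $\prb(F_{\mathrm{fin}})=0$, meaning that almost surely $\rho$ has infinitely many $\$$, hence infinitely many decision points, grouped into successive \emph{rounds}: round $i$ is the block $v_i\,\$\,\#^{k_i}\,\$$, which starts and ends at $s_0$.

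The heart of the argument is a weighted counting estimate over the rounds. Set $a_i=k_i+1\ge 1$, $b_i=|v_i|+1\ge 1$, $L_i=a_i+b_i$ (all determined by $\rho$), let $B_i\in\{0,1\}$ indicate that round $i$ lands in $\bad$, and write $q_i=\prb(B_i=1\mid\rho)$, so the $B_i$ are independent given $\rho$ with $q_i\ge\epsilon$. In round $i$ the first $b_i$ transitions carry reward $0$ and the next $a_i$ transitions all carry reward $1-B_i$; thus after $R$ rounds the number of reward-$0$ steps is $D_R=\sum_{i\le R}(b_i+B_i a_i)$ within a total length $\sum_{i\le R}L_i$. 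An elementary case split according to whether $\sum_{i\le R}b_i$ is at least or below $A_R:=\sum_{i\le R}a_i$ gives the pointwise bound $D_R\big/\sum_{i\le R}L_i\ \ge\ \tfrac12\,\big(\sum_{i\le R}B_i a_i\big)\big/A_R$. Since $A_R\to\infty$ and $\Exp\!\big[\sum_{i\le R}B_i a_i/A_R\mid\rho\big]=\sum_{i\le R}q_i a_i/A_R\ge\epsilon$ for all $R$, the reverse Fatou inequality (the ratios all lie in $[0,1]$) yields $\Exp\!\big[\limsup_R\sum_{i\le R}B_i a_i/A_R\mid\rho\big]\ge\epsilon$, so this $\limsup$ is $\ge\epsilon/2$ with positive conditional probability; as the event is tail with respect to the independent sequence $(B_i)$, Kolmogorov's $0$--$1$ law upgrades it to conditional probability~$1$, and then to probability~$1$ after integrating over $\rho$. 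Therefore $\limsup_R D_R\big/\sum_{i\le R}L_i\ge\epsilon/4$ almost surely, so at the round boundaries the running average of rewards drops below $1-\epsilon/4$ infinitely often, giving $\limavg(\rho)\le 1-\epsilon/4<1$ almost surely and contradicting $\prb^{\straa}(\limavgone)=1$.

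The step I expect to be the real obstacle is precisely this last estimate, and in particular making it robust against the adversary's freedom: because the strategy is blind it can inflate the $\#$-blocks $k_i$ arbitrarily, so a plain ``positive density of $\bad$-rounds'' count does not suffice --- one must weight each round by $a_i=k_i+1$ and combine conditional independence with a $0$--$1$ law to conclude that a positive fraction of the total $\#$-weight is necessarily spent in $\bad$. The rest --- the block structure of legal words, the finite-$\$$ case, and translating the $\bad$-probabilities back into acceptance probabilities $\prb_{\automaton}(v_i)$ --- is routine bookkeeping.
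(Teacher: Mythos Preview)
Your argument is correct, but it takes a substantially different route from the paper's proof. The paper first reduces to pure strategies (an infinite word), observes that removing the reward-$0$ segments corresponding to the $w_i$ can only increase the limit-average, and then applies Fatou's lemma directly to the remaining reward sequence $X_i$ (each $X_i=1$ with probability at most $1-c$) to obtain $\Exp^{\straa}[\limavg]\le\liminf_n\Exp[\tfrac{1}{n}\sum_i X_i]\le 1-c$; since almost-sure winning would force $\Exp^{\straa}[\limavg]=1$, this is already a contradiction. So the paper only needs an \emph{expectation} bound and gets away without any $0$--$1$ law or weighted counting.

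Your approach keeps the randomized strategy, handles the finitely-many-$\$$ case explicitly, and then proves the stronger pointwise statement $\limavg\le 1-\epsilon/4$ almost surely via the weighted ratio $\sum B_i a_i/A_R$, reverse Fatou, and Kolmogorov's $0$--$1$ law (the tail-event verification using $A_R\to\infty$ is correct). What this buys you is an explicit almost-sure quantitative upper bound on the $\liminf$ average rather than just a bound on its expectation; what it costs is considerably more machinery. The paper's trick of deleting the reward-$0$ blocks before taking expectations is what lets it avoid your case split on $\sum b_i$ versus $A_R$ and the whole $0$--$1$ argument.
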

\begin{proof}
We prove the converse. Consider that the PFA $\automaton$ does not satisfy the 
value $1$ problem, i.e., there exists a constant $c > 0$ such that for all 
$w \in \act^{*}$ we have that the probability that $w$ is accepted in 
${\automaton}$ is at most $1-c<1$.
We will show that there is no almost-sure winning strategy.
Assume towards contradiction that there exists an infinite-memory almost-sure 
winning strategy $\straa$ in the POMDP $\game'$. 
In POMDPs infinite-memory pure strategies are as powerful as infinite-memory
randomized strategies\footnote{Since in POMDPs there is only the controller 
making choices, a randomized strategy can be viewed as a distribution over 
pure strategies, and if there is a randomized strategy to achieve an objective, 
then there must be a pure one. This is the key intuition that in POMDPs randomization is not more powerful; for a formal proof, see~\cite{CDGH10}.}.
Therefore we may assume that the almost-sure winning strategy is given as an infinite word $\wb{w}$. 
Note that the infinite word $\wb{w}$ must necessarily contain infinitely many $\$$ and can be written as 
$\wb{w}= w_1 \cdot \$ \cdot \#^{n_1} \cdot \$ \cdot w_2 \cdot \$ \cdot \#^{n_2} \cdot \$ \cdots$. 
Moreover, there must be infinitely many $i>0$ such that the number $n_i$ is  positive (since only such 
segments yield reward~1).

Consider the Markov chain $\game' \restr_{\straa}$ and the rewards on the states of the chain. By the definition of the reward function $\reward'$ all the rewards that correspond to words $w_i$ for $i>0$ are equal to $0$. Rewards corresponding to the word segment $\#^{n_i}$ for $i>0$ are with probability at most $1-c$ equal to $1$ and $0$ otherwise. Let us for the moment remove the $0$ rewards corresponding to words $w_i$ for all $i>0$ (removing the reward~0 segments 
only increases the limit-average payoff), and consider only rewards corresponding to the segments $\#^{n_i}$ for all $i>0$, we will refer to this infinite sequence of rewards as $\wh{w}$. 
Let $X_i$ denote the random variable corresponding to the value of the $i^{th}$ reward in the sequence $\wh{w}$. 
Then we have that $X_i = 1$ with probability at most $1-c$ and $0$ otherwise. The expected $\limavg$ of the sequence $\wh{w}$ is then at most:
$$\Exp(\liminf\limits_{n\rightarrow \infty} \frac{1}{n} \sum_{i=0}^{n} X_i).$$
Since $X_i$'s are non-negative measurable function, by Fatou's lemma~\cite[Theorem~3.5, page 16]{Durrett} 
that shows the integral of limit inferior of a sequence of non-negative 
measurable functions is at most the limit inferior of the integrals of these 
functions, we have the following inequality:
$$\Exp(\liminf\limits_{n\rightarrow \infty} \frac{1}{n} \sum_{i=0}^{n} X_i) 
\leq \liminf\limits_{n \rightarrow \infty} \Exp(\frac{1}{n} \sum_{i=0}^{n} X_i) \leq 1-c.$$

If we put back the rewards $0$ from words $w_i$ for all $i>0$, then the 
expected value can only decrease.
It follows that $\Exp^{\straa}(\limavg) \leq 1-c$.
Note that if the strategy $\straa$ was almost-sure winning for the objective 
$\limavgone$ (i.e., $\prb^{\straa}(\limavgone)=1$), 
then the expectation of the $\limavg$ payoff would also be $1$ 
(i.e., $\Exp^{\straa}(\limavg) =1$). 
Therefore we have reached a contradiction to the fact that the strategy 
$\straa$ is almost-sure winning, and the result follows.
\end{proof}

\begin{theorem}
The problem whether there exists an infinite-memory almost-sure winning strategy in a POMDP with the objective $\limavgone$ is undecidable.
\end{theorem}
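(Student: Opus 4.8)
The plan is to combine the reduction set up above with the undecidability of the value~1 problem for PFA~\cite{GO10}. Given a PFA $\automaton$, the transformation that produces the POMDP $\game'$ (adding the two states $\good$ and $\bad$, the fresh action $\$$ routing a final state to $\good$ and a non-final state to $\bad$, the fresh action $\#$ with self-loops on $\good$ and $\bad$, the edges $\$$ from $\good,\bad$ back to $s_0$, and the Boolean reward giving reward~$1$ only to $\good$) is plainly computable. Hence it suffices to establish the equivalence: $\automaton$ has value~1 iff $\game'$ admits an infinite-memory almost-sure winning strategy for the objective $\limavgone$. These are precisely the two lemmas preceding the theorem, so the theorem is immediate once they are in hand; the substance lies in the two lemmas, which I would prove as follows.

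For the forward direction (value~1 implies a winning strategy), I would fix a sequence $(w_i)_{i\geq 1}$ of words accepted with probability at least $1-2^{-(i+1)}$ and play the infinite word $w_1\,\$\,\#^{n_1}\,w_2\,\$\,\#^{n_2}\cdots$, choosing each $n_i$ large enough that a single visit to $\good$ immediately after reading $w_i\,\$$ pushes the running average above $1-\tfrac1i$ even if every earlier reward, and the next $|w_{i+1}\,\$|$ rewards, are~$0$. Two facts must then be proved: (a) conditioned on $\bad$ being visited only finitely often, after some point every block $w_\ell\,\$$ lands in $\good$, so the subsequent $\#^{n_\ell}$ segment keeps the $\liminf$ average $\geq 1-\tfrac1\ell$, whence the payoff is~$1$ almost surely; and (b) $\bad$ is visited only finitely often with probability~1. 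For (b) I would use the Borel--Cantelli lemma~\cite[Theorem~6.1]{Durrett}: after $k$ visits to $\bad$ the word $w_j$ played next has $j\geq k$, so the probability of a further visit is at most $2^{-(k+1)}(1+\tfrac12+\cdots)=2^{-k}$, and these are summable.

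For the converse I would argue the contrapositive. If $\automaton$ fails value~1, fix $c>0$ with every word accepted with probability at most $1-c$. Since in POMDPs a randomized strategy is a mixture of pure strategies, so that a pure almost-sure winner exists whenever any winner does (cf.~\cite{CDGH10}), assume $\straa$ is a pure infinite word, necessarily of the form $w_1\,\$\,\#^{n_1}\,\$\,w_2\,\$\,\#^{n_2}\,\$\cdots$ with infinitely many positive $n_i$. Deleting the reward-$0$ segments contributed by the $w_i$'s only increases the limit-average, so it suffices to bound the limit-average over the $\#$-segments; the $i$-th such segment contributes~$1$ with probability at most $1-c$, so with $X_i$ its indicator, Fatou's lemma~\cite[Theorem~3.5]{Durrett} gives $\Exp^{\straa}(\limavg)\leq \Exp\bigl(\liminf_n \tfrac1n\sum_{i\le n}X_i\bigr)\leq \liminf_n \Exp\bigl(\tfrac1n\sum_{i\le n}X_i\bigr)\leq 1-c<1$, contradicting $\prb^{\straa}(\limavgone)=1$.

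I expect the main obstacle to be the forward direction: the $n_i$'s must be coordinated with the (a priori unknown) lengths of the \emph{later} words $w_{i+1}\,\$$, and one must verify that once a run enters $\good$ the average stays above the target through the padding before the next potential dip, i.e.\ that the padding $\#^{n_i}$ dominates both the past and the upcoming $w_{i+1}\,\$$ block simultaneously; and separately one must be careful that the Borel--Cantelli estimate on revisits to $\bad$ is genuinely valid because the conditional revisit probabilities shrink geometrically. The SLLN/Fatou manipulations in the converse are routine by comparison.
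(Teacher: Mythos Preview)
Your proposal is correct and follows essentially the same approach as the paper: the same reduction from the value~1 problem with the same $\good/\bad$ construction, the same forward-direction strategy $w_1\,\$\,\#^{n_1}\,w_2\,\$\,\#^{n_2}\cdots$ with $n_i$ chosen to dominate both the past and the upcoming $|w_{i+1}\,\$|$ block and Borel--Cantelli to show $\bad$ is visited finitely often almost surely, and the same contrapositive for the converse using purification of strategies and Fatou's lemma to bound $\Exp^{\straa}(\limavg)\leq 1-c$. Even the specific probability bounds ($1-2^{-(i+1)}$ for the $w_i$'s, the geometric sum yielding $2^{-k}$ for the revisit probabilities) and the citations (\cite{GO10}, \cite{CDGH10}, \cite{Durrett}) coincide with the paper's proof.
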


\section{Conclusion}
We studied POMDPs with limit-average objectives under probabilistic semantics. 
Since for general probabilistic semantics, the problems are undecidable even 
for PFA, we focus on the very important special case of almost-sure winning. 
For almost-sure winning with qualitative constraint, we show that belief-based 
strategies are not sufficient for finite-memory strategies, and establish 
EXPTIME-complete complexity for the existence of finite-memory strategies.
Given our decidability result, the next natural questions are whether 
the result can be extended to infinite-memory strategies, or to quantitative 
path constraint. 
We show that both these problems are undecidable, and thus establish the 
precise decidability frontier with optimal complexity.
Also observe that contrary to other classical results for POMDPs where both the
finite-memory and infinite-memory problems are undecidable, for almost-sure 
winning with qualitative constraint, we show that the finite-memory problem is 
decidable (EXPTIME-complete), but the infinite-memory problem is undecidable.

\bibliographystyle{plain}
\bibliography{diss}

\end{document}